\documentclass[11 pt]{article}
\usepackage{style}
\usepackage{multirow}
\usepackage{makecell}
\usepackage{booktabs}
\usepackage{tabularx}
\usepackage{subcaption}
\DeclareMathOperator*{\argmin}{arg\,min}

\title{\bfseries Natural Hypergradient Descent: Algorithm Design, Convergence Analysis, and Parallel Implementation}

\author{
Deyi Kong$^{1}$ \quad
Zaiwei Chen$^{2}$ \quad
Shuzhong Zhang$^{1}$ \quad
Shancong Mou$^{1}$\\[0.3em]
{\small $^{1}$Department of Industrial and Systems Engineering, University of Minnesota}\\
{\small $^{2}$School of Industrial Engineering, Purdue University}\\[0.3em]
{\small \texttt{kong0280@umn.edu, chen5252@purdue.edu, zhangs@umn.edu, mou00006@umn.edu}}
}

\date{}
\begin{document}
\maketitle
\begin{abstract}
In this work, we propose \emph{Natural Hypergradient Descent} (NHGD), a new method for solving bilevel optimization problems. To address the computational bottleneck in hypergradient estimation—namely, the need to compute or approximate Hessian inverse—we exploit the statistical structure of the inner optimization problem and use the empirical Fisher information matrix as an asymptotically consistent surrogate for the Hessian. This design enables a parallel \emph{optimize-and-approximate} framework in which the Hessian-inverse approximation is updated \emph{synchronously} with the stochastic inner optimization, reusing gradient information at negligible additional cost. Our main theoretical contribution establishes high-probability error bounds and sample complexity guarantees for NHGD that match those of state-of-the-art optimize-then-approximate methods, while significantly reducing computational time overhead. Empirical evaluations on representative bilevel learning tasks further demonstrate the practical advantages of NHGD, highlighting its scalability and effectiveness in large-scale machine learning settings.
\end{abstract}

\section{Introduction}
Bilevel optimization is an important framework in modern machine learning, with applications including hyperparameter tuning~\cite{bae2020delta,franceschi2018bilevel}, hyper-data cleaning~\cite{franceschi2017forward}, 
synthetic data generation and augmentation~\cite{mounsaveng2021learning,mou2025synth4seg},
meta-learning~\cite{mishchenko2023convergence}, reinforcement learning~\cite{prakash2025bi}, and physics-informed learning~\cite{hao2023bi}. A general bilevel optimization problem can be formulated as
\begin{align*}
    \min_{v\in\mathbb{R}^{d_1}} \; \Phi(v):=f(v,\theta^*(v)),  
   \quad\text{s.t.}\;\; &\theta^*(v) \in \argmin_{\theta\in\Theta} \ell(v,\theta),
\end{align*}
where $f:\mathbb{R}^{d_1}\times\mathbb{R}^{d_2}\to\mathbb{R}$ is the outer-level objective, 
$\ell:\mathbb{R}^{d_1}\times\mathbb{R}^{d_2}\to\mathbb{R}$ is the inner-level objective, and 
$\Theta := \{\theta \in \mathbb{R}^{d_2} \mid \|\theta\| \le R\}$.

\textit{Hypergradient descent}~\cite{ghadimi2018approximation} is widely used for solving bilevel optimization problems. 
Assuming the inner-level problem admits a unique minimizer $\theta^*(v)$, the hypergradient is given by
\begin{align}\label{eq:gradient-derivation1}
    \nabla \Phi(v) = \nabla_v f(v, \theta^*(v)) + \nabla \theta^*(v)^\top \nabla_\theta f(v, \theta^*(v)).
\end{align}
Under mild regularity conditions, the implicit function theorem~\cite{krantz2002implicit} yields
\begin{align}\label{eq:implicit_function_theorem}
    \nabla_{\theta,v}^2 \ell(v, \theta^*(v)) + H(\theta^*(v)) \nabla \theta^*(v) = 0,
\end{align}
where $H(\theta^*(v)) := \nabla^2_{\theta,\theta} \ell(v,\theta^*(v))$ denotes the (invertible) Hessian of the inner objective. 
Eq.~\eqref{eq:gradient-derivation1} and Eq.~\eqref{eq:implicit_function_theorem} yield a closed-form expression for the hypergradient:
\begin{align*}
    \nabla \Phi(v) &= \nabla_v f(v,\theta^*(v)) - \left(\nabla_{\theta,v}^2 \ell (v,\theta^*(v))\right)^\top 
    H(\theta^*(v))^{-1} \nabla_\theta f(v,\theta^*(v)).
 \end{align*}

Directly evaluating the Hessian inverse $H(\theta^*(v))^{-1}$ is prohibitively expensive for large-scale inner problems. 
As a result, most existing hypergradient methods follow an \emph{optimize-then-approximate} paradigm: 
the inner problem is first (approximately) solved to near-optimality, after which the Hessian inverse is approximated \emph{post hoc} using numerical or algorithmic techniques such as 
Neumann series approximation~\cite{ghadimi2018approximation,lorraine2020optimizing,ji2021bilevel}, 
conjugate gradient method~\cite{pedregosa2016hyperparameter,rajeswaran2019meta,yang2023accelerating}, 
quadratic solver~\cite{arbel2021amortized}, or fixed-point iterations~\cite{grazzi2020iteration}. 
However, these post hoc Hessian inverse approximations is inherently sequential and can remain computationally expensive, even when using efficient solvers. This raises a natural question:
\begin{center}
\emph{Can we do better by exploiting additional structure in inner-level optimization problems?}
\end{center}

\begin{figure*}[!t]
    \centering
    \begin{tikzpicture}
        \draw[rounded corners, fill=gray!20] (-1.7,0.8) rectangle (4.6,5);
        \node[align=center] at (1.5,5.4) {Device 1: Inner-level SGD};
        \draw[rounded corners, fill=blue!20] (6.4,0.8) rectangle (13.0,5);
        \node[align=center] at (9.5,5.4) {Device 2: Hypergradient Approximation};
        
        \draw[rounded corners, fill=yellow!20] (-1.5,4) rectangle (4.4,4.8) node[pos=.5] {$\theta_k^{0} = \theta_{k-1}^{T}, \textcolor{red}{v_k =v_{k+1}}$};

        \draw[rounded corners, fill=yellow!20] (-1.5,2.8) rectangle (4.4,3.6) node[pos=.5] {$\theta_k^{t+1} = \theta_k^t - \eta_t \nabla_\theta  l(\textcolor{red}{v_k},\theta_k^t,\xi_k^t)$};

        \draw[->, thick] (1.45,4) -- (1.45,3.6);
        
        \draw[rounded corners, fill=green!20] (6.6,2.6) rectangle (12.8,4.6) node[pos=.5, align=center] {Rank-1 update of EFIM inverse\\$A_k^{t+1} \leftarrow$ update$(A_k^t, \nabla_\theta  l_t)$\\Update of cross-derivatives\\$L_k^{t+1} \leftarrow$ update$(L_k^{t}, \nabla_{\theta,v}^2l_t)$};

        \draw[->, thick, dashed] (4.6,3.2) -- (6.4,3.2) node[midway, above] {$\nabla_\theta l_t$} node[midway, below] {$\nabla_{\theta,v}^2l_t$};
        
        \draw[rounded corners, fill=yellow!20] (-1.5,1) rectangle (4.4,1.9) node[pos=.5] {$\theta_k^T = \theta_k^{T-1} - \eta_{T-1} \nabla_\theta l(\textcolor{red}{v_k},\theta_k^{T-1},\xi_k^{T-1})$};
        \draw[->, thick, dashed] (1.45,2.8) -- (1.45,1.9) node[midway, right] {$t=0,...,T-1$};

        \draw[rounded corners, fill=orange!20] (6.7,1) rectangle (12.5,2) node[pos=.5, align=center] {$\widehat{\nabla} \Phi(v_k) = \nabla_v f - (L_k^T)^\top A_k^T \nabla_\theta f$\\$\textcolor{red}{v_{k+1}} = \textcolor{red}{v_k} - \alpha \widehat{\nabla} \Phi(v_k)$};

        \draw[->, thick, dashed] (4.6,1.4) -- (6.4,1.4) node[midway, above] {$\nabla_v f$ } node[midway, below] {$\nabla_\theta f$};

        \draw[->, thick, dashed] (9.6,2.6) -- (9.6,2) node[midway, right] {$t=0,...,T-1$};

        \draw[->, thick, red] 
            (12.5,1.6) -- (12.9,1.6) -- 
            (12.9,4.7) -- 
            (4.6,4.7) 
            node[pos=0.86, above, xshift=-6pt, yshift=0pt, red]
            {update}
            node[pos=0.86, below, xshift=-6pt, yshift=0pt, red]
            {$\textcolor{red}{v_{k+1}}$};
    \end{tikzpicture}
    \caption{Overview of NHGD. The inner problem is solved using SGD on Device 1, while gradient information is sent to Device 2 for iterative rank-one updates of the EFIM inverse $A_t$ and the cross-derivatives $L_t$. After the inner loop, Device~1 sends gradient information to Device~2, which approximates the hypergradient and updates the outer variable. The updated $v_{k+1}$ is then returned to Device~1 to resume the inner optimization. This design enables synchronous hypergradient estimation alongside inner-loop optimization.}
    \label{fig:nhgd_overview}
\end{figure*}
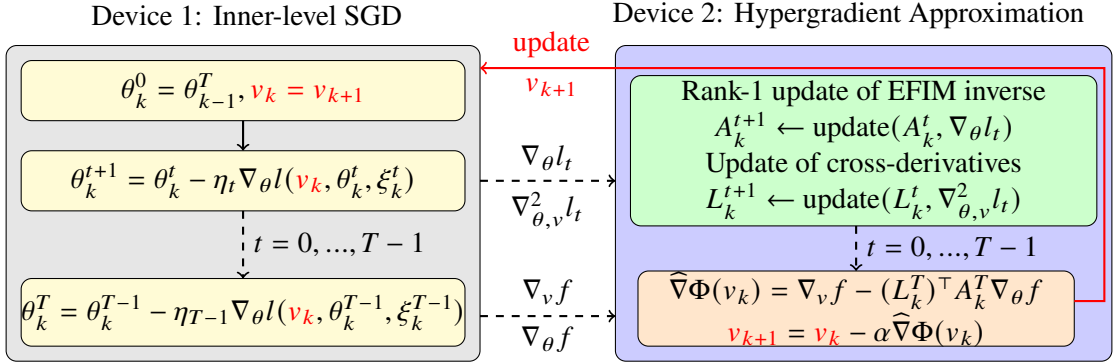

In this work, we provide an affirmative answer by focusing on an important class of bilevel optimization problems in which the inner objective corresponds to Kullback--Leibler (KL) divergence minimization—a ubiquitous setting in machine learning~\cite{kingma2013auto, murphy2022probabilistic}, with the following loss function:
\begin{align*}
    \ell(v,\theta) := \mathbb{E}_{\xi \sim q(\cdot)}[l(v,\theta,\xi)], 
    \qquad l(v,\theta,\xi) := -\log p(\xi;v,\theta),
\end{align*}
where $\xi$ is drawn from the data distribution $q(\cdot)$ and $p(\cdot;v,\theta)$ denotes the model distribution parameterized by $(v,\theta)$.

Under this setting, we adopt a fundamentally different perspective by exploiting the \emph{statistical structure} of the inner optimization problem.
To this end, we recall two key quantities that characterize the statistical structure of inner-level objective: the Fisher information matrix (FIM) $I(\theta) := \mathbb{E}_{\xi \sim q(\cdot)}\!\left[\nabla_{\theta} l(v,\theta,\xi)\,\nabla_{\theta} l(v,\theta,\xi)^\top\right]$ and the empirical Fisher information matrix (EFIM) $I_t:=\frac1t \sum_{i=0}^{t-1}\nabla_{\theta} l(v,\theta,\xi_i)\,\nabla_{\theta} l(v,\theta,\xi_i)^\top$.
Rather than relying on numerical or algorithmic approximations, we approximate the Hessian inverse asymptotically using the EFIM inverse. Crucially, the EFIM can be \emph{synchronously} updated alongside the stochastic inner-level optimization by reusing stochastic gradients at each iteration. 
This enables a parallel \emph{optimize-and-approximate} paradigm that fundamentally differs from existing optimize-then-approximate methods, and avoids the computational time overhead associated with post hoc Hessian inverse estimation.

In single-level optimization, using the FIM inverse as a surrogate for the Hessian inverse naturally leads to \emph{Natural Gradient Descent}~\cite{amari1998natural}. 
Despite its success, this idea has been largely unexplored in bilevel optimization. 
We term our approach \emph{Natural Hypergradient Descent} (NHGD; see Figure~\ref{fig:nhgd_overview}), and show that this approximation is in fact even \emph{more natural} in the bilevel setting, leading to favorable theoretical properties and significant practical advantages.

\paragraph{Contributions.}
Our contributions are as follows: 

1. \textit{Practically}, we propose a novel perspective for approximating the Hessian inverse in bilevel optimization using the EFIM inverse. 
This perspective offers several practical benefits:
    (1) it avoids explicit Hessian construction; 
    (2) it reuses stochastic gradients from the inner-level stochastic gradient descent (SGD) at negligible additional cost;
    (3) it enables parallel hypergradient computation during inner-level optimization, eliminating post hoc Hessian inverse approximation overhead;
    (4) it admits further acceleration via K-FAC~\cite{martens2015optimizing} for large-scale deep learning problems.  
    
2.\textit{ Theoretically}, we establish high-probability convergence guarantees for NHGD: 
    (1) a high-probability sample complexity bound for the Hessian inverse approximation (Theorem~\ref{theorem:hessian_inv_approx}); and  
    (2) a high-probability finite-sample convergence rate for the outer-level objective to reach an $\epsilon$-stationary point (Theorem~\ref{theorem:outer_convergence}).

Overall, the NHGD matches the sample complexity of state-of-the-art \emph{optimize-then-approximate} hypergradient methods (see Table~\ref{tab:comparison}), while providing improved computational efficiency in practice.

\section{Related Works}

\subsection{Hypergradient Descent}
Hypergradient descent is a widely used method for bilevel optimization. But the key computational bottleneck lies in estimating hypergradients, which requires computing or approximating the Hessian inverse of the inner objective. To address this, several methods have been proposed.

\begin{table*}[t]
\centering
\caption{Summary of stochastic bilevel algorithms. \textbf{Sample Complexity} counts the total number of samples used for estimating gradients and Jacobian(Hessian)–vector products.
Since the outer-level problem is deterministic in our setting, baseline complexities are recomputed accordingly. The $\tilde{\mathcal{O}}$ notation omits the $\log\frac{1}{\epsilon}$ term. \textbf{MSE} and \textbf{H.P.} denote mean-square bounds and high-probability bounds, respectively. For Hessian inversion,  \textbf{Neumann} indicates truncated Neumann series approximation, while \textbf{SGD} solves the implicit linear system via SGD. \textbf{Serial}/\textbf{Parallel} specify whether inner-level optimization and hypergradient approximation can run in parallel. We omit the comparison with variance reduction-based methods: VRBO, MRBO~\cite{yang2021provably}; FLSA~\cite{li2022fully}, etc.
}

\begin{tabularx}{\textwidth}{@{\extracolsep{\fill}}c l c c c c}
\toprule
\makecell{Algorithm \\ Structure}
&\makecell{Method}
& \makecell{Sample \\ Complexity}
& \makecell{Bound \\ Type}
& \makecell{Hessian \\ Inversion}
& \makecell{Batch \\ Size} 
  \\
\midrule
\multirow{5}{*}{\makecell{Double \\ Loop}}
&\makecell{BSA~\cite{ghadimi2018approximation}} 
& $\tilde{\mathcal{O}}(\epsilon^{-3})$  
& MSE 
& Nuemman(Serial) 
& $\tilde{\mathcal{O}}(1)$  \\

&\makecell{stocBiO~\cite{ji2021bilevel}}  
& $\tilde{\mathcal{O}}(\epsilon^{-2})$ 
& MSE 
&  Nuemman(Serial) & $\tilde{\mathcal{O}}(\epsilon^{-1})$\\

&\makecell{ALSET~\cite{chen2021tighter}}  & $\tilde{\mathcal{O}}(\epsilon^{-2})$ & MSE & Nuemman(Serial) & $\tilde{\mathcal{O}}(1)$ \\

&\makecell{AmIGO~\cite{arbel2021amortized}} & ${\mathcal{O}}(\epsilon^{-2})$ & MSE &  SGD(Serial) & $\mathcal{O}(\epsilon^{-1})$    \\

&\makecell{\textbf{NHGD (Ours)}} & $\tilde{\mathcal{O}}(\epsilon^{-2})$ & H.P. &  EFIM(Parallel) & $\mathcal{O}(1)$   \\

\midrule
\multirow{2}{*}{\makecell{Single \\ Loop}}
&\makecell{TTSA~\cite{hong2023two}}  & $\tilde{\mathcal{O}}(\epsilon^{-2.5})$ & MSE &  Nuemman(Serial) & $\tilde{\mathcal{O}}(1)$   \\

&\makecell{SOBA~\cite{dagreou2022framework}}  & ${\mathcal{O}}(\epsilon^{-2})$ & MSE &  SGD(Parallel) & ${\mathcal{O}}(1)$    \\
\bottomrule

\end{tabularx}
\label{tab:comparison}
\end{table*}

\textbf{Iterative Differentiation/Algorithm Unrolling (ITD)}  unrolls the computational graph of the inner problem and then differentiate through the trajectory using automatic differentiation (AD)~\cite{domke2012generic,maclaurin2015gradient,franceschi2017forward,franceschi2018bilevel}. However, the computational and memory costs of ITD scale with the number of inner optimization iterations, limiting the number of unrolled iterations used in practice~\cite{shaban2019truncated}.

\textbf{Approximate Implicit Differentiation (AID)} methods approximate the hypergradient by either explicitly approximating the Hessian inverse or approximately solving the linear system induced by the implicit differentiation condition in Eq.~\eqref{eq:implicit_function_theorem}.
Explicit Hessian inverse approximations typically rely on truncated Neumann series approximation
\cite{lorraine2020optimizing, ji2021bilevel, hong2023two}. Alternatively, Eq.~\eqref{eq:implicit_function_theorem} can be approximately solved using conjugate gradient (CG) method~\cite{pedregosa2016hyperparameter, rajeswaran2019meta, grazzi2020iteration, yang2023accelerating}, quadratic subproblem solver~\cite{grazzi2020iteration, arbel2021amortized}, or fixed-point iterations~\cite{grazzi2020iteration}. 

From an algorithmic perspective, existing methods can be further categorized based on their iteration structure.
Early approaches adopt a \emph{double-loop} or \emph{optimize-then-approximate} paradigm, in which the inner problem is first solved to a near-optimal point, followed by a separate hypergradient approximation step
\cite{ghadimi2018approximation,lorraine2020optimizing, ji2021bilevel, arbel2021amortized}.
More recent studies have proposed \emph{single-loop} methods that update the inner and outer variables simultaneously, including two-time-scale algorithms~\cite{hong2023two} and SOBA~\cite{dagreou2022framework}.
Despite these advances, hypergradient estimation remains the central computational challenge, and existing methods still rely on numerical or algorithmic approximation for handling the Hessian inverse. 

In this work, we take a first step toward exploring \emph{statistical} approximation of the Hessian inverse in the bilevel optimization setting, and focus on the classical double-loop structure for clarity of analysis. While the proposed approach can be extended to single-loop variants and enhanced with variance-reduction techniques, we leave these directions for future work. A summary of representative works is provided in Table~\ref{tab:comparison}.

\subsection{Other Hessian Inverse Approximation Methods}
Hessian inverse approximation under structural assumptions has been widely studied. Representative approaches include low-rank approximations such as Nyström methods~\cite{hataya2023nystrom} and Kronecker-factored schemes motivated by neural network architectures~\cite{martens2015optimizing}. While computationally efficient, these methods often lack statistical consistency or rigorous convergence guarantees. Quasi-Newton methods are also closely related. This branch of methods construct Hessian inverse approximations via low-rank updates satisfying the secant condition, including BFGS, DFP, SR1, and Broyden-type methods~\cite{nocedal2006numerical}. While such approaches have been explored for hypergradient computation (to approximate the Hessian inverse~\cite{hao2023bi}), they generally do not provide global convergence guarantees.

\subsection{Natural Gradient Descent}
Adopting the EFIM inverse to approximate the Hessian inverse is closely related to \emph{single-level} \emph{Natural Gradient Descent} (NGD)~\cite{amari1998natural, martens2020new} which preconditions gradient updates using the EFIM inverse for KL-divergence minimization.

For bilevel optimization, the hypergradient depends on the Hessian inverse evaluated at the inner optimum. At this point, the FIM coincides with the true Hessian of the inner-level objective (see Proposition~\ref{prop:hessian_fim_equal}), regardless of whether the overall problem has converged to its optimum. In contrast, for NGD, the FIM generally does not coincide with the true Hessian, except at the optimum of the overall problem. Therefore, in bilevel optimization, using the FIM as a Hessian surrogate is more appropriate and appealing. Moreover, as the number of inner-level SGD iterations increases, the EFIM converges asymptotically to the FIM, and consequently to the true Hessian. This property makes EFIM-based approximations especially well suited for bilevel optimization.

\section{Natural Hypergradient Descent Algorithm}\label{sec:algo_design}

In this section, we present the proposed NHGD algorithm, summarized in Algorithm~\ref{alg:hypergradient_descent}. 
The overall algorithm follows a standard double-loop structure: the outer-loop performs gradient descent on the outer variable $v$, while the inner-loop approximately solves the inner problem using SGD algorithm. 
Crucially, during the inner-loop, the Hessian inverse approximation is \emph{synchronously} updated using the EFIM inverse
at each SGD step by reusing the stochastic gradients (Lines~7 of Algorithm~\ref{alg:hypergradient_descent}). 
This design enables efficient parallel execution of inner problem optimization and hypergradient approximation, thereby avoiding the runtime overhead associated with post hoc hypergradient approximation approaches, and constitutes the main novelty of the proposed algorithm. Below, we describe the three core components of NHGD in detail.

\paragraph{SGD algorithm for inner problem:}
Given the outer variable $v_k$, at each iteration $t$ of the inner-loop, we sample data $\xi_k^t \sim q(\cdot)$ and apply projected SGD: 
\begin{align}\label{eq:proj_sgd}
    \theta_{k}^{t+1} = \Pi_{\Theta} \left(\theta_k^t - \eta_t \nabla_\theta l (v_k,\theta_k^t,\xi_k^t)\right),
\end{align}
where the stochastic gradient $\nabla_\theta l(v_k,\theta_k^t,\xi_k^t)$ is the unbiased estimator of the true gradient $\nabla_\theta \ell(v_k,\theta_k^t)$. Under standard assumptions, $\theta_k^t$ converges to the inner solution $\theta^*(v_k)$. A detailed convergence analysis is provided in Lemma~\ref{lemma:SGD_general}.

\paragraph{Iterative approximation of the Hessian inverse:} 
A key challenge in hypergradient descent lies in evaluating the Hessian inverse. 
When the inner problem corresponds to KL-divergence minimization, the FIM coincides with the Hessian at the optimum, i.e.,
$H(\theta^*(v)) = I(\theta^*(v))$ (see Proposition~\ref{prop:hessian_fim_equal}). 
This Hessian--FIM equivalence motivates our approximation strategy: the FIM inverse serves as an efficient surrogate for the Hessian inverse. 
We further approximate the FIM using the EFIM computed along the inner optimization trajectory $\{\theta_k^{0}, \ldots, \theta_k^{t}\}$:
\begin{align*}
    I_k^{t+1} &:= \frac{1}{t+1}\sum_{i=0}^{t}\nabla_\theta l(v_k,\theta_k^{i},\xi_k^i)\nabla_\theta l(v_k,\theta_k^{i},\xi_k^i)^\top \nonumber\\
    &=\frac{t}{t+1}I_k^{t} + \frac{1}{t+1}\nabla_\theta \ell(v_k,\theta_k^{t},\xi_k^t)\nabla_\theta \ell(v_k,\theta_k^{t},\xi_k^t)^\top.
\end{align*}

Intuitively, by the convergence of the inner-loop SGD and the Law of Large Numbers, the EFIM $I_k^{t}$ converges to the FIM $I(\theta^*(v_k))$, which in turn equals the Hessian $H(\theta^*(v_k))$ at the inner optimum.

Although the update of $I_k^t$ reuses stochastic gradients from the inner-loop SGD at negligible cost, explicitly inverting $I_k^t$ remains computationally expensive. A key observation is that each update of $I_k^t$
 is a rank-one outer product, which allows its inverse to be updated efficiently via the Sherman--Morrison formula. Specifically, letting $A_k^t := (I_k^t)^{-1}$, the inverse can be updated as follows:
\begin{align}\label{eq:inv_efim_update}
    A_k^{t+1}=\frac{t+1}{t}A_k^{t} -\frac{\frac{t+1}{{t}^2}A_k^{t}\nabla_\theta l (v_k,\theta_k^{t},\xi_k^t)\left(A_k^{t}\nabla_\theta l (v_k,\theta_k^{t},\xi_k^t)\right)^\top}{1+\frac{1}{t}\nabla_\theta l (v_k,\theta_k^{t},\xi_k^t)^\top A_k^{t}\nabla_\theta l (v_k,\theta_k^{t},\xi_k^t)}.
\end{align}

As $I_k^t$ converges to $H(\theta^*(v_k))$, its inverse $A_k^t$ provides an efficient approximation of $H(\theta^*(v_k))^{-1}$. A non-asymptotic justification is established in Theorem~\ref{theorem:hessian_inv_approx}.

\begin{remark}
    The update of $A_k^t$ reuses stochastic gradients from the inner-loop SGD, making the procedure efficient and \emph{highly parallelizable}.
    Specifically, stochastic gradients from inner-loop can be sent to a separate device for synchronous hypergradient updates, requiring only one-way communication during the inner-loop, as shown in Figure~\ref{fig:nhgd_overview}.
\end{remark}

\paragraph{Iterative approximation of the cross-partial derivative:} 
We approximate the cross-partial derivative $\nabla_{\theta,v}^2\ell(v,\theta^*(v))$ using the sample average along the inner optimization trajectory $\{\theta_k^{0},\cdots,\theta_k^{t}\}$:
\begin{align}\label{eq:l_21_update}
     L_k^{t+1}&=\frac{1}{t+1}\sum_{i=0}^{t} \nabla_{\theta,v}^2l(v_k,\theta_k^{i},\xi_k^i)\nonumber\\
     &=\frac{t}{t+1}L^{t}_k + \frac{1}{t+1}  \nabla_{\theta,v}^2l (v_k,\theta_k^t, \xi_k^t). 
\end{align}

Similarly, by the convergence of inner-loop SGD and the Law of Large Numbers, the $L_k^t$  is expected to converge to the $\nabla_{\theta,v}^2\ell(v,\theta^*(v))$. A non-asymptotic justification of this convergence is given in Proposition~\ref{prop:l_21_approx}.

In practice, we can also approximate the cross-partial derivative using the sample average at the end of inner loop: $\hat{L}_k^{m}
    =
    \frac{1}{m}
    \sum_{i=0}^{m-1}
    \nabla_{\theta,v}^2 l(v_k,\theta_k^{T},\xi_k^{i})$. 

Both estimators $L_k^t$ and $\hat{L}_k^{m}$ involve trade-offs.
The trajectory-based estimator $L_k^t$ reuses inner-loop SGD samples and computational graphs to compute Jacobian–vector products, but requires storing the full inner-loop graph, which can be memory intensive.
In contrast, $\hat{L}_k^{m}$ avoids retaining the inner-loop graph and is more memory efficient, at the cost of additional samples.
Lemma~\ref{lemma:l_21_approx_practical} shows that with $m=\mathcal{O}(T)$, $\hat{L}_k^{m}$ achieves the same non-asymptotic convergence rate as $L_k^{T}$.

Finally, with Eq.~\eqref{eq:inv_efim_update} and Eq.~\eqref{eq:l_21_update}, 
the hypergradient can be approximated as:
\begin{align*}
\label{eq:hypergrad_update}
    \widehat{\nabla} \Phi(v_k)=\nabla_v f(v_k, \theta_k^T)- (L_k^T)^\top A_k^T \ \nabla_\theta f(v_k,\theta_k^T),
\end{align*}
and the outer-level variable is updated as: $v_{k+1} =v_k -\alpha \widehat{\nabla} \Phi(v_k)$.

\begin{algorithm}[htbp]
    \caption{Natural Hypergradient Descent}
    \label{alg:hypergradient_descent}
    \begin{algorithmic}[1]
    \STATE \textbf{Input:} $v_1$, $\theta_0^T$, $\alpha$, $\{\eta_t\}$, $T$, $K$, $A_0^T$ and $L_0^{T}$
    \FOR{$k=1,\ldots, K$}
        \STATE Initialize  $\theta_k^{0}=\theta_{k-1}^{T}$, $A_k^0=A_{k-1}^T$ and $L_k^0=L_{k-1}^T$ 
        \FOR{$t=0,\ldots,T-1$}
            \STATE Sample data point $\xi_k^t$, and compute $\nabla_v l (v_k,\theta_k^t,\xi_k^t), \nabla_\theta l (v_k,\theta_k^t,\xi_k^t), \nabla_{\theta,v}^2 l (v_k,\theta_k^t, \xi_k^t)$
            \STATE  Update the inner variable $\theta_{k}^{t} \to \theta_{k}^{t+1}$ via Eq.~\eqref{eq:proj_sgd}.
            \STATE \colorbox{cyan!15}{\parbox{\dimexpr\linewidth-2\fboxsep}{\strut Update the EFIM inverse $A_k^{t} \to A_k^{t+1}$ for the iterative Hessian inverse approximation via Eq.~\eqref{eq:inv_efim_update}.}
            }
            \STATE Update the cross derivative $L^{t}_k \to L^{t+1}_k$ via Eq.~\eqref{eq:l_21_update}.
        \ENDFOR
        \STATE Calculate the hypergradient approximation:
        $\widehat{\nabla} \Phi(v_k)=\nabla_vf(v_k, \theta_k^T)- (L_k^T)^\top A_k^T \nabla_\theta f(v_k,\theta_k^T)$.
        \STATE Update $v_{k+1} = v_k - \alpha \widehat{\nabla} \Phi(v_k)$.
    \ENDFOR
    \end{algorithmic}
    \end{algorithm}

\paragraph{Practical Considerations}

(1)~\textit{Synchronous and parallel hypergradient computation:}
The EFIM inverse update~\eqref{eq:inv_efim_update} and the cross-partial derivative update~\eqref{eq:l_21_update} can both run in parallel with the inner SGD (see Figure~\ref{fig:nhgd_overview}). As a result, the hypergradient is approximated synchronously with the inner optimization, allowing immediate outer updates without extra runtime overhead.
The formal parallel implementation of NHGD (Algorithm~\ref{alg:parallel_nhgd}) can be found in Appendix~\ref{app:parallel_implementation}.
(2)~\textit{Memory-Efficient Approximation via K-FAC~\cite{martens2015optimizing}:} 
K-FAC exploits the layer-wise structure of neural networks to
construct a Kronecker-factored approximation of the FIM, enabling the EFIM inverse update to be performed in a block-wise manner. Incorporating K-FAC greatly reduces memory and computational cost, enabling NHGD to scale to larger models.

\section{Convergence Analysis}\label{sec:analysis}
In this section, we present the convergence analysis of the proposed NHGD algorithm. The analysis is built upon two main theoretical results. Theorem~\ref{theorem:hessian_inv_approx} establishes the accuracy of the proposed Hessian inverse approximation, which serves as the key technical foundation of our method. Building on this result, Theorem~\ref{theorem:outer_convergence} characterizes the overall convergence rate of NHGD.

Before presenting these two main theorems, we first introduce a set of standard assumptions commonly adopted in the analysis of bilevel optimization problems~\cite{ghadimi2018approximation, ji2021bilevel, arbel2021amortized, chen2021tighter, chen2022single, hong2023two}.

\begin{assumption}\label{as:l_inner} 
\begin{enumerate}[label=(\roman*)]
\item  $ l(v,\theta,\xi)$ is $\mu$-strongly convex in $\theta$, twice continuous differentiable in $(v,\theta)$ and jointly $L$-smooth in $(v,\theta)$, uniformly for all $\xi$.

\item  For any $v$,  $\nabla^2_{\theta,v}\ell(v,\cdot)$,  $\nabla^2_{\theta,\theta}\ell(v,\cdot)$ are  $L_{\ell_{\theta,v}}$, $L_{\ell_{\theta,\theta}}$-Lipschitz continuous. For any $\theta$,  $\nabla^2_{\theta,v}\ell(\cdot,\theta)$, $\nabla^2_{\theta,\theta}\ell(\cdot,\theta)$ are $\bar{L}_{\ell_{\theta,v}}$, $\bar{L}_{\ell_{\theta,\theta}}$-Lipschitz continuous.    
\end{enumerate}
\end{assumption}

\begin{assumption}\label{as:f_smoothness}

\begin{enumerate}[label=(\roman*)]
    \item  For any $v$, $\nabla_v f(v,\cdot)$, $\nabla_\theta f(v,\cdot)$ is $L_{f_v}$, $L_{f_\theta}$-Lipschitz continuous. For any $\theta$, $\nabla_v f(\cdot,\theta)$, $\nabla_\theta f(\cdot,\theta)$ is $\bar{L}_{f_v}$, $\bar{L}_{f_\theta}$-Lipschitz continuous.
    \item For any $v$ and $\theta$, there exists constant $D_1>0$ such that $\| \nabla_\theta f(v,\theta) \|\leq D_1$.
\end{enumerate}
\end{assumption}

\begin{assumption}\label{as:unbias_grad}

    For any $v$ and $\theta$, the stochastic derivative $\nabla_vl(v,\theta,\xi)$, $\nabla_\theta l(v,\theta,\xi)$, $\nabla_{v,\theta}^2l(v,\theta,\xi)$ and $\nabla_{\theta,\theta}^2l(v,\theta,\xi)$ are unbiased estimator of $\nabla_v\ell(v,\theta)$, $\nabla_\theta\ell(v,\theta)$, $\nabla_{v,\theta}^2\ell(v,\theta)$ and $\nabla_{\theta,\theta}^2\ell(v,\theta)$.
\end{assumption}

Assumptions~\ref{as:l_inner} and~\ref{as:f_smoothness} imply the Lipschitz continuity of the solution mapping $\theta^*(v)$ and the smoothness of the composite outer objective $\Phi(v)$, as formalized in Lemma~2.2 of~\citet{ghadimi2018approximation}. Since the outer objective $\Phi(v)$ is generally nonconvex, such smoothness properties are essential for establishing convergence guarantees toward stationary points. Assumption~\ref{as:unbias_grad} ensures unbiasedness of the stochastic first and second-order derivative estimators used in the proposed algorithm.

\begin{assumption}\label{as:bound_grad_inner}
There exist  constant $C_1>0$ such that $ \sup\limits_{v\in \mathbb{R}^{d_1},\theta\in \Theta,\xi} \| \nabla_\theta  l(v,\theta,\xi)- \nabla_\theta \ell(v,\theta)\| \leq C_1.$
\end{assumption}

Assumption~\ref{as:bound_grad_inner} is widely used in existing studies on general stochastic approximation algorithms, including, but not limited to, stochastic gradient descent \cite{karimi2019non} and other data-driven machine learning algorithms \cite{bertsekas1996neuro}. An interesting direction for future work is to investigate whether Assumption~\ref{as:bound_grad_inner} can be relaxed to allow unbounded but light-tailed (e.g., sub-Gaussian) noise.

Finally, we introduce a structural assumption that is crucial for enabling an efficient Hessian inverse approximation.
\begin{assumption}\label{as:inner_dist}
     For any \(v\), the inner minimizer \(\theta^*(v)\) yields a model distribution that matches the true data distribution \(q(\cdot)\), i.e., $p(\cdot; v, \theta^*(v)) = q(\cdot)$.
\end{assumption}
This assumption guarantees the equivalence between the Hessian of the inner objective and the FIM evaluated at the inner-level optimum, which is critical for our Hessian inverse approximation strategy.
\begin{remark}
In practice, exact model specification may not hold. 
Our analysis naturally extends to the misspecified setting in which $\mathrm{TV}\!\left(p(\cdot; v, \theta^*(v)),\, q(\cdot)\right) \le \epsilon_D$ for all $v$,
where \(\mathrm{TV}(\cdot,\cdot)\) denotes the total variation distance. 
\end{remark}

\subsection{Analysis for Hessian Inverse Approximation Bound}
In this section, we establish a high-probability sample complexity bound showing that the \emph{EFIM inverse} converges to the true \emph{Hessian inverse at the inner optimum}. This result is a key component of the convergence analysis of NHGD. We first state the result and then outline the proof sketch.

\subsubsection{Hessian Inverse Approximation Bound}\label{sec:hess_inver_bnd}

\begin{theorem}\label{theorem:hessian_inv_approx}
Suppose Assumptions~\ref{as:l_inner},~\ref{as:unbias_grad},~\ref{as:bound_grad_inner} and~\ref{as:inner_dist} hold. For any outer iteration $k$, given the outer variable $v_k$ and set the inner stepsize as $\eta_t = 4/(\mu (t+\frac{8L^2}{\mu^2}))$.
For any $\delta \in (0,1)$ and $T\geq T_0(\delta)$, the following holds with probability 
at least $1-\delta$:
\begin{align*}
   \quad \big\|A_k^T - H(\theta^*(v_k))^{-1}\big\| =\mathcal{O}\!\left(\frac{1}{\sqrt{T}}\sqrt{1+\log\left(\frac{1+T}{\delta}\right)}\right),
\end{align*}
where $T_0(\delta)$ is defined in~\eqref{eq:condition_T0}.
\end{theorem}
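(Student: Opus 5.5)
The plan is to bound the error of the empirical Fisher matrix first and then pass to inverses. Write $H^* := H(\theta^*(v_k)) = I(\theta^*(v_k))$; the equality holds by Proposition~\ref{prop:hessian_fim_equal} under Assumption~\ref{as:inner_dist}. Write $I_k^T$ for the EFIM built from the $T$ inner gradients, so that $A_k^T = (I_k^T)^{-1}$. The identity
\begin{align*}
A_k^T - (H^*)^{-1} = A_k^T\,(H^* - I_k^T)\,(H^*)^{-1}
\end{align*}
gives $\|A_k^T-(H^*)^{-1}\|\le \|A_k^T\|\,\|(H^*)^{-1}\|\,\|I_k^T-H^*\|$. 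We have $\|(H^*)^{-1}\|\le 1/\mu$ by strong convexity. If $\|I_k^T-H^*\|\le \mu/2$, Weyl's inequality gives $\lambda_{\min}(I_k^T)\ge \mu/2$, so $\|A_k^T\|\le 2/\mu$. The burn-in $T\ge T_0(\delta)$ is chosen exactly so that the high-probability bound on $\|I_k^T-H^*\|$ falls below $\mu/2$. The theorem therefore reduces to showing $\|I_k^T - I(\theta^*)\| = \mathcal{O}(\sqrt{(1+\log((1+T)/\delta))/T})$ with probability $1-\delta$. If the algorithm's warm start $A_k^0=A_{k-1}^T$ mixes earlier blocks, I would treat the EFIM as restarted, or note that the extra terms carry weight $\mathcal{O}(1/T)$.

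For the EFIM, write $g_i := \nabla_\theta l(v_k,\theta_k^i,\xi_k^i)$ and $g_i^* := \nabla_\theta l(v_k,\theta^*,\xi_k^i)$, and decompose
\begin{align*}
I_k^T - I(\theta^*) = \frac1T\sum_{i=0}^{T-1}\bigl(g_ig_i^\top - g_i^*g_i^{*\top}\bigr) + \Bigl(\frac1T\sum_{i=0}^{T-1} g_i^*g_i^{*\top} - I(\theta^*)\Bigr).
\end{align*}
The second term is an average of i.i.d.\ bounded, mean-zero random matrices. The bound holds because $\nabla_\theta\ell(v_k,\theta^*)=0$, so $\|g_i^*\|\le C_1$ by Assumption~\ref{as:bound_grad_inner}. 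Matrix Hoeffding or Bernstein then gives $\mathcal{O}(C_1^2\sqrt{\log(d_2/\delta)/T})$.

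For the first term, $\|gg^\top - g^*g^{*\top}\|\le(\|g\|+\|g^*\|)\|g-g^*\|$. By uniform $L$-smoothness, $\|g_i-g_i^*\|\le L\|\theta_k^i-\theta^*\|$, and $\|g_i\|,\|g_i^*\|$ are uniformly bounded on $\Theta$. The first term is thus at most $\frac{c}{T}\sum_i\|\theta_k^i-\theta^*\|$. Next I invoke the high-probability SGD bound (Lemma~\ref{lemma:SGD_general}) with the stated stepsize $\eta_t=4/(\mu(t+8L^2/\mu^2))$. It gives $\|\theta_k^i-\theta^*\|^2\lesssim (1+\log(1/\delta'))/(i+1)$, up to a decaying initialization term of order $1/(i+8L^2/\mu^2)$.

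To hold the bound simultaneously over all $i\le T$, I take a union bound with $\delta'=\delta/(1+T)$. This is the source of the $\log((1+T)/\delta)$. Summing $\sum_{i<T}(i+1)^{-1/2}\lesssim\sqrt T$ shows the first term is $\mathcal{O}(\sqrt{(1+\log((1+T)/\delta))/T})$. Combining the two terms with $\delta/2$ each, and absorbing the dimension factor into the constants, yields the rate.

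The main obstacle is the trajectory term. It requires a pathwise, uniform-in-time high-probability bound on the SGD iterates rather than an MSE bound. If Lemma~\ref{lemma:SGD_general} only gives a per-time bound, I would use the union bound described above, which costs only a logarithm. A cleaner alternative is a martingale (Freedman-type) argument on the Lyapunov recursion $\|\theta^{t+1}-\theta^*\|^2\le(1-\mu\eta_t)\|\theta^t-\theta^*\|^2+\eta_t^2C+\eta_t M_t$, with bounded martingale differences. The last piece is to make $T_0(\delta)$ explicit: it is the smallest $T$ for which the combined bound is at most $\mu/2$, which secures invertibility of $I_k^T$ and the bound $\|A_k^T\|\le 2/\mu$.
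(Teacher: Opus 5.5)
Your proof is correct and has the same skeleton as the paper's. Both use Weyl's inequality to reduce to the EFIM error with the $\mu/2$ burn-in, split that error into a concentration term plus a trajectory term, and apply Lemma~\ref{lemma:SGD_general} at each of the $T$ iterates with a union bound, which is exactly where the $\log((1+T)/\delta)$ comes from.

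The real difference is how you center the EFIM error. Write $G_t=\nabla_\theta l(\theta_k^t,\xi_k^t)\nabla_\theta l(\theta_k^t,\xi_k^t)^\top$. The paper splits the error into two parts:
\begin{itemize}
\item a martingale part $\frac1T\sum_t\bigl(G_t-\mathbb{E}_t[G_t]\bigr)$, controlled by matrix Azuma;
\item a conditional bias $\frac1T\sum_t\bigl(\mathbb{E}_t[G_t]-I(\theta^*)\bigr)$, bounded by $2L(C_1+LR)\frac1T\sum_t\|\theta_k^t-\theta^*\|$ inside $\mathbb{E}_t$.
\end{itemize}
You instead couple each sample with the same $\xi_k^t$ evaluated at $\theta^*$. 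Your random part is then a sum of matrices that are i.i.d.\ conditionally on $v_k$, and the remainder is bounded pathwise.

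What your route buys:
\begin{itemize}
\item It needs only the independent matrix Hoeffding inequality, with no martingale concentration.
\item It gives a sharper concentration constant, $C_1^2$ rather than $(C_1+LR)^2$, because $\nabla_\theta\ell(v_k,\theta^*)=0$. That zero-gradient fact should be justified by Assumption~\ref{as:inner_dist}: $\theta^*$ then minimizes the KL globally, not merely over the ball $\Theta$. The paper relies on the same fact.
\item Your burn-in threshold therefore differs from the paper's $T_0(\delta)$ only through these constants.
\end{itemize}

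What the paper's route buys: its bias term uses smoothness only under $\mathbb{E}_t$. It would therefore survive weakening the uniform-in-$\xi$ Lipschitz condition to a mean-square one. Your pathwise step genuinely needs the uniformity in $\xi$ that Assumption~\ref{as:l_inner} provides.

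On the warm start, your first option is the right one. With $t$ restarted at $0$, the recursion $I_k^{t+1}=\frac{t}{t+1}I_k^t+\frac{1}{t+1}G_t$ gives $I_k^0$ weight zero. So $I_k^T$ is exactly the restarted average the paper analyzes, and the $\mathcal{O}(1/T)$ mixing alternative is not needed.
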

Our analysis establishes convergence guarantee in a high-probability sense. In contrast, prior works such as \cite{ghadimi2018approximation,ji2021bilevel,arbel2021amortized,chen2021tighter,chen2022single,hong2023two} analyze convergence using expectation-based metrics. Consequently, our result is stronger in the sense that a high-probability bound with a light tail can be directly translated into a mean bound via the identity $\mathbb{E}[X] = \int_{0}^{\infty} \mathbb{P}(X > x)\,dx$ for any non-negative random variable $X$. By contrast, starting from a mean bound, standard tools such as Markov or Chebyshev inequalities typically yield only high-probability bounds with polynomial (power-law) tails.

\subsubsection{Proof Sketch of Theorem~\ref{theorem:hessian_inv_approx}}
\paragraph{Step 1: Reduction to EFIM error.}
We aim to bound the error $\|A_k^T - H(\theta^*(v_k))^{-1}\|$, or equivalently $\|(I_k^T)^{-1} - I(\theta^*(v_k))^{-1}\|$, where the equivalence follows from $I(\theta^*(v_k)) = H(\theta^*(v_k))$ (stated in Proposition~\ref{prop:hessian_fim_equal}). 

Under the $\mu$-strong convexity of the inner obejctive $\ell$ and by Weyl’s eigenvalue inequality, we further have  
\begin{align*}
    \bigl\|(I_k^{T})^{-1} - I(\theta^*(v_k))^{-1} \bigr\|
\le 
\frac{\|I_k^{T} - I(\theta^*(v_k))\|}{\bigl(\mu - \|I_k^{T} - I(\theta^*(v_k))\|\bigr)\mu}.
\end{align*}

Therefore, controlling the Hessian inverse approximation error $\|A_k^T - H(\theta^*(v_k))^{-1}\|$ reduces to bounding the 
error $\|I_k^{T} - I(\theta^*(v_k))\|$.

\paragraph{Step 2: Decomposition of the EFIM error.}
The main challenge of bounding $\|I_k^{T} - I(\theta^*(v_k))\|$ lies in the fact that $I_k^{T}$ is computed along the trajectory 
$\{\theta_k^0,\ldots,\theta_k^{T-1}\}$, whereas $I(\theta^*(v_k))$ is defined at the inner-level optimum.  
Let $G_k^t:=\nabla_\theta l(\theta_k^{t},\xi_k^t)\nabla_\theta l(\theta_k^{t},\xi_k^t)^\top$, the EFIM error can be decomposed as:
\begingroup
\small
\begin{align*}
\|I_k^{T} - I(\theta^*(v_k))\|\leq \underbrace{\frac{1}{T}\left\|
\sum_{t=0}^{T-1}
\left(
G_k^t -
\mathbb{E}_t[G_k^t]
\right)
\right\|}_{\text{stochastic error}}+\underbrace{\frac{1}{T}\left\|
\sum_{t=0}^{T-1}
\left(
\mathbb{E}_t[
G_k^t]-
I(\theta^*(v_k))
\right)
\right\|}_{\text{optimization error}},
\end{align*}
\endgroup
where $\mathbb{E}_t[\cdot]:=\mathbb{E}[\cdot\mid \sigma\!( v_k,\, \theta_k^0,\, \xi_k^0,\, \ldots,\, \xi_k^{t-1} )]$ and $\sigma\{\cdot\}$ denotes the $\sigma$-algebra generated by the random variables.

The stochastic error term captures the deviation introduced by stochastic sampling, while the optimization error quantifies the deviation of $\theta_k^t$ from the inner optimum $\theta^*(v_k)$.

\paragraph{Step 3: Bounding stochastic and optimization errors.}
The stochastic error is controlled via the matrix Azuma inequality. The optimization error depends on the convergence of the inner-level SGD. Under the smoothness and gradient noise assumptions for $l$, controlling the optimization error
reduces to bounding $\sum_{t=0}^{T-1}\|\theta_k^{t} - \theta^*(v_k)\|$. Therefore, we establish a high-probability bound on the inner-level SGD trajectory in the following lemma.

\begin{lemma}\label{lemma:SGD_general}
Suppose Assumptions~\ref{as:l_inner},~\ref{as:unbias_grad} and~\ref{as:bound_grad_inner} hold. 
For any outer iteration $k$, given the outer variable $v_k$ and set the inner stepsize as $\eta_t=4/(\mu(t+q))$ and $
q \ge 8L^2/\mu^2$.
Then, for any $\delta \in (0,1)$, with probability at least $1-\delta$,
\begin{align*}
\|\theta_k^{t}-\theta^*(v_k)\|^2
&\le  \left(\frac{q}{t+q}\right)^2\|\theta_k^{0}-\theta^*(v_k)\|^2 +
\frac{c'\log(e/\delta)}{t+q},
\end{align*}
where $c'$ is the constant defined in~\eqref{eq:def_c'}.
\end{lemma}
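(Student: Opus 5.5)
The plan is a standard high-probability analysis of projected SGD under strong convexity: derive a one-step recursion for $e_t := \|\theta_k^t-\theta^*(v_k)\|^2$, unroll it with the step size $\eta_t = 4/(\mu(t+q))$, and control the resulting martingale term with a Freedman-type inequality plus a self-normalizing (bootstrapping) argument.

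\textbf{One-step recursion.} Write $g_t := \nabla_\theta l(v_k,\theta_k^t,\xi_k^t)$ and split it as $g_t = \nabla_\theta \ell(v_k,\theta_k^t) + w_t$. By Assumption~\ref{as:unbias_grad}, $w_t$ is a martingale difference with respect to $\mathbb{E}_t$. By Assumption~\ref{as:bound_grad_inner}, $\|w_t\|\le C_1$. Since $\theta^*(v_k)\in\Theta$, nonexpansiveness of $\Pi_\Theta$ gives
\begin{align*}
e_{t+1} \le e_t - 2\eta_t\langle \nabla_\theta\ell(v_k,\theta_k^t),\theta_k^t-\theta^*\rangle - 2\eta_t\langle w_t,\theta_k^t-\theta^*\rangle + \eta_t^2\|g_t\|^2 .
\end{align*}
Strong convexity together with $\nabla_\theta\ell(v_k,\theta^*)=0$ (an interior optimum, or the variational inequality in general) gives $\langle \nabla\ell,\theta-\theta^*\rangle\ge \mu e_t$. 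For the last term,
\[
\|g_t\|^2 \le 2L^2 e_t + 2C_1^2 .
\]
The condition $q\ge 8L^2/\mu^2$ gives $\eta_t\le \mu/(2L^2)$, so $2L^2\eta_t^2\le \mu\eta_t$. Hence
\[
e_{t+1}\le (1-\mu\eta_t)\,e_t - 2\eta_t\langle w_t,\theta_k^t-\theta^*\rangle + 2C_1^2\eta_t^2 .
\]
If the constant bookkeeping is tight, I will instead use the contraction factor $(1-2\mu\eta_t+2L^2\eta_t^2)$ and absorb the extra term.

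\textbf{Unrolling.} With $\mu\eta_t = 4/(t+q)$, the products satisfy
\[
\prod_{j=s}^{t-1}(1-\mu\eta_j)\le \left(\frac{s+q}{t+q}\right)^{4}.
\]
Even the weaker bound with exponent $2$ is enough to produce the stated $(q/(t+q))^2$ factor on $e_0$. The deterministic noise sum $\sum_s \eta_s^2 \bigl(\tfrac{s+q}{t+q}\bigr)^{4}$ is of order $1/(\mu^2(t+q))$. The remaining term is a weighted martingale
\[
M_t = \sum_s \beta_{s,t}\langle w_s,\theta_k^s-\theta^*\rangle, \qquad \beta_{s,t}=2\eta_s\Bigl(\tfrac{s+1+q}{t+q}\Bigr)^{4}.
\]
Each increment is bounded by $\beta_{s,t}\, C_1\cdot 2R$, since both iterates lie in $\Theta$.

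\textbf{Main obstacle: the martingale bound.} A crude Azuma bound using $\|\theta_k^s-\theta^*\|\le 2R$ gives $M_t = O(\sqrt{\log(1/\delta)/(t+q)})$. That is only the square root of the target order, so I would instead use Freedman's inequality, whose conditional variance is $\sum_s\beta_{s,t}^2 C_1^2 e_s$, which involves the errors $e_s$ themselves. Combining this with an induction/bootstrapping hypothesis $e_s\le B_s := (q/(s+q))^2 e_0 + c'\log(e/\delta)/(s+q)$ gives variance of order $\log(e/\delta)/(t+q)^2$ plus the $e_0$ term. Freedman then yields
\[
M_t\lesssim \sqrt{\log(1/\delta)\cdot \text{var}} + \log(1/\delta)/(t+q),
\]
and AM-GM absorbs the square-root term into a small fraction of $B_t$ plus $O(\log(1/\delta)/(t+q))$. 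This closes the induction once $c'$ is chosen large relative to $C_1^2/\mu^2$, $C_1R/\mu$ and similar constants; this choice is what defines \eqref{eq:def_c'}.

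The technical subtlety is that the hypothesis must hold simultaneously for all $s<t$. I would handle this with a stopping-time or truncation argument: replace $w_s$ by $w_s\mathbf{1}\{e_s\le B_s\}$, apply Freedman to the truncated martingale, and note that on the good event the truncation is inactive. The cleaner alternative, which I would try first, is an exponential-supermartingale argument applied directly to $\exp(\lambda_t e_t)$ with a time-varying $\lambda_t\propto (t+q)$. This gives the $\log(e/\delta)$ dependence for each fixed $t$ without a union bound, matching the single-$t$ form of the statement.
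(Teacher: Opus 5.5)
You flag the exponential-supermartingale argument on $\exp(\lambda_t e_t)$ with $\lambda_t\propto t+q$ as what you would try first. That is exactly the paper's proof, and it is the route that actually gives the fixed-$t$ $\log(e/\delta)$ dependence.

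The Freedman/bootstrapping route you develop in more detail does not deliver the statement as written. The truncated martingale built from $w_s\mathbf{1}\{e_s\le B_s\}$ does have deterministically bounded conditional variance. But it coincides with $M_t$ only on $\bigcap_{s<t}\{e_s\le B_s\}$, and the complement of that event can only be bounded by summing the failure probabilities of the earlier induction steps. Closing the induction therefore forces summable levels $\delta_s$ and leaves a $\log(t/\delta)$-type factor in place of $\log(e/\delta)$. Separately, when you unroll a recursion with a signed martingale term you must keep the exact deterministic weights $2\eta_s\prod_{j=s+1}^{t-1}(1-\mu\eta_j)$. You cannot replace them by the upper bound $2\eta_s((s+1+q)/(t+q))^4$.

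For the MGF route, these are the steps that carry the argument.

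After exponentiating your one-step inequality, handle the two noise factors separately. The factor $\exp(2\lambda_{t+1}\eta_t^2\|w_t\|^2)$ is at most $\exp(2\lambda_{t+1}\eta_t^2C_1^2)$; the paper instead splits it off with conditional Cauchy--Schwarz. The cross term $\langle w_t,\theta_k^t-\theta^*\rangle$ is conditionally mean-zero and bounded by $C_1\sqrt{e_t}$. Hoeffding's lemma therefore bounds its conditional MGF by $\exp(2\lambda_{t+1}^2\eta_t^2C_1^2e_t)$.

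This term is absorbed into the contraction as long as $\lambda_{t+1}\eta_t\le\mu/(4C_1^2)$. The price is replacing $1-\mu\eta_t$ by $1-\mu\eta_t/2$.

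Next apply Jensen's inequality $\mathbb{E}[X^p]\le(\mathbb{E}X)^p$ with $X=\exp(\lambda_t e_t)$ and $p=\frac{\lambda_{t+1}}{\lambda_t}(1-\mu\eta_t/2)$. This gives a linear recursion for $\log\mathbb{E}[\exp(\lambda_t e_t)]$, but only if $p\le 1$. With $\lambda_t\propto 1/\eta_{t-1}\propto t+q-1$ one gets $p=\frac{t+q-2}{t+q-1}$. This is where the constant $4$ in $\eta_t$ matters: with $\eta_t=c/(\mu(t+q))$ and $c<2$, this choice of $\lambda_t$ would give $p>1$.

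The halved contraction telescopes to $\prod_{j<t}(1-\frac{2}{j+q})=\frac{(q-2)(q-1)}{(t+q-2)(t+q-1)}\le(\frac{q}{t+q})^2$. That is why only exponent $2$ survives on the initial error; the exponent-$4$ product you computed is lost once the contraction is halved. Finally, Markov's inequality with $1/\lambda_t\propto 1/(t+q)$ supplies the $\log(1/\delta)/(t+q)$ term, and the deterministic noise sum supplies the remaining $1/(t+q)$.
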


\begin{remark}
The proof of Lemma~\ref{lemma:SGD_general} relies on bounding the moment-generating function
of $\|\theta_k^{t}-\theta^*(v_k)\|^2$, which yields a significantly tighter
\emph{high-probability} control than the mean-square bound
$\mathbb{E}\!\left[\|\theta_k^{t}-\theta^*(v_k)\|^2\right]$. 
The detailed proof is provided in Appendix~\ref{proof:SGD_general}. 
\end{remark}

Combining the bounds on the stochastic and optimization errors yields a high-probability bound on
$\|I_k^{T} - I(\theta^*(v_k))\|$, and consequently the desired Hessian inverse approximation error, which completes the proof of Theorem~\ref{theorem:hessian_inv_approx}.
Detailed derivations are deferred to Appendix~\ref{proof:theorem_hessian_inv_approx}.

\subsection{Analysis for NHGD Convergence Rate}\label{sec:convergence_NHGD}

In this section, building on the Hessian inverse approximation result established in Section~\ref{sec:hess_inver_bnd}, we characterize the convergence of the NHGD algorithm.  We present the main convergence theorem and then outline the proof sketch.

\subsubsection{Convergence rate}

\begin{theorem}\label{theorem:outer_convergence}
    Suppose Assumptions~\ref{as:l_inner},~\ref{as:f_smoothness},~\ref{as:unbias_grad},~\ref{as:bound_grad_inner}  and~\ref{as:inner_dist} hold. Set the parameters for Algorithm~\ref{alg:hypergradient_descent} as $\alpha_k = 1/(4L_v)$ and $\eta_t=4/(\mu(t+\frac{8L^2}{\mu^2}))$, for any $\delta\in(0,1)$ and $T\geq T_1(\delta)$,
the following holds with probability at least $1-\delta$:
\begin{align*}
    \quad \frac{1}{K} \sum_{k=0}^{K-1} \| \nabla \Phi(v_k) \|^2 = \mathcal{O}\left(\frac{1}{K}+\frac{1}{T}\log\left(\frac{KT}{\delta}\right) \right),
\end{align*}
where $L_v$ is defined in~\eqref{eq:L_v} and $T_1(\delta)$ is defined in~\eqref{eq:condition_T1}.

Equivalently, by choosing $T = \mathcal{O}(\epsilon^{-1}\log(\delta^{-1}\epsilon^{-1}))$ and
$K = \mathcal{O}(\epsilon^{-1})$, NHGD achieves an $\epsilon$-stationary point, with overall sample complexity $\tilde{\mathcal{O}}(\epsilon^{-2})$.
\end{theorem}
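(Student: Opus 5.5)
We prove Theorem~\ref{theorem:outer_convergence} with the standard descent-lemma argument for a smooth nonconvex $\Phi$, combined with a high-probability bound on the hypergradient bias $e_k := \widehat{\nabla}\Phi(v_k) - \nabla\Phi(v_k)$, uniform over the $K$ outer iterations.

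\textbf{Step 1: descent inequality.} By Lemma~2.2 of~\citet{ghadimi2018approximation}, $\Phi$ is $L_v$-smooth. With $v_{k+1}=v_k-\alpha\widehat{\nabla}\Phi(v_k)$ and $\alpha = 1/(4L_v)$, Young's inequality gives
\begin{align*}
\Phi(v_{k+1}) \le \Phi(v_k) - \tfrac{\alpha}{4}\|\nabla\Phi(v_k)\|^2 + c\,\alpha\|e_k\|^2
\end{align*}
for an absolute constant $c$. Summing over $k$ and telescoping, with $\Phi$ bounded below, yields
\begin{align*}
\frac1K\sum_{k}\|\nabla\Phi(v_k)\|^2 \le \frac{4(\Phi(v_0)-\inf\Phi)}{\alpha K} + \frac{4c}{K}\sum_k\|e_k\|^2 .
\end{align*}
It therefore suffices to show $\|e_k\|^2 = \mathcal{O}(T^{-1}\log(KT/\delta))$ for all $k$ simultaneously, with probability at least $1-\delta$.

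\textbf{Step 2: bias decomposition.} Writing $\theta^*=\theta^*(v_k)$, $H=H(\theta^*)$ and $J=\nabla^2_{\theta,v}\ell(v_k,\theta^*)$, we split
\begin{align*}
e_k &= [\nabla_v f(v_k,\theta_k^T)-\nabla_v f(v_k,\theta^*)] - (L_k^T-J)^\top A_k^T\nabla_\theta f(v_k,\theta_k^T) \\
&\quad - J^\top (A_k^T - H^{-1})\nabla_\theta f(v_k,\theta_k^T) - J^\top H^{-1}[\nabla_\theta f(v_k,\theta_k^T)-\nabla_\theta f(v_k,\theta^*)].
\end{align*}
The first and last terms are bounded by $(L_{f_v}+ L L_{f_\theta}/\mu)\|\theta_k^T-\theta^*\|$. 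The remaining factors are controlled as follows: $\|J\|\le L$ and $\|\nabla_\theta f\|\le D_1$; on the event of Theorem~\ref{theorem:hessian_inv_approx} we have $\|A_k^T\|\le 2/\mu$ once $T\ge T_0$; Proposition~\ref{prop:l_21_approx} controls $\|L_k^T - J\|$; and Theorem~\ref{theorem:hessian_inv_approx} controls $\|A_k^T-H^{-1}\|$. Each of these bounds is $\mathcal{O}(T^{-1/2}\sqrt{\log(T/\delta')})$ with probability $1-\delta'$. Lemma~\ref{lemma:SGD_general} gives the same rate for $\|\theta_k^T-\theta^*\|$, provided the warm-start term $(q/(T+q))^2\|\theta_k^0-\theta^*\|^2$ is $\mathcal{O}(1/T)$. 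That holds because $\theta_k^0\in\Theta$, so $\|\theta_k^0-\theta^*\|\le 2R$, which makes the term $\mathcal{O}(1/T^2)$.

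\textbf{Step 3: union bound and obstacles.} Applying each result with $\delta'=\delta/(3K)$ and taking a union bound over $k$ and the three events gives $\|e_k\|^2=\mathcal{O}(T^{-1}\log(KT/\delta))$ for all $k$. The threshold $T_1(\delta)$ is $T_0(\delta/(3K))$ together with the analogous threshold from Proposition~\ref{prop:l_21_approx}. Substituting into Step~1 gives the stated rate. For the complexity claim, take $K=\mathcal{O}(\epsilon^{-1})$ and $T=\mathcal{O}(\epsilon^{-1}\log(\delta^{-1}\epsilon^{-1}))$; the total sample count is then $KT=\tilde{\mathcal{O}}(\epsilon^{-2})$.

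The main obstacle is that Algorithm~\ref{alg:hypergradient_descent} warm-starts $A_k^0=A_{k-1}^T$ and $L_k^0=L_{k-1}^T$. The averaging in $I_k^{t}$ and $L_k^t$ therefore carries stale gradients computed at earlier $v$'s, which may not match the per-$k$ setting of Theorem~\ref{theorem:hessian_inv_approx}. My first approach would be to read Theorem~\ref{theorem:hessian_inv_approx} and Proposition~\ref{prop:l_21_approx} as statements about estimators restarted within each outer iteration, which seems to be how they are stated. If the warm start must be handled, I would bound the influence of earlier rounds using the Lipschitz continuity of $\theta^*(\cdot)$ together with $\|v_{k}-v_{k-1}\|=\alpha\|\widehat\nabla\Phi(v_{k-1})\|$. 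A secondary subtlety is conditioning: $v_k$ is random and depends on past samples. Since each lemma holds conditionally on $v_k$ with fresh samples $\xi_k^t$, the union bound over $k$ still goes through after taking iterated conditional probabilities.
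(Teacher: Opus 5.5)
Your argument is correct, but it handles the inner-loop deviation $\|\theta_k^T-\theta^*(v_k)\|$ differently from the paper.

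\textbf{The paper's route.} Lemma~\ref{lemma:grad_error} uses the warm start $\theta_k^0=\theta_{k-1}^T$ to bound
$\|\theta_{k-1}^T-\theta^*(v_k)\|\le\|\theta_{k-1}^T-\theta^*(v_{k-1})\|+\frac{\alpha L}{\mu}\bigl(\|\widehat{\nabla}\Phi(v_{k-1})-\nabla\Phi(v_{k-1})\|+\|\nabla\Phi(v_{k-1})\|\bigr)$.
It then unrolls the resulting recursion, which has contraction factor $\Lambda_1(T)=\mathcal{O}(T^{-2})$. The coupled terms $\Lambda_2(T)\|\nabla\Phi(v_j)\|^2$ must then be absorbed into the left side of the descent inequality. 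This is why $T_1(\delta)$ contains the extra requirement $T\ge\hat{T}_0$, and why the union bound runs over $\mathcal{O}(K^2)$ events.

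\textbf{Your route.} You use $\|\theta_k^0-\theta^*(v_k)\|\le 2R$ instead. The initialization term of Lemma~\ref{lemma:SGD_general} is then $\mathcal{O}(R^2/T^2)$, which is dominated by the $\mathcal{O}(\log(1/\delta')/T)$ noise term. This decouples the outer iterations, removes the absorption step, and needs only $3K$ events.

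\textbf{What each buys.} Nothing is lost in the rate. The paper itself uses the same diameter bound in Lemma~\ref{lemma:thm_Hinv_optimization_error} and for $\|\theta_1^T-\theta^*(v_1)\|$. Its constants also already depend on $R$ through $LR+C_1$. The recursion is the tool you would need only when no diameter bound on the initialization error is available, or when the per-round noise term is made smaller.

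Your $K$-dependent threshold $T_0(\delta/(3K))$ is also the accurate one. The paper's final rescaling of $\delta$ by $K(K+5)/2$ implicitly requires the same kind of adjustment to $T_1$. Your threshold is $\mathcal{O}(\log(K/\delta))$, so it is compatible with the stated choice of $T$.

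The warm-start ``obstacle'' you raise is not one. In the recursions for $I_k^{t}$ and $L_k^{t}$, the previous value is weighted by $t/(t+1)$, which vanishes at $t=0$. So $I_k^T$ and $L_k^T$ are exactly averages over the $k$-th trajectory, and the analysis takes $A_k^T=(I_k^T)^{-1}$. This is how the paper applies Theorem~\ref{theorem:hessian_inv_approx} and Proposition~\ref{prop:l_21_approx} at each $k$. Your first reading is therefore right, and the Lipschitz-based fallback is unnecessary.
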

\begin{remark}
NHGD achieves an $\epsilon$-stationary point with total sample complexity $\tilde{\mathcal{O}}(\epsilon^{-2})$, matching that of optimize-then-approximate methods (see Table~\ref{tab:comparison}). 
Notably, the Hessian inverse approximation in NHGD can be performed in parallel with the inner-loop SGD, incurring no additional runtime overhead. 
This computational efficiency is further validated by the numerical results in Section~\ref{sec:numerical_exp}.
\end{remark}

\subsubsection{Proof Sketch of Theorem~\ref{theorem:outer_convergence}}

\paragraph{Step 1: Descent inequality for the outer level.}
We begin by establishing a descent inequality for the outer-level objective. Under Assumptions~\ref{as:l_inner} and~\ref{as:f_smoothness}, the outer iterates satisfy 
\[ \sum_{k=1}^{K} \|\nabla\Phi(v_k)\|^2 \le
16L_v(\Phi(v_1) - \Phi^*) +
3
\sum_{k=1}^{K}
\|\nabla\Phi(v_k) - \widehat{\nabla}\Phi(v_k)\|^2,
\]
where $\Phi^*$ denotes the optimal value of the overall problem. A formal statement of this result is provided in
Lemma~\ref{lemma:outer1}. This shows that establishing convergence guarantee reduces to bounding the hypergradient approximation error
$\|\nabla\Phi(v_k) - \widehat{\nabla}\Phi(v_k)\|$.

\paragraph{Step 2: Decomposition of the hypergradient approximation error.}
Under the regularity conditions imposed by our assumptions, the hypergradient approximation error admits the following decomposition:
\begin{align*}
\|\nabla\Phi(v_k) - \widehat{\nabla}\Phi(v_k)\|  
&\le
\mathcal{O}\left(
\|\theta_k^{T} - \theta^*(v_k)\|\right)+ \mathcal{O}\left(
\|A_k^{T} - H(\theta^*(v_k))^{-1}\|\right) \nonumber\\
& \quad+ \mathcal{O}\left(
\bigl\|
(L_k^{T} - \nabla_{\theta,v}^2\ell(v_k,\theta^*(v_k))) A_k^{T}
\bigr\|\right),
\end{align*}
where the formal statement of this result is provided in Lemma~\ref{lemma:grad_error_decompose}. Consequently, controlling the hypergradient error reduces to bound:  
(i) the inner-level SGD deviation $\|\theta_k^{T} - \theta^*(v_k)\|$,  
(ii) the Hessian inverse approximation error $\|A_k^{T} - H(\theta^*(v_k))^{-1}\|$,  
and (iii) the cross-partial derivative approximation error  
$\|L_k^{T} - \nabla_{\theta,v}^2\ell(v_k,\theta^*(v_k))\|$. 

We next bound the inner-level SGD deviation $\|\theta_k^{T} - \theta^*(v_k)\|$, whose error is coupled with
$\|\nabla\Phi(v_k)\|$, reflecting the intrinsic coupling between the inner-level SGD deviation and the outer objective. The Hessian inverse and cross-partial approximation errors, however, admit bounds independent of $v_k$, and are therefore addressed separately at the end.

\paragraph{Step 3: Bounding the inner-level SGD deviation.}
Lemma~\ref{lemma:SGD_general} shows that inner-level SGD error depends on $\|\theta_k^0 - \theta^*(v_k)\|$. To further control this, we use the  
warm-start technique $\theta_k^0 = \theta_{k-1}^T$, which yields the following recursion: $\|\theta_{k-1}^{T} - \theta^*(v_k)\|\le
\|\theta_{k-1}^{T} - \theta^*(v_{k-1})\|+ \frac{\alpha L}{\mu}
\|\widehat{\nabla}\Phi(v_{k-1}) - \nabla\Phi(v_{k-1})\| + \frac{\alpha L}{\mu}
\|\nabla\Phi(v_{k-1})\|$. Applying this recursion yields a high-probability control of the hypergradient error, leaving only the Hessian inverse and cross-partial derivative approximation errors to be handled separately.

\paragraph{Step 4: Bounding the cross-partial derivative approximation error.}
Theorem~\ref{theorem:hessian_inv_approx} controls the Hessian inverse approximation error. An analogous argument yields the following sample-complexity bound for the cross-partial derivative approximation error, with detailed proof in Appendix~\ref{app:proof_l_21_approx}.

\begin{proposition}
\label{prop:l_21_approx}
Under Assumptions~\ref{as:l_inner} and~\ref{as:unbias_grad}, for any 
$\delta\in(0,1)$, with probability at least $1-\delta$, we have
\begin{align*}
    \;\|L_k^{T} - \nabla_{\theta,v}^2 \ell(v_k,\theta^*(v_k))\|
=
\mathcal{O}\!\left(
\frac{1}{\sqrt{T}}
\sqrt{1+\log\!\left(\frac{1+T}{\delta}\right)}
\right).
\end{align*}
\end{proposition}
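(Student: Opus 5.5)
The plan follows the structure of the proof of Theorem~\ref{theorem:hessian_inv_approx}, with the Jacobian $J_k^t:=\nabla_{\theta,v}^2 l(v_k,\theta_k^t,\xi_k^t)$ playing the role of $G_k^t$. There is no matrix inverse here, so Step~1 of that proof is unnecessary. We split the error as
\begin{align*}
\bigl\|L_k^{T}-\nabla_{\theta,v}^2\ell(v_k,\theta^*(v_k))\bigr\|
\le \frac{1}{T}\Bigl\|\sum_{t=0}^{T-1}\bigl(J_k^t-\mathbb{E}_t[J_k^t]\bigr)\Bigr\|
+\frac{1}{T}\sum_{t=0}^{T-1}\bigl\|\mathbb{E}_t[J_k^t]-\nabla_{\theta,v}^2\ell(v_k,\theta^*(v_k))\bigr\|.
\end{align*}
This is stated for the averaged estimator over the current inner loop; if $L_k^0=L_{k-1}^T$ is carried over, we treat the weighting as negligible or as contributing an $\mathcal{O}(1/T)$ term.

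\textbf{Stochastic term.} By Assumption~\ref{as:unbias_grad}, $\mathbb{E}_t[J_k^t]=\nabla_{\theta,v}^2\ell(v_k,\theta_k^t)$. The summands are therefore martingale differences with respect to the filtration generated by $(v_k,\theta_k^0,\xi_k^0,\dots,\xi_k^{t-1})$.

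Joint $L$-smoothness of $l$ (Assumption~\ref{as:l_inner}(i)) gives $\|J_k^t\|\le L$ uniformly in $\xi$. Hence each difference has norm at most $2L$.

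The differences are rectangular $d_2\times d_1$ matrices. We apply the matrix Azuma inequality to their Hermitian dilations, which gives, with probability at least $1-\delta$,
\[
\Bigl\|\sum_{t}(J_k^t-\mathbb{E}_t J_k^t)\Bigr\|\le 2L\sqrt{8T\log\!\bigl((d_1+d_2)/\delta\bigr)}.
\]
Dividing by $T$ yields an $\mathcal{O}(\sqrt{\log(1/\delta)/T})$ bound, with the dimension absorbed into the constant.

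\textbf{Optimization term.} By the Lipschitz continuity of $\nabla_{\theta,v}^2\ell(v,\cdot)$ (Assumption~\ref{as:l_inner}(ii)), this term is at most $\frac{L_{\ell_{\theta,v}}}{T}\sum_{t=0}^{T-1}\|\theta_k^t-\theta^*(v_k)\|$.

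We apply Lemma~\ref{lemma:SGD_general} at each $t=0,\dots,T-1$ with confidence $\delta/(2T)$ and take a union bound; this produces the $\log((1+T)/\delta)$ factor. Using $\|\theta_k^0-\theta^*(v_k)\|\le 2R$, which holds because both points lie in $\Theta$, and $\sqrt{a+b}\le\sqrt a+\sqrt b$, we get
\[
\|\theta_k^t-\theta^*(v_k)\|\le \frac{2Rq}{t+q}+\sqrt{\frac{c'\log(2eT/\delta)}{t+q}}.
\]
Summing over $t$ gives $\sum_t\frac{q}{t+q}=\mathcal{O}(q\log(1+T/q))$ and $\sum_t (t+q)^{-1/2}=\mathcal{O}(\sqrt{T})$. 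After dividing by $T$, the optimization term is therefore
\[
\mathcal{O}\!\left(\frac{\log(1+T)}{T}+\sqrt{\frac{\log((1+T)/\delta)}{T}}\right).
\]
The first piece is dominated by the second.

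\textbf{Combining and the main obstacle.} We allocate $\delta/2$ to each term and take a union bound, which gives the claimed $\mathcal{O}\bigl(T^{-1/2}\sqrt{1+\log((1+T)/\delta)}\bigr)$.

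The main obstacle is bookkeeping rather than conceptual. The union bound over the SGD trajectory must be handled so that only a $\log T$ factor is lost. We must also check that the constant $c'$ in Lemma~\ref{lemma:SGD_general} does not depend on $v_k$, which holds by the uniform assumptions. Note that the proposition as stated omits Assumption~\ref{as:bound_grad_inner}, yet invoking Lemma~\ref{lemma:SGD_general} requires it; we would use it implicitly or add it to the hypotheses. Finally, the warm-start initialization of $L_k^0$ must be shown not to affect the rate.
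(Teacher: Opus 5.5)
Your proposal is correct and follows essentially the same route as the paper's proof. It uses the same split into a martingale term, handled by matrix Azuma on Hermitian dilations with $\|\bar\Delta_t\|\le 2L$, and an optimization term, bounded by $\frac{L_{\ell_{\theta,v}}}{T}\sum_t\|\theta_k^t-\theta^*(v_k)\|$ together with a union bound of Lemma~\ref{lemma:SGD_general} over $t$. Your side remarks are also right: the paper's proof does implicitly need Assumption~\ref{as:bound_grad_inner} through Lemma~\ref{lemma:SGD_general}. The warm start $L_k^0=L_{k-1}^T$ is a non-issue, since the update at $t=0$ gives it weight $0/1$, so $L_k^T$ is exactly the average over the current inner loop.
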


Finally, combining the bounds on the inner-level SGD deviation, the Hessian inverse approximation error, the cross-partial derivative approximation error and the descent inequality 
yields the stated convergence rate for the overall problem. 
This completes the proof of Theorem~\ref{theorem:outer_convergence}. 
The full details are provided in Appendix~\ref{app:outer_convergence}.

\section{Numerical Experiments}\label{sec:numerical_exp}
In this section, we evaluate NHGD on three representative bilevel optimization tasks: (1) hyper-data cleaning~\cite{franceschi2017forward}, 
(2) data distillation~\cite{lorraine2020optimizing}, and 
(3) physics-informed learning for PDE-constrained optimization~\cite{hao2023bi}. 
We first demonstrate the superiority of the proposed Hessian inverse approximation over two commonly used alternatives in the standard double-loop framework: 
a Neumann series approximation (\textbf{Neumann}) and a conjugate-gradient-based approximation (\textbf{CG}). 
We then compare NHGD with state-of-the-art baselines, including double-loop methods such as \textbf{stocBiO}~\cite{ji2021bilevel} and \textbf{AmIGO}~\cite{arbel2021amortized}, 
as well as single-loop methods including \textbf{TTSA}~\cite{hong2023two} and \textbf{SOBA}~\cite{dagreou2022framework}. \emph{Experimental details, including detailed problem setting and quantitative and additional results are provided in Appendix~\ref{app:experiments}.}

\paragraph{Hyper-data Cleaning}

Hyper--data cleaning learns sample weights to downweight corrupted labels (e.g., 0 for noisy samples and 1 for clean ones). Given a training set $\{(x_i^{\text{tr}}, y_i^{\text{tr}})\}_{i=1}^N$, 
each label $y_i^{\text{tr}} \in \{1,\dots,C\}$ is corrupted independently with probability. Here, we consider the MNIST dataset~\cite{lecun2002gradient} with label noise injected at a corruption rate of $p=0.5$. 
The inner-level problem trains a classifier on the weighted training set, while the outer-level problem updates the weights to maximize  performance on a clean validation set. 
We report results on a held-out test set, where successful downweighting of corrupted samples should improve generalization and test accuracy. This problem can be formulated as the following bilevel problem:
\begin{align*}
    \min_{v \in \mathbb{R}^N} \quad 
    & \frac{1}{|\mathcal{D}^{\mathrm{val}}|}
      \sum_{(x,y)\in\mathcal{D}^{\mathrm{val}}}
      -\log \hat{p}(x,y;\theta^*(v)), \\
    \text{s.t.} \quad 
    & \theta^*(v) 
    = 
    \argmin_{\theta \in \mathbb{R}^{C\times(d+1)}} 
    \frac{1}{N}
    \sum_{i=1}^N  
    -\sigma(v_i)\log \hat{p}(x_i^{\mathrm{tr}},y_i^{\mathrm{tr}};\theta)
    + \lambda\|\theta\|^2,
\end{align*}
where $\sigma(v)=\mathrm{Clip}(v,0,1)$ and softmax probability is $\hat{p}(x,y;\theta)
=
\frac{\exp(\langle \theta, (x;\mathbf{1})\rangle_y)}
     {\sum_{j=1}^{N} \exp(\langle \theta, (x;\mathbf{1})\rangle_j)}$.

Figure~\ref{fig:data_clean_combined} compares \textsc{NHGD} with Neumann and CG approximations (left), and with stocBiO, AmIGO, TTSA and SOBA (right). 
In general, increasing the number of truncation terms improves the Neumann/CG Hessian inverse approximation. 
Nevertheless, NHGD converges substantially faster and achieves the highest test accuracy, even when Neumann/CG use a large number of truncation terms (e.g., 40). 
Moreover, compared with stocBiO, AmIGO, TTSA, and SOBA, \textsc{NHGD} consistently converges faster in terms of test accuracy and attains better final performance.

\begin{figure}[h]
    \centering
    \begin{subfigure}[t]{0.43\linewidth}
        \centering
        \includegraphics[width=\linewidth]{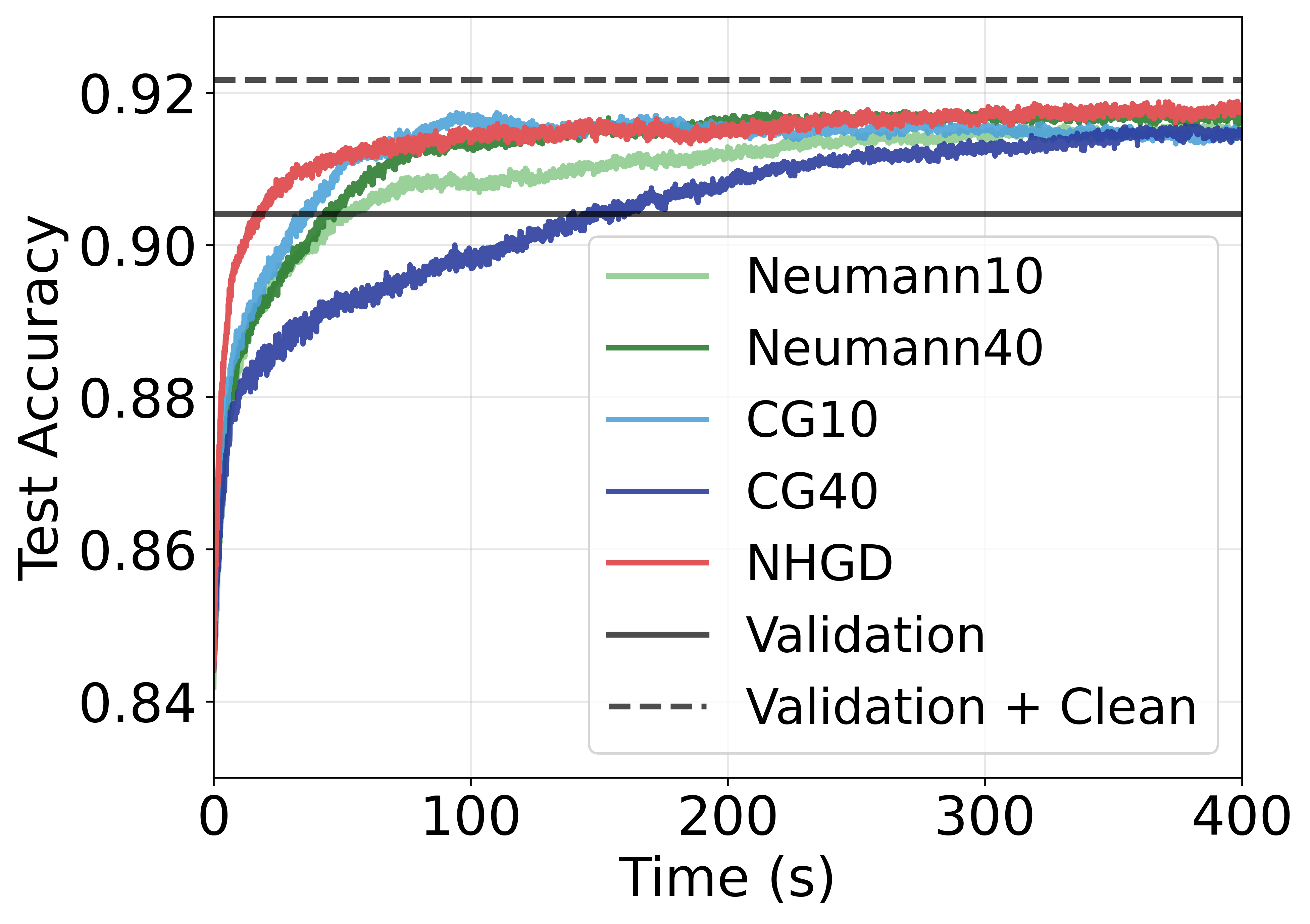}
    \end{subfigure}
    \hfill
    \begin{subfigure}[t]{0.43\linewidth}
        \centering
        \includegraphics[width=\linewidth]{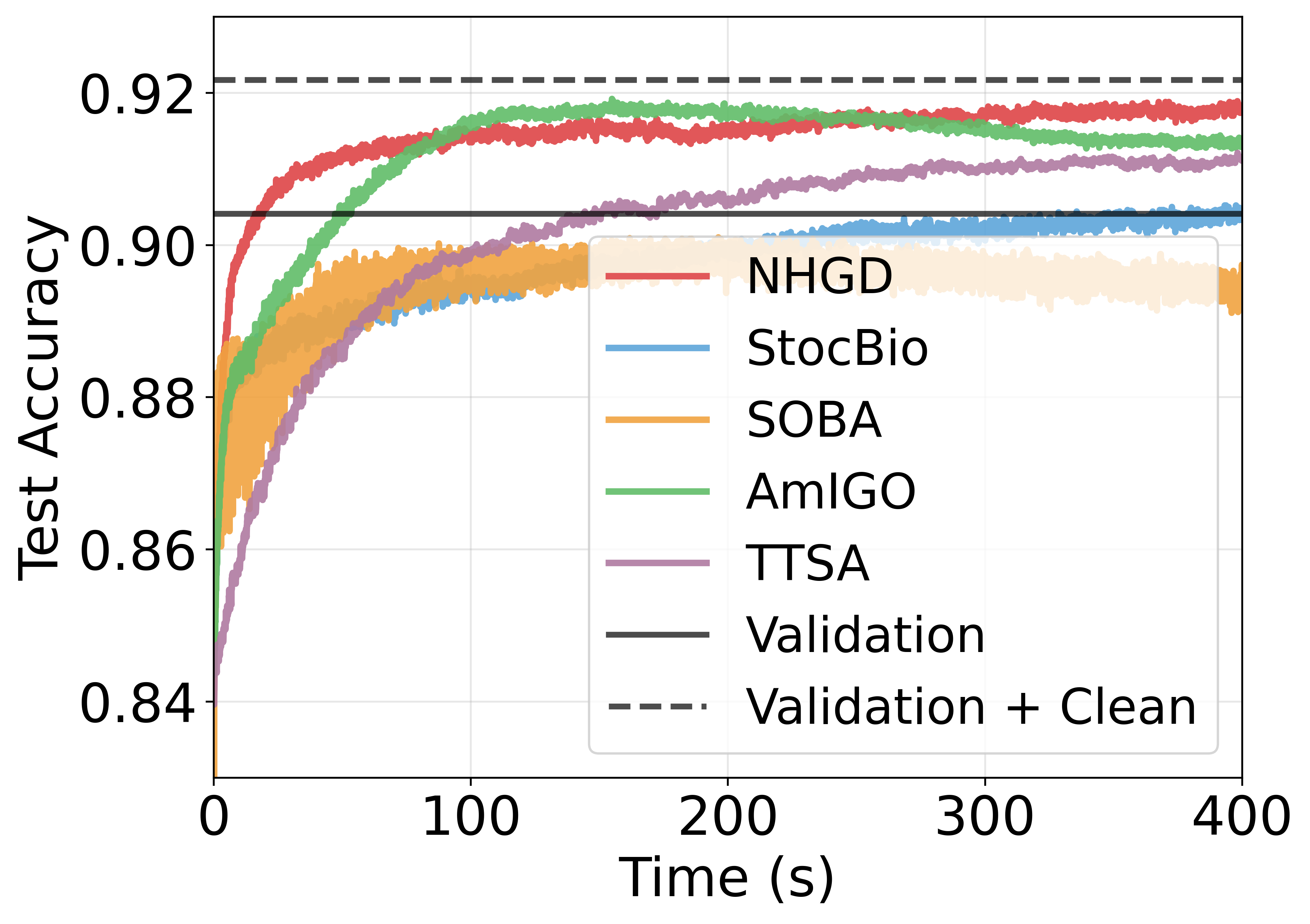}
    \end{subfigure}
    \caption{Test accuracy for the Hyper-data Cleaning task.}
    \label{fig:data_clean_combined}
    \vspace{-0.6cm}
\end{figure}

\paragraph{Data Distillation}\label{exp:data_distillation}

Data distillation aims to construct a small yet representative synthetic dataset that enables models trained on the distilled data to achieve performance comparable to training on the full dataset. We distill the Fashion-MNIST dataset~\cite{xiao2017fashion} and formulate the task as a bilevel optimization problem: the inner-level problem trains a linear classifier on the distilled dataset ($n=5$ samples per class), while the outer-level problem optimizes the distilled samples to maximize classifier's performance on the original training distribution. The quality of the distilled dataset is evaluated by test accuracy on a held-out test set. The outer variable $X \in \mathbb{R}^{d \times n \times C}$ encodes all distilled
samples, where $X_i^k \in \mathbb{R}^d$ denotes the $i$-th distilled image of class $k$.
The bilevel formulation is given by:
\begin{align*}
    \min_{X \in \mathbb{R}^{d \times n \times C}} \quad 
    & \frac{1}{|\mathcal{D}^{\mathrm{val}}|}
      \sum_{(x,y)\in\mathcal{D}^{\mathrm{val}}}
      -\log \hat{p}(x,y;\theta^*(X)), \\
    \text{s.t.} \quad
    & \theta^*(X) = 
    \argmin_{\theta \in \mathbb{R}^{C \times d}}
      \frac{1}{nC} \sum_{k=1}^C \sum_{i=1}^n 
      -\log \hat{p}(X_i^k, k; \theta)
      + \lambda\|\theta\|^2,
\end{align*}
where the softmax probability is $\hat{p}(x,y;\theta)
=
\frac{\exp(\langle \theta, x\rangle_y)}
     {\sum_{j=1}^N \exp(\langle \theta, x\rangle_j)}.$

As shown in Figure~\ref{fig:data_dis_testacc_combined_5} (left), NHGD converges faster and achieves higher test accuracy, even when Neumann/CG use a large number of truncation terms (e.g., 40). Moreover, as shown in Figure~\ref{fig:data_dis_testacc_combined_5} (right), NHGD also consistently outperforms stocBiO, AmIGO, TTSA, and SOBA, converging faster in test accuracy and achieving superior final performance.

\begin{figure}[h]
    \centering
    \begin{subfigure}[t]{0.46\linewidth}
        \centering
        \includegraphics[width=\linewidth]{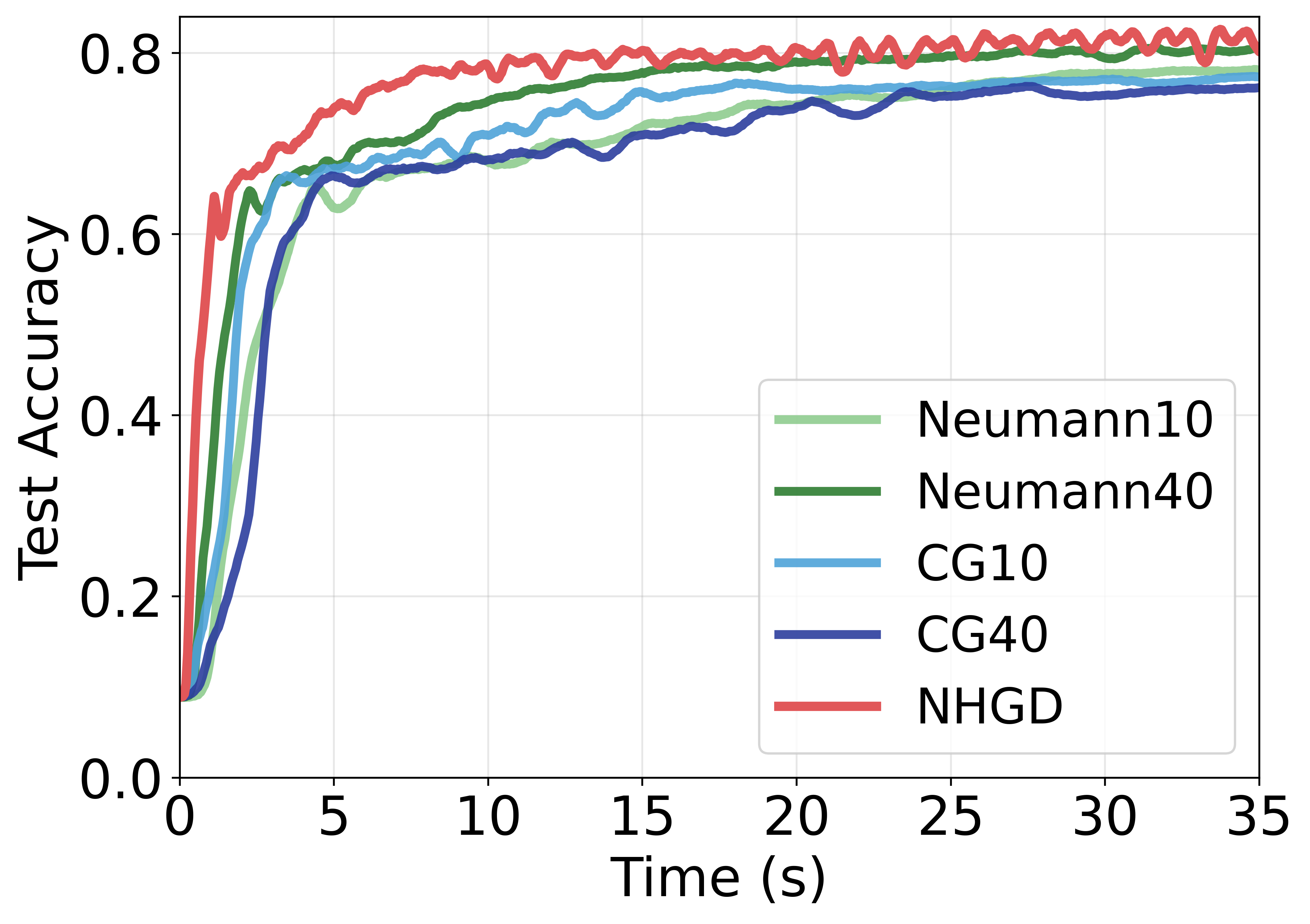}
    \end{subfigure}
    \hfill
    \begin{subfigure}[t]{0.46\linewidth}
        \centering
        \includegraphics[width=\linewidth]{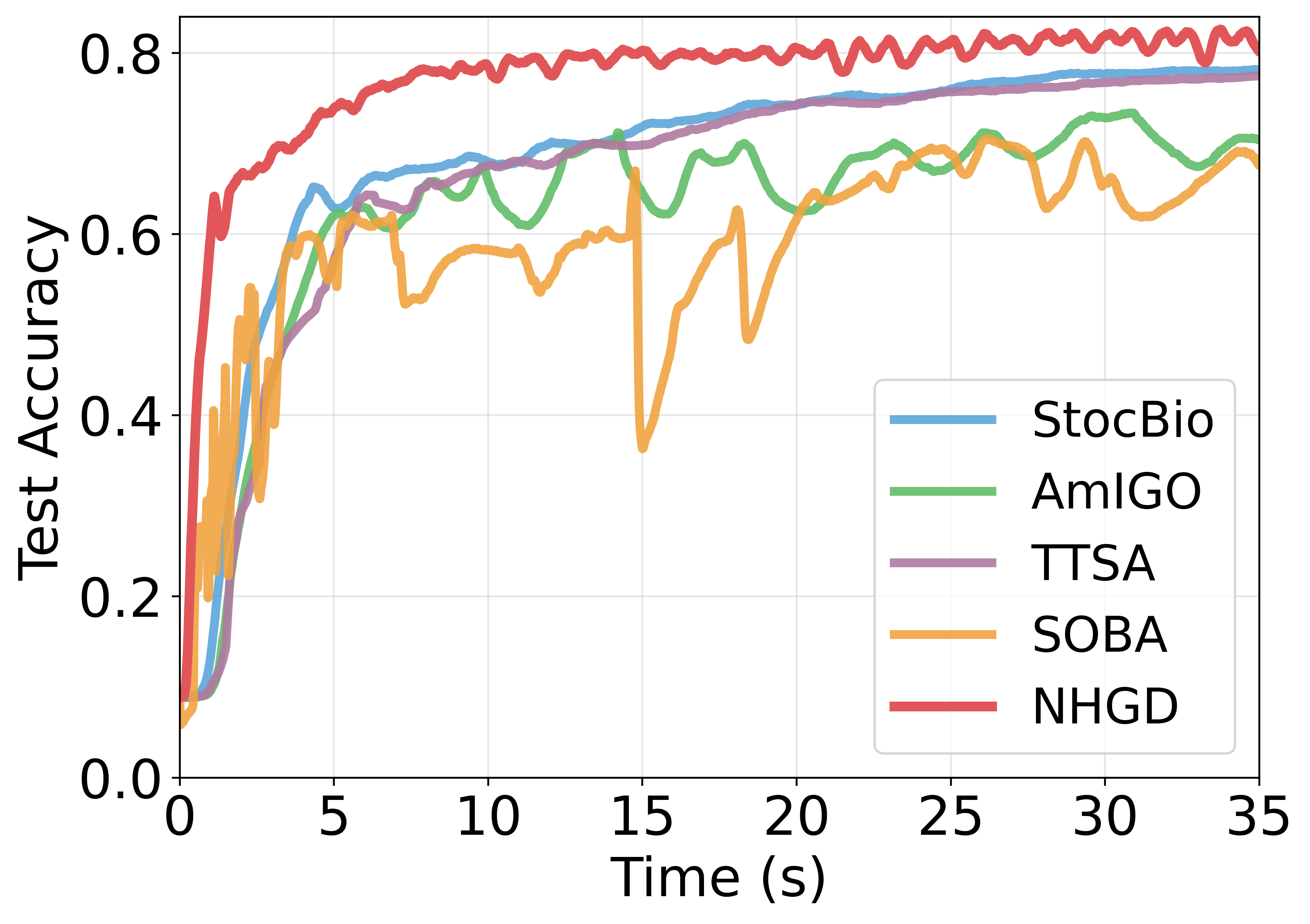}
    \end{subfigure}
    \caption{Test Accuracy for Data Distillation task.}
    \label{fig:data_dis_testacc_combined_5}
    \vspace{-0.6cm}
\end{figure}
\paragraph{PDE Constrained Optimization}

Bilevel optimization has recently emerged as a promising approach for PDE-constrained optimization, particularly in physics-informed machine learning (PIML). In this formulation~\cite{hao2023bi}, the inner-level problem solves the PDE for a given control variable by training a state network, while the outer-level problem updates the control variable to minimize the target objective, yielding a large-scale bilevel problem with high-dimensional outer variables and a highly nonconvex inner objective.
Among the baselines considered in~\citet{hao2023bi} (Truncated Unrolled Differentiation~\cite{shaban2019truncated}, T1--T2~\cite{luketina2016scalable}, and Neumann series~\cite{lorraine2020optimizing}), the Broyden-based method achieves the strongest performance and is therefore used as a state-of-the-art reference.
We apply NHGD to the same PDE-constrained task of optimizing the time-varying temperature field of a 2D heat equation, to demonstrate its effectiveness beyond convex settings and directly compare it with the Broyden-based approach.

The PDE-constrained optimization problem arising from the 2D heat equation is as follows:
\begin{align*}
    \min_{f}\;   \int_{\Omega \times [0,2]} &|u - u_\text{ref}|^{2}\, dxdydt \\[6pt]
    \text{s.t. } \quad 
    \frac{\partial u}{\partial t} - \nu \Delta u &= f, \quad (x,y,t) \in \Omega \times [0,2], \\[6pt]
    u(x,y,t) &= 0, \quad (x,y,t) \in \partial\Omega \times [0,2], \\[6pt]
    u(x,y,0) &= 0, \quad (x,y) \in \Omega .
\end{align*}

Let $u(x,y,t)$ denote the state function and $f(t)$ the control function to be optimized.  Following common practice in physics-informed machine learning, both the state and control are represented by neural networks: $u_\theta(x,y,t)$ (state network) and $f_v(t)$ (control network), parameterized by $\theta$ and $v$, respectively.  
This enables reformulating the PDE-constrained problem as a bilevel optimization problem:
\begin{align*}
\min_{v} \quad & 
\int_{\Omega\times[0,2]} |\,u_{\theta^*(v)}(x,y,t) - u_{\mathrm{ref}}(x,y,t)\,|^2 \, dx\, dt, \\
\text{s.t.} \quad & 
\theta^*(v)
= \argmin_{\theta}\Bigg[
\int_{\Omega\times[0,2]} 
|\mathcal{F}(u_\theta, f_v)(x,y,t)|^2 \, dx\, dt
+
\int_{\partial\Omega\times[0,2]} 
|\mathcal{B}(u_\theta, f_v)(x,y,t)|^2 \, dS\, dt
\Bigg],
\end{align*}
where the target state is $u_\text{ref}(x,y,t)=16x(1-x)y(1-y)\sin(\pi t)$, the PDE residual is $\displaystyle \mathcal{F}(u_\theta, f_v)
:= \partial u_\theta/{\partial t} - \nu\Delta u_\theta - f_v$ and the boundary and initial conditions are enforced through
\[
\mathcal{B}(y_\theta, u_v)
:= 
\begin{pmatrix}
u_\theta(x,y,t), & (x,y,t)\in\partial\Omega\times[0,2],\\[4pt]
u_\theta(x,y,0), & (x,y)\in\Omega
\end{pmatrix}.
\]

Figure~\ref{fig:piml_outerloss} reports the outer objective value (PDE constrained optimization objective) versus wall-clock time, showing that NHGD achieves a comparable final objective while converging  $3$--$4\times$ faster than the Broyden-based method. 
\begin{figure}[h]
    \centering    \includegraphics[width=0.46\linewidth]{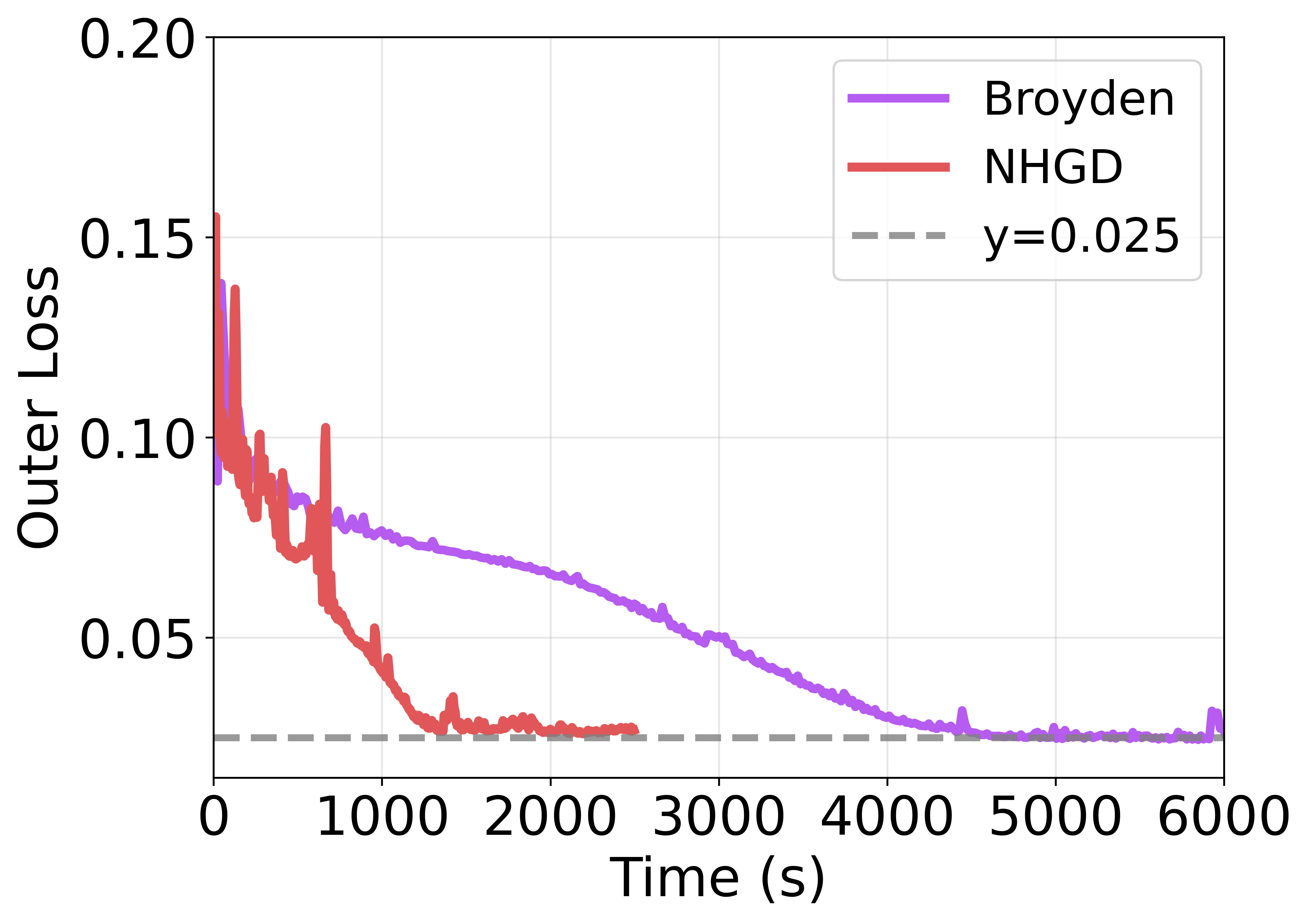}
    \caption{Outer loss for PDE Constrained Optimization task}
    \label{fig:piml_outerloss} 
    \vspace{-0.6cm}
\end{figure}

\section{Conclusion}
We propose \emph{Natural Hypergradient Descent} (NHGD), an
\textit{optimize-and-approximate} bilevel optimization algorithm that
significantly reduces computational time overhead by parallelizing
hypergradient estimation with inner-level optimization.
NHGD leverages the statistical equivalence between the Hessian and the
FIM at the inner optimum, enabling efficient
Hessian inverse approximation via stochastic gradients already computed
during inner-loop SGD.
We established high-probability sample complexity guarantees, showing that NHGD
matches the sample complexity of existing
\textit{optimize-then-approximate} methods.
Experiments on hyper–data cleaning, data distillation, and
physics-informed learning for PDE-constrained optimization demonstrate
that NHGD achieves comparable or superior performance to state-of-the-art
baselines.

NHGD represents an initial step toward leveraging statistical structure for algorithm design in bilevel optimization. Several directions remain open for future work, including extending the statistical Hessian inverse approximation framework to single-loop algorithms and developing efficient methods for settings in which the inner-level data distribution depends on the outer variable.

\section*{Acknowledgment}
The authors thank the NVIDIA Academic Grant Program and the UMN Data Science Initiative for their support.

\bibliographystyle{apalike}
\bibliography{ref}

\newpage
\appendix

\section{Parallel Implementation of NHGD}\label{app:parallel_implementation}

Algorithm~\ref{alg:hypergradient_descent} can be efficiently parallelized across two devices. Device 1 handles the inner-loop SGD, while Device 2 synchronously approximate the terms of hypergradient. This design requires only one-directional communication (from Device 1 to Device 2) at each inner iteration, making it highly efficient for distributed implementations. The parallel implementation is detailed in Algorithm~\ref{alg:parallel_nhgd}.

\begin{algorithm*}[htbp]
    \caption{Parallel Natural Hypergradient Descent}
    \label{alg:parallel_nhgd}
    \begin{minipage}[t]{0.48\textwidth}
        \textbf{Device 1: Inner-loop SGD}
        \begin{algorithmic}[1]
        \STATE \textbf{Input:} $v_1$, $\theta_0^T$, $\{\eta_t\}$, $T$, $K$
        \FOR{$k=1,\ldots, K$}
            \STATE Initialize $\theta_k^{0}=\theta_{k-1}^{T}$
            \FOR{$t=0,\ldots,T-1$}
                \STATE Sample data point $\xi_k^t$
                \STATE Compute gradients:
                \STATE \quad $g_{\theta}^t = \nabla_\theta l (v_k,\theta_k^t,\xi_k^t)$
                \STATE \quad $B_{\theta,v}^t = \nabla_{\theta,v}^2 l (v_k,\theta_k^t, \xi_k^t)$
                \STATE Update inner variable:
                \STATE \quad $\theta_{k}^{t+1} = \Pi_{\Theta} \left(\theta_k^t - \eta_t g_{\theta}^t\right)$
                \STATE \colorbox{yellow!20}{\parbox{\dimexpr\linewidth-2\fboxsep}{\strut \textbf{Send} $g_{\theta}^t$, $B_{\theta,v}^t$ to Device 2\strut}}
            \ENDFOR
            \STATE Compute $\nabla_\theta f(v_k, \theta_k^T)$, $\nabla_v f(v_k, \theta_k^T)$
            \STATE \colorbox{yellow!20}{\parbox{\dimexpr\linewidth-2\fboxsep}{\strut \textbf{Send} $\nabla_\theta f$, $\nabla_v f$ to Device 2\strut}}
            \STATE \colorbox{green!20}{\parbox{\dimexpr\linewidth-2\fboxsep}{\strut \textbf{Receive} $\widehat{\nabla} \Phi(v_k)$ from Device 2\strut}}
            \STATE Update outer variable:
            \STATE \quad $v_{k+1} = v_k - \alpha \widehat{\nabla} \Phi(v_k)$
        \ENDFOR
        \end{algorithmic}
    \end{minipage}
    \hfill
    \begin{minipage}[t]{0.48\textwidth}
\textbf{Device 2: Hypergradient Approximation}
        \begin{algorithmic}[1]
        \STATE \textbf{Input:} $A_0^T$, $L_0^{T}$
        \FOR{$k=1,\ldots, K$}
            \STATE Initialize $A_k^0=A_{k-1}^T$, $L_k^0=L_{k-1}^T$
            \FOR{$t=0,\ldots,T-1$}
                \STATE \colorbox{green!20}{\parbox{\dimexpr\linewidth-2\fboxsep}{\strut \textbf{Receive} $g_{\theta}^t$, $B_{\theta,v}^t$ from Device 1\strut}}
                \STATE \colorbox{cyan!15}{\parbox{\dimexpr\linewidth-2\fboxsep}{\strut Update EFIM inverse via Eq.~\eqref{eq:inv_efim_update}:\strut}}
                \STATE \colorbox{cyan!15}{\parbox{\dimexpr\linewidth-2\fboxsep}{
                $A_k^{t+1}=\frac{t+1}{t}A_k^{t} - \frac{\frac{t+1}{{t}^2}A_k^{t}g_{\theta}^t\left(A_k^{t}g_{\theta}^t\right)^\top}{1+\frac{1}{t}(g_{\theta}^t)^\top A_k^{t}g_{\theta}^t}$
                }}
                \STATE Update cross-partial derivative via Eq.~\eqref{eq:l_21_update}:
                \STATE \quad $L^{t+1}_k=\frac{t}{t+1}L^{t}_k + \frac{1}{t+1} B_{\theta,v}^t$
            \ENDFOR
            \STATE \colorbox{green!20}{\parbox{\dimexpr\linewidth-2\fboxsep}{\strut \textbf{Receive} $\nabla_\theta f$, $\nabla_v f$ from Device 1\strut}}
            \STATE Compute hypergradient approximation:
            \STATE \quad 
            {\small$\widehat{\nabla} \Phi(v_k)=\nabla_vf(v_k, \theta_k^T)- (L_k^T)^\top A_k^T \nabla_\theta f(v_k,\theta_k^T)$}
            \STATE \colorbox{yellow!20}{\parbox{\dimexpr\linewidth-2\fboxsep}{\strut \textbf{Send} $\widehat{\nabla} \Phi(v_k)$ to Device 1\strut}}
        \ENDFOR
        \end{algorithmic}
    \end{minipage}
    \vspace{0.5em}
    
    \noindent\textbf{Communication:} Yellow boxes indicate sending data; green boxes indicate receiving data. The cyan box highlights the EFIM inverse update, which is the computational bottleneck parallelized on Device 2.
\end{algorithm*}

The parallel implementation in Algorithm~\ref{alg:parallel_nhgd} demonstrates that NHGD naturally decomposes into two independent computational streams with minimal communication overhead. At each inner iteration, Device 1 only needs to send the computed gradients to Device 2, which independently performs the EFIM inverse and cross-partial derivative updates. This one-directional communication pattern minimizes synchronization costs and enables efficient scaling to distributed computing environments.

\newpage
\section{Auxiliary Lemmas}\label{app:prep}
In this section, we present auxiliary lemmas used in the proof of the main result.
\begin{lemma}\label{lemma:assumption} Assumptions~\ref{as:l_inner},~\ref{as:unbias_grad} and~\ref{as:bound_grad_inner} imply the following: 
\begin{enumerate}[label=(\arabic*)]
\item Gradient $\nabla_\theta l(v,\theta,\xi)$ and $\nabla_v l(v,\theta,\xi)$ are $L$-Lipschitz continuous with respect to $\theta$ for any $v,\xi$. $\nabla_\theta \ell(v,\theta)$ and $\nabla_v \ell(v,\theta)$ are $L$-Lipschitz continuous with respect to $\theta$ for any $v$.

\item $\| \nabla_{\theta,v}^2l(v,\theta,\xi) \| \leq L$, for any $v, \theta,\xi.$

\item $\| \nabla_\theta l(v,\theta,\xi) \| \leq LR+C_1$, for any $v, \theta,\xi.$

\item Function $\ell(v,\theta)$ is $\mu$-strongly convex with respect to $\theta$ for any $v$.
\end{enumerate}
\end{lemma}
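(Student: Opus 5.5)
We derive each of the four items directly from Assumptions~\ref{as:l_inner}, \ref{as:unbias_grad} and~\ref{as:bound_grad_inner}, working pointwise for fixed $v$ and $\xi$ and then passing to expectations where needed.

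\textbf{Item (1).} By Assumption~\ref{as:l_inner}(i), $l(\cdot,\cdot,\xi)$ is jointly $L$-smooth, so its full gradient $(\nabla_v l,\nabla_\theta l)$ is $L$-Lipschitz in $(v,\theta)$. Fixing $v$ and restricting to variations in $\theta$, each block satisfies $\|\nabla_\theta l(v,\theta,\xi)-\nabla_\theta l(v,\theta',\xi)\|\le L\|\theta-\theta'\|$, and the same holds for the $v$-block. For $\ell$, we use unbiasedness (Assumption~\ref{as:unbias_grad}) to write $\nabla_\theta\ell=\mathbb{E}_\xi[\nabla_\theta l]$. Jensen's inequality for the norm then gives $\|\nabla_\theta \ell(v,\theta)-\nabla_\theta\ell(v,\theta')\|\le \mathbb{E}\|\nabla_\theta l(v,\theta,\xi)-\nabla_\theta l(v,\theta',\xi)\|\le L\|\theta-\theta'\|$, and identically for $\nabla_v\ell$.

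\textbf{Item (2).} Since $l$ is $C^2$ and jointly $L$-smooth, the full Hessian satisfies $\|\nabla^2 l\|\le L$. The cross block $\nabla^2_{\theta,v}l$ is a submatrix of it, obtained as $P_\theta \nabla^2 l\, P_v^\top$ with coordinate projections of norm at most one. Hence its operator norm is at most $L$.

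\textbf{Item (3).} This is the only item needing more than a direct specialization. First we bound $\nabla_\theta\ell$ on $\Theta$: by item (1), $\|\nabla_\theta\ell(v,\theta)-\nabla_\theta\ell(v,\theta^*(v))\|\le L\|\theta-\theta^*(v)\|$. We then need $\nabla_\theta\ell(v,\theta^*(v))=0$ and $\|\theta-\theta^*(v)\|\le R$; I would argue that $\theta^*(v)$ is the unconstrained stationary point lying in $\Theta$, with the distance controlled so that $\|\nabla_\theta \ell(v,\theta)\|\le LR$. This step is where care is needed. If $\theta^*$ is only the constrained minimizer, or the bound on $\|\theta-\theta^*\|$ is really $2R$, the constant changes to $2LR$. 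Under the paper's convention I expect the bound $LR$ to follow by measuring from the origin, using $\|\theta\|\le R$ and a vanishing gradient there. Given the bound on $\nabla_\theta\ell$, the triangle inequality with Assumption~\ref{as:bound_grad_inner} gives $\|\nabla_\theta l\|\le\|\nabla_\theta\ell\|+C_1\le LR+C_1$.

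\textbf{Item (4).} Strong convexity of $\ell$ is preserved under expectation: $\mu$-strong convexity of each $l(v,\cdot,\xi)$ means $l(v,\cdot,\xi)-\tfrac{\mu}{2}\|\cdot\|^2$ is convex. Averaging over $\xi$ preserves convexity, so $\ell(v,\cdot)-\tfrac\mu2\|\cdot\|^2$ is convex, which is the claim.
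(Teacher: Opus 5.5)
Items (1), (2) and (4) are fine. Item (1) matches the paper. For (2), your submatrix-of-the-Hessian argument is equivalent to, and a bit cleaner than, the paper's difference-quotient computation. For (4), ``$l-\frac{\mu}{2}\|\cdot\|^2$ is convex'' is equivalent to the paper's averaging of the first-order strong-convexity inequality, and does not even need unbiasedness.

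The gap is in item (3). To get $\|\nabla_\theta\ell(v,\theta)\|\le LR$ on $\Theta$, you propose anchoring at the origin ``with a vanishing gradient there.'' That requires $\nabla_\theta\ell(v,0)=0$, i.e.\ $\theta^*(v)=0$ for every $v$, which no assumption provides, so the step does not go through.

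The paper anchors at the minimizer instead:
$\|\nabla_\theta l(v,\theta,\xi)\|\le C_1+\|\nabla_\theta\ell(v,\theta)-\nabla_\theta\ell(v,\theta^*(v))\|\le C_1+L\|\theta-\theta^*(v)\|\le LR+C_1$.
This tacitly uses two facts. The first is $\nabla_\theta\ell(v,\theta^*(v))=0$, i.e.\ the minimizer over $\Theta$ is an unconstrained stationary point; the paper implicitly assumes this anyway when it uses the implicit-function formula for $\nabla\Phi$. The second is $\|\theta-\theta^*(v)\|\le R$.

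Your suspicion about the second fact is correct. With $\theta,\theta^*(v)\in\Theta$ one only gets $2R$, and the constant $LR+C_1$ is actually false in general. Take $l(v,\theta,\xi)=\frac{L}{2}\|\theta-c\|^2$ with $\|c\|=0.9R$. Then $\mu=L$, the noise is zero (so $C_1=0.1LR$ is admissible), and $\theta^*(v)=c$ is interior. At $\theta=-c\in\Theta$ we get $\|\nabla_\theta l\|=1.8LR>LR+C_1$.

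Your side remark also needs correcting. If $\theta^*(v)$ were only a constrained minimizer on $\partial\Theta$, then $\|\nabla_\theta\ell(v,\theta^*(v))\|$ would be uncontrolled, and not even $2LR+C_1$ would follow.

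So drop the origin argument. Follow the paper's anchoring at $\theta^*(v)$, state $\nabla_\theta\ell(v,\theta^*(v))=0$ explicitly as an assumption, and conclude $\|\nabla_\theta l(v,\theta,\xi)\|\le 2LR+C_1$ for $\theta\in\Theta$. This only changes constants ($c'$, $\gamma(t,\delta)$, the Azuma variance parameters, and so on), not any rate downstream.
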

\begin{proof}
    We first show part (1). Since function $ l(v,\theta,\xi)$ is jointly $L$-smooth with respect to $(\theta, v)$, we can concluded that $\nabla_\theta l(v,\theta,\xi)$ and $\nabla_v l(v,\theta,\xi)$ are both $L$-Lipschitz continuous with respect to $\theta$ and $v$ for any $\xi$. Then, by the unbiased gradient Assumption~\ref{as:unbias_grad} and Jensen's Inequality, we can obtain that $\nabla_\theta \ell(v,\theta)$ and $\nabla_v \ell(v,\theta)$ are $L$-Lipschitz continuous with respect to $\theta$ for any $v$.

    Then, we show part (2). For any $x \in \mathbb{R}^{d_2}$ and unit vector $u$,
    \begin{align*}
        \| \nabla_{\theta,v}^2l(v,\theta,\xi)x \|&=\| \nabla_v\langle\nabla_\theta l(v,\theta,\xi),x\rangle \|\\
        &=\left\| \lim_{\epsilon \to 0} \frac{\langle\nabla_\theta l(v+\epsilon u,\theta,\xi)-\nabla_\theta l(v,\theta,\xi),x\rangle}{\epsilon} \right \| \\
        &= \lim_{\epsilon \to 0}   \frac{\left\|\langle \nabla_\theta l(v+\epsilon u,\theta,\xi)-\nabla_\theta l(v,\theta,\xi), x\rangle \right \|}{\epsilon} \\
        & \leq \lim_{\epsilon \to 0}   \frac{\left\| \nabla_\theta l(v+\epsilon u,\theta,\xi)-\nabla_\theta l(v,\theta,\xi)\right\| \left\|x \right \|}{\epsilon} \tag{Cauchy–Schwarz Inequality} \\
        & {\leq} \lim_{\epsilon \to 0}   \frac{L\epsilon \|u\| \left\|x \right \|}{\epsilon} \tag{L-Lipschitz continuity  of $\nabla_\theta l$ in $\theta$}\\
        &= L\|x\|.
    \end{align*}
    By the definition of matrix operator norm, we have $\displaystyle \| \nabla_{\theta,v}^2l(v,\theta,\xi) \|= \max_{x\in \mathbb{R}^{d_1}} \frac{\| \nabla_{\theta,v}^2l(v,\theta,\xi)x \|}{\|x\|} \leq L$, 
    which proves part (2).

Next, we show part (3). By gradient noise Assumption~\ref{as:bound_grad_inner}, L-smoothness of $\ell$ and the boundedness of domain $\Theta$, we obtain 
    \begin{align*}
        \| \nabla_\theta l(v,\theta,\xi) \| 
        &\leq \| \nabla_\theta l(v,\theta,\xi)-\nabla_\theta \ell(v,\theta) \|+\| \nabla_\theta \ell(v,\theta) - \nabla_\theta \ell(v,\theta^*(v)) \| \\
        &\leq C_1+L\|\theta -\theta^*(v)\|   \leq LR+C_1.
    \end{align*}

    Finally, we show part (4). The strongly convexity of $l(v,\theta,\xi)$ with respect to $\theta$, implies that, for any $\theta,\theta_1 \in \mathbb{R}^{d_2}$,
    \begin{align*}
        l(v,\theta,\xi) \geq l(v,\theta_1,\xi) + \langle \nabla_\theta l(v,\theta_1,\xi),\, \theta - \theta_1 \rangle + \frac{\mu}{2} \|\theta-\theta_1\|^2. 
    \end{align*}
    Taking expectation with respect to $\xi$ on both sides, and applying the unbiased stochastic gradient Assumption~\ref{as:unbias_grad}yields the strong convexity of $\ell(v,\theta)$.
\end{proof}

\begin{proposition}\label{prop:hessian_fim_equal}
     Under Assumptions~\ref{as:l_inner},~\ref{as:unbias_grad} and~\ref{as:inner_dist}, we have \[
I(\theta^*(v)) = H(\theta^*(v)), \qquad \forall\, v.
\]
\end{proposition}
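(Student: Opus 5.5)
The plan is to prove the classical information-matrix equality, the statement that at the true parameter the Fisher information equals the expected Hessian of the negative log-likelihood. Throughout I fix $v$, write $\theta^* := \theta^*(v)$, and set $p_\theta(\xi) := p(\xi; v,\theta)$, so that $l(v,\theta,\xi) = -\log p_\theta(\xi)$.

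The first step is a direct computation. Differentiating twice gives
\begin{align*}
\nabla^2_{\theta,\theta} l(v,\theta,\xi) = -\frac{\nabla^2_{\theta\theta} p_\theta(\xi)}{p_\theta(\xi)} + \nabla_\theta l(v,\theta,\xi)\,\nabla_\theta l(v,\theta,\xi)^\top .
\end{align*}
Taking the expectation over $\xi \sim q$ and using Assumption~\ref{as:unbias_grad}, which gives $\mathbb{E}_q[\nabla^2_{\theta\theta} l] = H(\theta)$, we obtain
\begin{align*}
H(\theta) = I(\theta) - \mathbb{E}_{\xi\sim q}\!\left[\frac{\nabla^2_{\theta\theta} p_\theta(\xi)}{p_\theta(\xi)}\right].
\end{align*}

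The second step evaluates this identity at $\theta = \theta^*$ and invokes Assumption~\ref{as:inner_dist}, which says $q = p_{\theta^*}$. The remaining expectation then becomes
\begin{align*}
\int \frac{\nabla^2_{\theta\theta} p_{\theta^*}(\xi)}{p_{\theta^*}(\xi)}\, p_{\theta^*}(\xi)\, d\xi = \int \nabla^2_{\theta\theta} p_{\theta}(\xi)\big|_{\theta=\theta^*} d\xi = \nabla^2_{\theta\theta}\!\int p_\theta(\xi)\, d\xi\,\Big|_{\theta=\theta^*} = \nabla^2_{\theta\theta} 1 = 0 .
\end{align*}
This yields $H(\theta^*) = I(\theta^*)$. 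The interchange of differentiation and integration should be justified on the support of $q$, where $p_{\theta^*} > 0$, so the division is legitimate there.

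The main obstacle is that interchange. Assumption~\ref{as:l_inner} gives twice continuous differentiability and uniform $L$-smoothness of $l$ in $\theta$, so $\|\nabla^2_{\theta\theta} l\| \le L$ uniformly. Lemma~\ref{lemma:assumption}(3) gives $\|\nabla_\theta l\| \le LR + C_1$ on $\Theta$. Consequently
\begin{align*}
\|\nabla^2_{\theta\theta} p_\theta\| = p_\theta\,\bigl\|\nabla_\theta l\,\nabla_\theta l^\top - \nabla^2_{\theta\theta} l\bigr\| \le p_\theta\,\bigl((LR+C_1)^2 + L\bigr).
\end{align*}
Since $p_\theta$ is locally dominated near $\theta^*$ by a multiple of $p_{\theta^*}$ (using the bounded score over a small ball), dominated convergence permits differentiating $\int p_\theta\, d\xi = 1$ twice under the integral.

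If that route is too delicate, I would take a cleaner alternative that interchanges only once. Write the score identity $\mathbb{E}_{p_\theta}[\nabla_\theta l] = 0$ for all $\theta$ near $\theta^*$, and differentiate it in $\theta$ using the product rule:
\begin{align*}
\mathbb{E}_{p_\theta}\!\left[\nabla^2_{\theta\theta} l - \nabla_\theta l\,\nabla_\theta l^\top\right] = 0 .
\end{align*}
Evaluating at $\theta^*$ and using $p_{\theta^*} = q$ again gives $H(\theta^*) = I(\theta^*)$.
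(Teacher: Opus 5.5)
Your main argument is exactly the paper's proof. You expand $\nabla^2_{\theta\theta} l$ into the score outer product minus $\nabla^2_{\theta\theta} p_\theta/p_\theta$, take expectations under $q$ via Assumption~\ref{as:unbias_grad}, and eliminate the second term using $q=p_{\theta^*}$ and $\int p_\theta\,d\xi=1$. The paper swaps differentiation and integration without comment, so your domination argument goes beyond it; note, however, that it relies on the bounded score from Lemma~\ref{lemma:assumption}(3), i.e.\ on Assumption~\ref{as:bound_grad_inner}, which is not among the proposition's stated hypotheses, and your \emph{alternative} route still needs one interchange to obtain the score identity $\mathbb{E}_{p_\theta}[\nabla_\theta l]=0$ itself.
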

\begin{proof}
\label{app:hessian_fim_equal}
By the definition of the Hessian of the inner objective $\ell(v,\theta)$,
\begin{align*}
    H\big(\theta^*(v)\big)
    = \nabla_{\theta,\theta}^2\ell(v,\theta^*(v))
    = \nabla_{\theta,\theta}^2\mathbb{E}_{\xi\sim q(\cdot)}[l(v,\theta^*(v),\xi)]=-\nabla_{\theta,\theta}^2\mathbb{E}_{\xi\sim q(\cdot)}[\log p(\xi;v,\theta^*(v))].
\end{align*}

According to Assumption~\ref{as:unbias_grad}, $\nabla_{\theta,\theta}^2\ell(v,\theta)=\mathbb{E}[\nabla_{\theta,\theta}^2l(v,\theta,\xi)]$ , we have
    \begin{align*}
        H(\theta^*(v)) =\,& -\mathbb{E}_{\xi\sim q(\cdot)}[\nabla_{\theta,\theta}^2 \log p(\xi;v,\theta^*(v))]\\
        =\,& -\mathbb{E}_{\xi\sim q(\cdot)}\left[\nabla_{\theta} \frac{\nabla_\theta p(\xi;v,\theta^*(v))}{p(\xi;v,\theta^*(v))}\right]\\
         =\,& -\mathbb{E}_{\xi\sim q(\cdot)}\left[ \frac{\nabla_{\theta,\theta}^2 p(\xi;v,\theta^*(v))}{p(\xi;v,\theta^*(v))}\right]+ \mathbb{E}_{\xi\sim q(\cdot)}\left[\frac{\nabla_\theta p(\xi;v,\theta^*(v))\nabla_\theta p(\xi;v,\theta^*(v))^\top}{p^2(\xi;v,\theta^*(v))}\right]\\
        =\,&-\mathbb{E}_{\xi\sim q(\cdot)}\left[ \frac{\nabla_{\theta,\theta}^2 p(\xi;v,\theta^*(v))}{p(\xi;v,\theta^*(v))}\right]+ \mathbb{E}_{\xi\sim q(\cdot)}\left[\nabla_\theta \left(-\log p(\xi;v,\theta^*(v))\right)\nabla_\theta \left(-\log p(\xi;v,\theta^*(v))\right)^\top\right]\\
        =\,&-\mathbb{E}_{\xi\sim q(\cdot)}\left[ \frac{\nabla_{\theta,\theta}^2 p(\xi;v,\theta^*(v))}{p(\xi;v,\theta^*(v))}\right]+ I(\theta^*(v)).
    \end{align*}
    To prove $H(\theta^*(v))=I(\theta^*(v))$, it remains to show that the first term on the right-hand side of the previous inequality vanishes. Since $q(\cdot)=p(\cdot,v,\theta^*(v))$ (Assumption~\ref{as:inner_dist}), we have
\begin{align*}
    \mathbb{E}_{\xi\sim q(\cdot)}\left[ \frac{\nabla_{\theta,\theta}^2 p(\xi;v,\theta^*(v))}{p(\xi;v,\theta^*(v))}\right]&=\int \frac{\nabla_{\theta,\theta}^2 p(\xi;v,\theta^*(v))}{p(\xi;v,\theta^*(v))} q(\xi) d\xi\\
    & = \int \nabla_{\theta,\theta}^2 p(\xi;v,\theta^*(v)) d\xi \\
    & =  \nabla_{\theta,\theta}^2 \int  p(\xi;v,\theta^*(v)) d\xi\\
    &=\nabla_{\theta,\theta}^21\\
    &=0.
\end{align*}
Therefore, \( H(\theta^*(v)) = I(\theta^*(v)) \), completing the proof. 
\end{proof}

\newpage
\section{Proofs for the Inner-level Analysis}\label{app:inner}
In this section, we focus on the inner loop of the algorithm.  We first establish the convergence rate of the inner-level SGD (Lemma~\ref{lemma:SGD_general}).  We then characterize the sample complexity of the Hessian inverse approximation (Theorem~\ref{theorem:hessian_inv_approx}) and the cross-partial derivative approximation (Proposition~\ref{prop:l_21_approx}).

We consider a projected SGD algorithm for minimizing the inner objective function $\ell (v,\theta)=\mathbb{E}_{\xi\sim q(\cdot)}\left[ l(v,\theta,\xi)\right]$:
\begin{align}\label{eq:SGD_general}
    \theta_k^{t+1}=\Pi_{\Theta}\left(\theta_k^t-\eta_t \nabla_\theta l(v_k,\theta_k^t,\xi_k^t)\right),\quad \,t=0,1,...,T-1.
\end{align}
where $\Pi_\Theta$ denotes the projection onto $\Theta=\{\theta\in\mathbb{R}^{d_2}\mid \|\theta\|\leq R\}$. 

We denote by $\mathcal{F}_k^{t-1} := \sigma\!\left( v_k,\, \theta_k^0,\, \xi_k^0,\cdots,\xi_k^{t-1} \right)$, 
the $\sigma$-algebra representing all information available up to iteration $(k,t-1)$. Consequently, under the conditional expectation 
$\mathbb{E}\!\left[\cdot \mid \mathcal{F}_k^{t-1}\right]$, 
the only remaining source of randomness is the fresh sample $\xi_k^{t}$ drawn at the 
$t$-th inner iteration.

In the subsequent analysis of the inner problem, the upper-level variable $v_k$ is fixed. For notational simplicity, we therefore omit the subscript $k$ and $v_k$. We use the simplified notation: $l(\theta_t,\xi_t) := l(v_k,\theta_k^t,\xi_k^t)$, $\;\ell(\theta_t):=\ell(v_k,\theta_k^t)$,
$\;\theta^*:=\theta^*(v_k)$,
$\;\mathcal{F}_{t}:=\mathcal{F}_k^t$,
$\;\mathbb{E}_{t}[\cdot] :=
\mathbb{E}_{\Xi\sim q(\cdot)}\!\left[\,\cdot\,\middle|\,\mathcal{F}_{k}^{t-1}\right]$,
$\;I_t:=I_k^t$,
$\;A_t:=A_k^t$ and $\;L_t:=L_k^t$. We further define the stochastic gradient noise: $e_t = \nabla_\theta l(\theta_t,\xi_t) - \mathbb{E}_{t}[\nabla_\theta l(\theta_{t},\xi_t)]$.

The algorithm for solving the inner problem is restated as follows.
\begin{algorithm}[h]
    \caption{Solving Inner Problem with SGD}
    \label{alg:inner_loop}
    \begin{algorithmic}[1]
        \FOR{$t=0,\ldots,T-1$}
            \STATE Sample data point $\xi_t$ from dataset and calculate $ \nabla_v l (\theta_{t},\xi_t), \nabla_\theta l (\theta_{t},\xi_t)$, and cross-partial derivative $\nabla_{\theta,v}^2 l (\theta_{t}, \xi_t)$
            \STATE Update $\theta_{t+1} = \Pi_\Theta \left(\theta_{t} - \eta_t \nabla_\theta l (\theta_{t},\xi_t)\right)$
            \STATE Update $A_{t+1}=\frac{t+1}{t}A_t-\frac{\frac{t+1}{t^2}A_t\nabla_\theta l (\theta_{t},\xi_t)\left(A_t\nabla_\theta l (\theta_{t},\xi_t)\right)^\top}{1+\frac{1}{t}\nabla_\theta l (\theta_{t},\xi_t)^\top A_t\nabla_\theta l (\theta_{t},\xi_t)}$
            \STATE Update
            $L_{t+1}=\frac{t}{t+1}L_{t} + \frac{1}{t+1}  \nabla_{\theta,v}^2l (\theta_{t}, \xi_t) $
        \ENDFOR
    \end{algorithmic}
    \end{algorithm}
    
\subsection{Proof of Lemma~\ref{lemma:SGD_general}}
\begin{proof} \label{proof:SGD_general}
By the update Eq.~\eqref{eq:SGD_general}, we have
\begin{align*}
\|\theta_{t+1}-\theta^*\|^2=\,&\|\Pi_\Theta(\theta_t-\eta_t \mathbb{E}_{t}[\nabla_\theta l(\theta_{t},\xi_t)]-\eta_t e_t)-\Pi_\Theta(\theta^*)\|^2\\
    \leq\,&\|\theta_t-\theta^*-\eta_t \mathbb{E}_{t}[\nabla_\theta l(\theta_{t},\xi_t)]-\eta_t e_t\|^2\tag{$\Pi_\Theta$ is non-expansive w.r.t. $\|\cdot\|$}\\
    = \,&\|\theta_t-\theta^*\|^2+\eta_t^2\|\mathbb{E}_{t}[\nabla_\theta l(\theta_{t},\xi_t)]\|^2+\eta_t^2\|e_t\|^2\\
    &-2\eta_t (\theta_t-\theta^*)^\top \mathbb{E}_{t}[\nabla_\theta l(\theta_{t},\xi_t)]-2\eta_t(\theta_t-\theta^*)^\top e_t+2\eta_t^2\mathbb{E}_{t}[\nabla_\theta l(\theta_{t},\xi_t)]^\top e_t\\
    {\leq} \,&\|\theta_t-\theta^*\|^2+2\eta_t^2\|\mathbb{E}_{t}[\nabla_\theta l(\theta_{t},\xi_t)]\|^2+2\eta_t^2\|e_t\|^2\\
    &-2\eta_t (\theta_t-\theta^*)^\top \mathbb{E}_{t}[\nabla_\theta l(\theta_{t},\xi_t)]-2\eta_t(\theta_t-\theta^*)^\top e_t. \tag{$a^\top b\leq \frac12\left( \|a\|^2+\|b\|^2\right)$}
\end{align*}
Next, by the Assumption~\ref{as:unbias_grad} and the $L$-smoothness of $\ell$ in $\theta$,
\begin{align*}
    \|\mathbb{E}_{t}[\nabla_\theta l(\theta_{t},\xi_t)]\|^2&=\|\mathbb{E}_{t}\left[\nabla_\theta l(\theta_{t},\xi_t)-\nabla_\theta l(\theta^*,\xi_t)\right]\|^2 \\
    &=\|\nabla_\theta\ell(\theta_t)-\nabla_\theta\ell(\theta^*) \|^2 \\
    &\leq L^2\|\theta_t-\theta^*\|^2.
\end{align*}
The $\mu$-strong convexity of $\ell$ further implies
\begin{align*}
    (\theta_t-\theta^*)^\top \mathbb{E}_{t}[\nabla_\theta l(\theta_{t},\xi_t)]=(\theta_t-\theta^*)^\top (\mathbb{E}_{t}[\nabla_\theta l(\theta_{t},\xi_t)]-\mathbb{E}_{t}[\nabla_\theta l(\theta^*,\xi_t)])\geq \mu\|\theta_t-\theta^*\|^2.
\end{align*}
Combining the preceding inequalities yields
\begin{align*}
    \|\theta_{t+1}-\theta^*\|^2
    \leq\,& (1-2\mu\eta_t+2L^2\eta_t^2)\|\theta_t-\theta^*\|^2+2\eta_t^2\|e_t\|^2-2\eta_t(\theta_t-\theta^*)^\top e_t\\
    {\leq} \,&(1-\mu\eta_t)\|\theta_t-\theta^*\|^2+2\eta_t^2\|e_t\|^2-2\eta_t(\theta_t-\theta^*)^\top e_t. \tag{$\eta_t\leq \mu/(2L^2)$}
\end{align*}

For any $\lambda >0$, since $e^{\lambda x}$ is monotone increasing in $x$, applying this to the previous inequality gives
\begin{align}\label{eq:mgf_sgd_error}
    \exp(\lambda \|\theta_{t+1}-\theta^*\|^2)
    \leq \,&\exp(\lambda(1-\mu\eta_t)\|\theta_t-\theta^*\|^2)\exp(2\lambda\eta_t^2\|e_t\|^2-2\lambda\eta_t(\theta_t-\theta^*)^\top e_t).
\end{align}
Taking conditional expectations of \eqref{eq:mgf_sgd_error} with respect to $\mathcal{F}_t$, and allowing the choice of $\lambda$ to depend on $t$, we obtain
\begin{align*}
    &\mathbb{E}_t[\exp(\lambda_{t+1} \|\theta_{t+1}-\theta^*\|^2)]\\
    &\leq \,\exp(\lambda_{t+1}(1-\mu\eta_t)\|\theta_t-\theta^*\|^2)\mathbb{E}_t[\exp(2\lambda_{t+1}\eta_t^2\|e_t\|^2-2\lambda_{t+1}\eta_t(\theta_t-\theta^*)^\top e_t)]\\
    &\leq \,\exp(\lambda_{t+1}(1-\mu\eta_t)\|\theta_t-\theta^*\|^2)\left(\mathbb{E}_t[\exp(4\lambda_{t+1}\eta_t^2\|e_t\|^2)]\right)^{1/2}\\
    &\quad\times \left(\mathbb{E}_t[\exp(-4\lambda_{t+1}\eta_t(\theta_t-\theta^*)^\top e_t)]\right)^{1/2}\tag{conditional Cauchy–Schwarz Inequality}.
\end{align*}

Next, we bound the moment-generating function terms $\mathbb{E}_t[\exp(4\lambda_{t+1}\eta_t^2\|e_t\|^2)]$ and $\mathbb{E}_t[\exp(-4\lambda_{t+1}\eta_t(\theta_t-\theta^*)^\top e_t)]$.

By Lemma~\ref{lemma:assumption} and Jensen's Inequality, we have $\|e_t\|\leq\| \nabla_\theta l(\theta_t,\xi_t)\|+\| \mathbb{E}_t[\nabla_\theta l(\theta_t,\xi_t)]\|\leq 2(LR+C_1).$

Thus, we have $|(\theta_t-\theta^*)^Te_t|\leq 2\|\theta_t-\theta^*\|(LR+C_1)$, which implies that $(\theta_t-\theta^*)^Te_t$ is Sub-Gaussian with variance parameter $\|\theta_t-\theta^*\|^2(LR+C_1)^2$. By Proposition 2.5.2 in \cite{vershynin2018high}, there exist an absolute constant $C_a>0$, such that 
\begin{align}\label{eq:mgf_bound1}
\mathbb{E}_t[\exp(-4\lambda_{t+1}\eta_t(\theta_t-\theta^*)^\top e_t)]\leq \exp\left(32C_a^2(LR+C_1)^2\lambda_{t+1}^2\eta_t^2\|\theta_t-\theta^*\|^2\right).
\end{align}

Moreover, since $(\mathbb{E}\|e_t \|^p)^{\frac{1}{p}}\leq 2(LR+C_1)\sqrt{p}, \; \forall \, p\geq 1$, by Proposition 2.5.2 in \cite{vershynin2018high}, there exist an absolute constant $C_b>0$, such that
\begin{align}\label{eq:mgf_bound2}
\mathbb{E}_t[\exp(4\lambda_{t+1}\eta_t^2\|e_t\|^2)]\leq \exp(16C_b^2(LR+C_1)^2\lambda_{t+1}\eta_t^2),
\end{align}
with $\lambda_{t+1}$ satisfy,
\begin{align}\label{eq:lambda_0}
    \lambda_{t+1} \leq \frac{1}{16C_b^2(LR+C_1)^2\eta_t^2}.
\end{align}

Taking expectations with respect to $\mathcal{F}_t$ and applying the bounds~\eqref{eq:mgf_bound1} and~\eqref{eq:mgf_bound2}, we have
\begin{align}
    &\log\mathbb{E}[\exp(\lambda_{t+1} \|\theta_{t+1}-\theta^*\|^2)] \nonumber\\
    &\quad \le \log \mathbb{E}\!\Bigg[
   \exp\!\Bigg(
   \frac{\lambda_{t+1}}{\lambda_t}\lambda_t
   \big(1-\mu \eta_t + 16 C_a^2 (LR+C_1)^2 \lambda_{t+1}\eta_t^2\big)
   \|\theta_t-\theta^*\|^2
   \Bigg) \nonumber \\
&\quad \quad \quad \quad \times \exp\!\big(8 C_b^2 (LR+C_1)^2 \lambda_{t+1}\eta_t^2\big)
\Bigg] \nonumber\\
& \quad\overset{(d)}{\leq} \frac{\lambda_{t+1}}{\lambda_t}(1-\mu\eta_t+16C_a^2(LR+C_1)^2\lambda_{t+1}\eta_t^2)\log\mathbb{E}[\exp(\lambda_t\|\theta_t-\theta^*\|^2)] \nonumber\\
& \quad \quad +8C_b^2(LR+C_1)^2\lambda_{t+1}\eta_t^2 \nonumber\\
& \quad \overset{(e)}{\leq} \frac{\lambda_{t+1}}{\lambda_t}\left(1-\frac{\mu\eta_t}{2}\right)\log\mathbb{E}[\exp(\lambda_t\|\theta_t-\theta^*\|^2)]+8C_b^2(LR+C_1)^2\lambda_{t+1}\eta_t^2,\label{eq:mgf_sgd_error2}
\end{align}
where Inequality (d) uses the choice of $\lambda_t$ ensuring 
\begin{align}\label{eq:lambda_1}
    \frac{\lambda_{t+1}}{\lambda_t}(1-\mu\eta_t+16C_a^2(LR+C_1)^2\lambda_{t+1}\eta_t^2)\in (0,1),
\end{align}
together with Jensen’s inequality. Inequality (e) follows from choosing $\lambda_t$ such that 
\begin{align}\label{eq:lambda_2}
    16C_a^2(LR+C_1)^2\lambda_{t+1}\eta_t\leq \frac{\mu}{2}.
\end{align}

Telescoping~\eqref{eq:mgf_sgd_error2} over $t$ yields
\begin{align*}
    \log\mathbb{E}[\exp(\lambda_t \|\theta_t-\theta^*\|^2)]&\leq \lambda_t\prod_{j=0}^{t-1}\left(1-\frac{\mu\eta_j}{2}\right)\|\theta_0-\theta^*\|^2+8C_b^2(LR+C_1)^2\lambda_t\sum_{i=0}^{t-1}\eta_i^2\prod_{j=i+1}^{t-1}\left(1-\frac{\mu\eta_j}{2}\right).
\end{align*}
Then, applying Markov inequality, for any $\gamma>0$, we have
\begin{align*}
    \mathbb{P}\left(\|\theta_t-\theta^*\|\geq \gamma\right)=\,&\mathbb{P}\left(\exp(\lambda_t \|\theta_t-\theta^*\|^2)\geq \exp(\lambda_t \gamma^2)\right)\\
    \leq \,&\frac{\mathbb{E}[\exp(\lambda_t \|\theta_t-\theta^*\|^2)]}{\exp(\lambda_t \gamma^2)}\\
    \leq \,&\exp\Big(\lambda_t\prod_{j=0}^{t-1}\left(1-\frac{\mu\eta_j}{2}\right)\|\theta_0-\theta^*\|^2 \\ &+8C_b^2(LR+C_1)^2\lambda_t\sum_{i=0}^{t-1}\eta_i^2\prod_{j=i+1}^{t-1}\left(1-\frac{\mu\eta_j}{2}\right)-\lambda_t\gamma^2\Big).
\end{align*}
Equivalently, for any $\delta\in (0,1)$,  with probability at least $1-\delta$, we have
\begin{align}
    \|\theta_t-\theta^*\|^2\leq \gamma^2=\prod_{j=0}^{t-1}\left(1-\frac{\mu\eta_j}{2}\right)\|\theta_0-\theta^*\|^2 +8C_b^2(LR+C_1)^2\sum_{i=0}^{t-1}\eta_i^2\prod_{j=i+1}^{t-1}\left(1-\frac{\mu\eta_j}{2}\right)+\frac{\log(1/\delta)}{\lambda_t}.\label{eq:sgd_error1}
\end{align}

Substituting the diminishing stepsize $\eta_t=\frac{4}{\mu(t+q)}$ and choosing $\displaystyle \lambda_t=\frac{\mu}{32(LR+C_1)^2\beta\eta_{t-1}}$ (where $\beta=\max\{2C_b^2,C_a^2\}$) ensures that conditions \eqref{eq:lambda_0},  \eqref{eq:lambda_1} and \eqref{eq:lambda_2} are satisfied. Thus, telescoping \eqref{eq:sgd_error1} over t yields 
\begin{align*}
    \gamma^2&=\|\theta_0-\theta^*\|^2\prod_{i=0}^{t-1}\left(1-\frac{2}{i+q} \right) +\frac{128}{\mu^2}C_b^2(LR+C_1)^2\sum_{i=0}^{t-1}\frac{1}{(i+q)^2} \prod_{j=i+1}^{t-1}\left(1-\frac{2}{j+q}\right) \\
    & \quad +\frac{128}{\mu^2}\beta(LR+C_1)^2 \frac{1}{t+q-1} \log(1/\delta)\\
    &= \|\theta_0-\theta^*\|^2\frac{(q-2)(q-1)}{(t+q-2)(t+q-1)} +\frac{128C_b^2(LR+C_1)^2}{\mu^2(t+q-2)(t+q-1)}\sum_{i=0}^{t-1}\frac{i+q-1}{i+q}  \\
    & \quad +\frac{128}{\mu^2}\beta(LR+C_1)^2 \frac{1}{t+q-1} \log(1/\delta) \\
    & \leq  \|\theta_0-\theta^*\|^2\frac{(q-2)(q-1)}{(t+q-2)(t+q-1)} +\frac{128C_b^2(LR+C_1)^2t}{\mu^2(t+q-1)^2} \tag{$\sum_{i=0}^{t-1}\frac{i+q-1}{i+q} \leq t\cdot \frac{t+q-2}{t+q-1}$}  \\  
    & \quad +\frac{128}{\mu^2}\beta(LR+C_1)^2 \frac{1}{t+q-1} \log(1/\delta) \\
    & \leq \|\theta_0-\theta^*\|^2\left(\frac{t}{t+q} \right)^2+\frac{256}{\mu^2}C_b^2(LR+C_1)^2\frac{1}{t+q} +\frac{256}{\mu^2}\beta(LR+C_1)^2 \frac{1}{t+q} \log(1/\delta) \\
    & \overset{(a)}{\leq} \|\theta_0-\theta^*\|^2\left(\frac{t}{t+q} \right)^2+\frac{256}{\mu^2}\beta(LR+C_1)^2\left(1+\log(1/\delta)\right)\frac{1}{t+q}\tag{$C_b^2\leq\beta=\max\{2C_b^2,C_a^2 \}$}. 
    \end{align*}

Finally, we obtain that, with probability at least $1-\delta$,
\begin{align*}
    \|\theta_t-\theta^*\|^2\leq \left(\frac{q}{t+q}\right)^2\|\theta_0-\theta^*\|^2+\frac{c'}{t+q}(1+\log(1/\delta)),
\end{align*}
and 
\begin{align*}
    \|\theta_t-\theta^*\|\leq \frac{q}{t+q}\|\theta_0-\theta^*\|+\sqrt{\frac{c'(1+\log(1/\delta))}{t+q}},
\end{align*}
where 
\begin{align}\label{eq:def_c'}
    c':=\frac{256}{\mu^2}(LR+C_1)^2\max\{2C_b^2,C_a^2\}.
\end{align}
This completes the proof.
\end{proof}

\subsection{Statements and Proofs of Lemma~\ref{lemma:thm_Hinv_stochastic_error} and Lemma~\ref{lemma:thm_Hinv_optimization_error}}
\begin{lemma}\label{lemma:thm_Hinv_stochastic_error}
Suppose Assumptions~\ref{as:l_inner}, \ref{as:unbias_grad} and~\ref{as:bound_grad_inner} hold. For any outer iteration $k$, given the outer variable $v_k$, for any $\delta \in (0,1)$, with probability at least $1-\delta$, we have
\begin{align*}
\frac{1}{T}\Bigl\|
\sum_{t=0}^{T-1}
\bigl(
\nabla_\theta l(\theta_k^{t},\xi_k^t)\nabla_\theta l(\theta_k^{t},\xi_k^t)^\top
-
\mathbb{E}_t[
\nabla_\theta l(\theta_k^{t},\xi_k^t)\nabla_\theta l(\theta_k^{t},\xi_k^t)^\top
]
\bigr)
\Bigr\|  \leq 4\sqrt{2}\, (C_1 + LR)^2\, \sqrt{\frac{1}{T}\log\left(2d_2/\delta\right)}.
\end{align*} 
\end{lemma}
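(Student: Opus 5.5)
The plan is to treat the sum as a matrix-valued martingale and apply the matrix Azuma inequality (Tropp, 2012, Theorem 7.1). Define the symmetric $d_2\times d_2$ matrices
\[
X_t := G_k^t - \mathbb{E}_t[G_k^t], \qquad G_k^t=\nabla_\theta l(\theta_k^{t},\xi_k^t)\nabla_\theta l(\theta_k^{t},\xi_k^t)^\top ,
\]
for $t=0,\ldots,T-1$. Since $\theta_k^t$ is $\mathcal{F}_k^{t-1}$-measurable and only $\xi_k^t$ is fresh, $X_t$ is $\mathcal{F}_k^{t}$-measurable and $\mathbb{E}_t[X_t]=0$. Hence $\{X_t\}$ is a matrix martingale difference sequence adapted to $\{\mathcal{F}_k^t\}$.

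The next step is to produce a deterministic bound $X_t^2\preceq \sigma_t^2 I$. By Lemma~\ref{lemma:assumption}(3), $\|\nabla_\theta l(\theta,\xi)\|\le LR+C_1$ for all $\theta\in\Theta$ and all $\xi$. Therefore $0\preceq G_k^t\preceq (LR+C_1)^2 I$, and by Jensen also $0\preceq \mathbb{E}_t[G_k^t]\preceq (LR+C_1)^2 I$. Each $X_t$ is then a difference of two PSD matrices with spectra in $[0,(LR+C_1)^2]$. This gives $\|X_t\|\le (LR+C_1)^2$, so we can take $\sigma_t^2=(LR+C_1)^4$. Even the cruder bound $\|X_t\|\le 2(LR+C_1)^2$, i.e.\ $\sigma_t^2=4(LR+C_1)^4$, is enough.

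Matrix Azuma then gives, for $\sigma^2=\sum_t\sigma_t^2$ and every $s>0$,
\[
\mathbb{P}\Bigl(\lambda_{\max}\bigl(\textstyle\sum_t X_t\bigr)\ge s\Bigr)\le d_2\, e^{-s^2/(8\sigma^2)} .
\]
Applying the same bound to $-X_t$ and taking a union bound controls the operator norm with probability tail $2d_2 e^{-s^2/(8\sigma^2)}$. Setting this equal to $\delta$ gives $s=\sqrt{8\sigma^2\log(2d_2/\delta)}$. With $\sigma^2 = 4T(LR+C_1)^4$ this is $s = 4\sqrt{2}\,(LR+C_1)^2\sqrt{T\log(2d_2/\delta)}$, and dividing by $T$ yields exactly the stated constant $4\sqrt{2}(C_1+LR)^2\sqrt{\log(2d_2/\delta)/T}$.

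The only delicate point is the uniform boundedness of the gradients. This requires the iterates to stay in $\Theta$, which the projection guarantees, and $\theta^*(v_k)\in\Theta$, so that Lemma~\ref{lemma:assumption}(3) applies at every $\theta_k^t$. The martingale structure must also be set up with the correct filtration so that $\mathbb{E}_t$ conditions on $\theta_k^t$. Once these are checked, the argument is a direct invocation of matrix Azuma with no further obstacle.
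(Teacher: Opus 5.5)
Your proposal is correct and follows the paper's proof essentially step for step. The paper bounds $\|\Delta_t\|\le 2(LR+C_1)^2$ via Lemma~\ref{lemma:assumption}(3) and Jensen, applies matrix Azuma with $\sigma^2=4T(LR+C_1)^4$ to both $\sum_t\Delta_t$ and $-\sum_t\Delta_t$, and union-bounds to get the $2d_2\exp(-u^2/(8\sigma^2))$ tail and exactly the stated constant. Your sharper observation $-(LR+C_1)^2\mathbf{I}\preceq X_t\preceq (LR+C_1)^2\mathbf{I}$ is valid and would improve the constant to $2\sqrt{2}$, though it is not needed for the statement as written.
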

\begin{proof}\label{app:proof_thm_Hinv_stochastic_error}

Define the martingale increment 
\[
\Delta_t := \nabla_\theta l(\theta_k^{t},\xi_k^t)\nabla_\theta l(\theta_k^{t},\xi_k^t)^\top - \mathbb{E}_{t}[\nabla_\theta l(\theta_k^{t},\xi_k^t)\nabla_\theta l(\theta_k^{t},\xi_k^t)^\top].
\]
Each $\Delta_t$ is a $d_2\times d_2$ symmetric matrix with zero conditional mean. By the gradient boundedness from Lemma~\ref{lemma:assumption} and Jensen's Inequality, we have
\begin{align*}
    \|\Delta_t\| &\leq \|\nabla_\theta l(\theta_k^{t},\xi_k^t)\nabla_\theta l(\theta_k^{t},\xi_k^t)^\top\| + \|\mathbb{E}_{t}[\nabla_\theta l(\theta_k^{t},\xi_k^t)\nabla_\theta l(\theta_k^{t},\xi_k^t)^\top] \| \\
    &\leq  \|\nabla_\theta l(\theta_k^t,\xi_k^t) \|^2+ \mathbb{E}_{t}[\|\nabla_\theta l(\theta_k^t,\xi_k^t)\|^2] \leq 2(LR+C_1)^2,
\end{align*}
which implies 
\begin{align*}
    \Delta_t^2 \preceq 4(LR+C_1)^4 \mathbf{I}.
\end{align*}
Applying Matrix Azuma's inequality~\cite{tropp2012user}[Theorem 7.1], with variance parameter 
\[
\sigma^2:=\left\|\sum_{t=0}^{T-1} 4(LR+C_1)^4\mathbf{I} \right\|=4T(LR+C_1)^4,
\] 
we have that for any $u\geq 0$,
\begin{align}\label{eq:azuma_E1}
    \mathbb{P}\left( \lambda_{\text{max}}\left(\sum_{t=0}^{T-1}\Delta_t\right)\geq u\right)\leq d_2 \exp\left(-\frac{u^2}{8\sigma^2}\right).
\end{align}

Since $\{-\Delta_t\}_{t=0}^{T-1}$ satisfies the same conditions, we have
\begin{align}\label{eq:azuma_E1neg}
    \mathbb{P}\left( \lambda_{\text{max}}\left(-\sum_{t=0}^{T-1}\Delta_t\right)\geq u\right)\leq d_2 \exp\left(-\frac{u^2}{8\sigma^2}\right).
\end{align}

Using the bounds~\eqref{eq:azuma_E1} and~\eqref{eq:azuma_E1neg}, we have
\begin{align*}
    \mathbb{P}\left( \left\| \sum_{t=0}^{T-1}\Delta_t \right \|\geq u\right)&= \mathbb{P}\left( \max\{\lambda_{\text{max}}(\sum_{t=0}^{T-1}\Delta_t),\lambda_{\text{max}}(-\sum_{t=0}^{T-1}\Delta_t)\}\geq u\right)\tag{symmetry of $\Delta_t$} \\
    &\leq \mathbb{P}\left( \lambda_{\text{max}}\left(\sum_{t=0}^{T-1}\Delta_t\right)\geq u\right) + \mathbb{P}\left( \lambda_{\text{max}}\left(-\sum_{t=0}^{T-1}\Delta_t\right)\geq u\right) \tag{Union Bound} \\
    &\leq 2d_2 \exp\left(-\frac{u^2}{8\sigma^2}\right).
\end{align*}

Thus, for any $\delta \in (0,1)$, with probability at least $1-\delta$,
\begin{align*}
    \frac1T \left\| \sum_{t=0}^{T-1} \Delta_t\right\|  \leq 4\sqrt{2}\, (C_1 + LR)^2\, \sqrt{\frac{1}{T}\log\left(2d_2/\delta\right)}.
\end{align*}

This completes the proof.
\end{proof}

\begin{lemma}\label{lemma:thm_Hinv_optimization_error}
Suppose Assumptions~\ref{as:l_inner}, \ref{as:unbias_grad} and~\ref{as:bound_grad_inner} hold. For any outer iteration $k$, given the outer variable $v_k$ and set stepsize as $\displaystyle \eta_t = \frac{4}{\mu\left(t+\frac{8L^2}{\mu^2}\right)}$, for any $\delta \in (0,1)$, with probability at least $1-\delta$, we have
\begin{align*}
\frac{1}{T}\Bigl\|
\sum_{t=0}^{T-1}
\bigl(
\mathbb{E}_t[
\nabla_\theta l(\theta_k^{t},\xi_k^t)\nabla_\theta l(\theta_k^{t},\xi_k^t)^\top]
-
I(\theta^*(v_k))
\bigr)
\Bigr\|  \leq \frac{1}{\sqrt{T}}\frac{4\sqrt2L^2(C_1+LR)}{\mu}\left(\frac{8L^2R}{\mu^2}+2\sqrt{c'(1+\log(1/\delta))}\right).
\end{align*} 
\end{lemma}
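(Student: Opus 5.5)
The plan is to reduce the matrix deviation to a sum of parameter errors $\|\theta_k^t-\theta^*(v_k)\|$ and then control that sum with Lemma~\ref{lemma:SGD_general}.

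\textbf{Per-step Lipschitz bound.} Write $g_t(\theta):=\nabla_\theta l(\theta,\xi)$. By Assumption~\ref{as:unbias_grad} and Proposition~\ref{prop:hessian_fim_equal}'s setting, $I(\theta^*(v_k))=\mathbb{E}_{\xi}[g(\theta^*)g(\theta^*)^\top]$. The conditional expectation $\mathbb{E}_t[G_k^t]$ is the same function evaluated at $\theta_k^t$, since $\xi_k^t$ is fresh and $\theta_k^t$ is $\mathcal{F}_{t-1}$-measurable. Hence each summand satisfies
\begin{align*}
\bigl\|\mathbb{E}_t[G_k^t]-I(\theta^*)\bigr\|
\le \mathbb{E}_\xi\bigl\|g(\theta_k^t)g(\theta_k^t)^\top-g(\theta^*)g(\theta^*)^\top\bigr\|.
\end{align*}
Using $aa^\top-bb^\top=a(a-b)^\top+(a-b)b^\top$, Lemma~\ref{lemma:assumption}(1) (the $L$-Lipschitz property of $g$ in $\theta$) and Lemma~\ref{lemma:assumption}(3) ($\|g\|\le LR+C_1$), this is at most $2L(LR+C_1)\|\theta_k^t-\theta^*\|$. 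The triangle inequality over $t$ then reduces the claim to bounding $\frac1T\sum_{t=0}^{T-1}\|\theta_k^t-\theta^*\|$ with high probability.

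\textbf{Bounding the trajectory sum.} Apply Lemma~\ref{lemma:SGD_general} with $q=8L^2/\mu^2$. Each iterate satisfies
\begin{align*}
\|\theta_k^t-\theta^*\|\le \frac{q}{t+q}\,2R+\sqrt{\frac{c'(1+\log(1/\delta))}{t+q}},
\end{align*}
using $\|\theta_k^0-\theta^*\|\le 2R$. Summing, the first term gives $2Rq\sum_t \frac{1}{t+q}$, which is a logarithm. The stated bound is in $1/\sqrt T$ form, so I would instead use $\frac{q}{t+q}\le\sqrt{q/(t+q)}$ or $\le q/\sqrt{(t+q)q}$ and $\sum_{t<T}(t+q)^{-1/2}\le 2\sqrt T$. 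Either way the deterministic part is $O(R q/\sqrt T)$. The stochastic part contributes $\frac1T\cdot 2\sqrt T\sqrt{c'(1+\log(1/\delta))}$. This reproduces the shape $\frac{1}{\sqrt T}(\frac{8L^2R}{\mu^2}+2\sqrt{c'(1+\log(1/\delta))})$ up to the constant prefactor $4\sqrt2 L^2(C_1+LR)/\mu$. I would absorb the mismatch in constants ($L$ versus $L^2/\mu$) via $L\le L^2/\mu$, which holds since $\mu\le L$.

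\textbf{Main obstacle: uniformity over $t$.} Lemma~\ref{lemma:SGD_general} holds for each fixed $t$ with probability $1-\delta$, but the sum needs all $t$ simultaneously. A union bound with $\delta/T$ would introduce $\log(T/\delta)$, whereas the statement has only $\log(1/\delta)$. I would first try to avoid the union bound by working directly with the MGF bound from the proof of Lemma~\ref{lemma:SGD_general}. One option is to bound $\mathbb{E}\exp(\lambda\sum_t w_t\|\theta_t-\theta^*\|^2)$ via Hölder/Jensen over the convex combination, using $\log\mathbb{E}e^{\lambda_t\|\theta_t-\theta^*\|^2}$ bounds and weights $w_t\propto\lambda_t^{-1}$. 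Another is to apply Cauchy--Schwarz, $\sum_t\|\theta_t-\theta^*\|\le\sqrt{T\sum_t\|\theta_t-\theta^*\|^2}$, and bound the sum of squares by a single Markov argument on a convex combination of the individual MGFs. If that fails, I would accept the union bound, yielding $\log(T/\delta)$. This is still consistent with the $\log((1+T)/\delta)$ appearing in Theorem~\ref{theorem:hessian_inv_approx}, where the lemma is used.
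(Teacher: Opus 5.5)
Your reduction and your summation are exactly the paper's argument. The reduction is that each summand is at most $2L(LR+C_1)\|\theta_k^t-\theta^*(v_k)\|$, via $aa^\top-bb^\top=a(a-b)^\top+(a-b)b^\top$. The summation adds up the per-iterate bound of Lemma~\ref{lemma:SGD_general}. Your treatment of the bias term through $\frac{q}{t+q}\le\sqrt{q/(t+q)}$ differs only cosmetically from the paper's logarithmic bound. It also correctly uses $\|\theta_k^0-\theta^*(v_k)\|\le 2R$, where the paper writes $R$. The constants fit: the stated prefactor equals $2L(LR+C_1)\sqrt{q}$, and $4\sqrt{q}\,R\le q^{3/2}R$ because $q=8L^2/\mu^2\ge 8$.

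On uniformity in $t$, the paper does precisely your fallback. It applies Lemma~\ref{lemma:SGD_general} separately at each $t$, takes a union bound, and its proof concludes ``with probability at least $1-T\delta$''. So the printed ``probability $1-\delta$ with $\log(1/\delta)$'' is not what the paper proves. What it proves is your $\log(T/\delta)$ version, and that is the form Theorem~\ref{theorem:hessian_inv_approx} actually uses, hence its $1-(1+T)\delta$ and $\log((1+T)/\delta)$.

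If you want the lemma as printed, your two proposed ways around the union bound would not reach it as specified, because both pass through an unweighted sum of squares. Write $X_t=\|\theta_t-\theta^*\|^2$, and write $\mathbb{E}e^{\lambda_tX_t}\le e^{C_0}$ for the per-iterate MGF bound inside the proof of Lemma~\ref{lemma:SGD_general}. There $\lambda_t=2(t+q-1)/c'$ and $C_0=\mathcal{O}(1+qR^2/c')$.

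\begin{itemize}
\item Jensen with weights $\propto\lambda_t^{-1}$ gives $\sum_tX_t\le(C_0+\log(1/\delta))\sum_t\lambda_t^{-1}$.
\item But $\sum_t\lambda_t^{-1}$ is of order $c'\log T$.
\item The unweighted Cauchy--Schwarz step then gives $\sum_t\|\theta_t-\theta^*\|=\mathcal{O}(\sqrt{c'T\log T\,(C_0+\log(1/\delta))})$.
\end{itemize}

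That is a $\sqrt{\log T}$ loss, no better than the union bound.

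The missing idea is to balance the weights. Set $\Lambda:=\sum_{t<T}\lambda_t^{-1/2}\le\sqrt{2c'T}$ and use the convex weights $p_t=\lambda_t^{-1/2}/\Lambda$.

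\begin{itemize}
\item Convexity of $\exp$ (no independence needed) gives $\mathbb{E}\exp\bigl(\Lambda^{-1}\sum_t\lambda_t^{1/2}X_t\bigr)\le\sum_tp_t\mathbb{E}e^{\lambda_tX_t}\le e^{C_0}$.
\item By Markov, $\sum_t\lambda_t^{1/2}X_t\le\Lambda(C_0+\log(1/\delta))$ with probability $1-\delta$.
\item The weighted Cauchy--Schwarz inequality $\sum_t\sqrt{X_t}\le(\sum_t\lambda_t^{-1/2})^{1/2}(\sum_t\lambda_t^{1/2}X_t)^{1/2}$ then gives $\sum_t\|\theta_t-\theta^*\|\le\Lambda\sqrt{C_0+\log(1/\delta)}$.
\end{itemize}

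This has the $1/\sqrt{T}$ rate with $\log(1/\delta)$ and no $\log T$. Tracking the constants from that proof, it even fits inside the stated ones. So it proves the statement as printed, which is stronger than what the paper's own proof delivers.
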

\begin{proof}\label{app:proof_thm_Hinv_optimization_error}
By the definition of the FIM $I(\theta^*)$,
\begin{align*}
&\sum_{t=0}^{T-1} \left\|\mathbb{E}_{t}[\nabla_\theta l(\theta_t, \xi_t) \nabla_\theta l(\theta_t, \xi_t)^\top] - I(\theta^*)\right\| \\
     &=\sum_{t=0}^{T-1} \left\|\mathbb{E}_{t}[\nabla_\theta l(\theta_t, \xi_t) \nabla_\theta l(\theta_t, \xi_t)^\top] - \mathbb{E}_{\xi \sim q^*(\cdot)}[\nabla_\theta l(\theta^*,\xi)\nabla_\theta l(\theta^*,\xi)^\top]\right\| \\
    &= \sum_{t=0}^{T-1} \left\| \mathbb{E}_t \left[ \nabla_\theta l(\theta_t, \xi_t) \nabla_\theta l(\theta_t, \xi_t)^\top - \nabla_\theta l(\theta^*, \xi_t) \nabla_\theta l(\theta^*, \xi_t)^\top  \right] \right\| \\
    &\leq  \sum_{t=0}^{T-1}\mathbb{E}_t \left[\left\| \left( \nabla_\theta l(\theta_t, \xi_t) - \nabla_\theta l(\theta^*, \xi_t) \right) \nabla_\theta l(\theta_t, \xi_t)^\top + \nabla_\theta l(\theta^*, \xi_t) \left( \nabla_\theta l(\theta_t, \xi_t) - \nabla_\theta l(\theta^*, \xi_t) \right)^\top \right\|\right] \\
    &\leq \sum_{t=0}^{T-1} \mathbb{E}_t \left[ \left\| \nabla_\theta l(\theta_t, \xi_t) - \nabla_\theta l(\theta^*, \xi_t) \right\| \cdot \left\| \nabla_\theta l(\theta_t, \xi_t) \right\| \right]  + \sum_{t=0}^{T-1} \mathbb{E}_t \left[ \left\| \nabla_\theta l(\theta_t, \xi_t) - \nabla_\theta l(\theta^*, \xi_t) \right\| \cdot \left\| \nabla_\theta l(\theta^*, \xi_t) \right\| \right] \\
    &\leq \sum_{t=0}^{T-1} \left( L \| \theta_t - \theta^* \|(LR+C_1) + L \| \theta_t - \theta^* \|(LR+C_1) \right) \tag{L-smoothness of $l$ with respect to $\theta$}\\
    &= 2L(C_1+LR)\sum_{t=0}^{T-1}   \| \theta_t - \theta^* \|.
    \end{align*}

For diminishing stepsize $\displaystyle \eta_t = \frac{4}{\mu\left(t+q\right)},\; q=\frac{8L^2}{\mu^2}$, by Lemma~\ref{lemma:SGD_general} and the projection on $\Theta$, we obtain
\begin{align*}
    \frac{1}{T}\sum_{t=0}^{T-1}\|\theta_t-\theta^*\| &\leq \frac{1}{T}\left( R\sum_{t=0}^{T-1}\frac{q}{t+q}+ \sqrt{c'(1+\log(1/\delta))} \sum_{t=0}^{T-1}\frac{1}{\sqrt{t+q}}\right)\\
    &\overset{(a)}{\leq} 
    qR\frac{\log(1+\frac{T}{q-1})}{T} + 2\sqrt{c'(1+\log(1/\delta))}\frac{\sqrt{T+q-1}}{T} \\
    & \overset{(b)}{\leq} \frac{\sqrt{q}}{\sqrt{T}}\left(qR + 2\sqrt{c'(1+\log(1/\delta))}\right),
\end{align*}
where $c'$ is an absolute constant defined in \eqref{eq:def_c'}. Inequality (a) uses 
\[  \sum\limits_{t=0}^{T-1} \frac{1}{q+t}\leq \log (1+\frac{T}{q-1})\quad \text{and} \quad \sum\limits_{t=0}^{T-1} \frac{1}{\sqrt{t+q}}\leq 2\sqrt{T+q-1}. \]
Inequality (b) follows from
\[ \log\left(1+\frac{T}{q-1}\right)\leq \sqrt{T+q-1}\quad \text{and} \quad \sqrt{1+\frac{q-1}{T}}\leq \sqrt{q}, \;\; \forall T\geq 1.  \]

Therefore, by applying the union bound, we obtain that with probability at least $1-T\delta$,
\begin{align*}
    \sum_{t=0}^{T-1} \left\|\mathbb{E}_{t}[\nabla_\theta l(\theta_t, \xi_t) \nabla_\theta l(\theta_t, \xi_t)^\top] - I(\theta^*)\right\|\leq \frac{1}{\sqrt{T}}\frac{4\sqrt2L^2(C_1+LR)}{\mu}\left(\frac{8L^2R}{\mu^2}+2\sqrt{c'(1+\log(1/\delta))}\right).
\end{align*}

This completes the proof.
\end{proof}

\subsection{Proof of Theorem~\ref{theorem:hessian_inv_approx}} \label{proof:theorem_hessian_inv_approx}
\begin{proof}
\textbf{Step 1: Reduction to EFIM error.} We aim to bound the error 
\[
\|A_k^T - H(\theta^*(v_k))^{-1}\|=\|(I_k^T)^{-1} - H(\theta^*(v_k))^{-1}\|,
\]
where the equality is due to $I(\theta^*(v_k)) = H(\theta^*(v_k))$ (formally stated in Proposition~\ref{prop:hessian_fim_equal}).
Observe that
\begin{align}
\|(I_k^T)^{-1} - I(\theta^*(v_k))^{-1}\| &= \|(I_k^T)^{-1}(I_k^T-I(\theta^*(v_k)))I(\theta^*(v_k))^{-1}\| \nonumber\\
&\leq \|(I_k^T)^{-1}\| \|I_k^T-I(\theta^*(v_k))\| \|I(\theta^*(v_k))^{-1}\| \nonumber\\
&\leq \frac{\|I_k^T- I(\theta^*(v_k))\|}{\sigma_{\min}(I_T)\,\mu} \tag{$\mu$-strongly convex of $\ell$ by Lemma~\ref{lemma:assumption}} \\
&\leq \frac{\|I_k^T - I(\theta^*(v_k)\|}{\lambda_{\min}(I_T)\,\mu}.\label{eq:hessian_inv_error1}
\end{align}
Since $I(\theta^*(v_k))$ and $I_k^T$ are symmetric matrix, Weyl's eigenvalue inequality yields
\begin{align}\label{eq:lambda_IT}
\lambda_{\text{min}}(I_k^T)&\geq 
    \lambda_{\text{min}}(I(\theta^*(v_k))) - \lambda_{\text{max}}(I(\theta^*(v_k))-I_k^T) \nonumber \\
    & \geq \lambda_{\text{min}}(I(\theta^*(v_k)))-\|I(\theta^*(v_k))-I_k^T\| \nonumber\\
    &\geq \mu-\|I(\theta^*(v_k))-I_k^T\|. \tag{ $\mu$-strongly convexity of $\ell$} 
\end{align}

Combining the above bound with bound~\eqref{eq:hessian_inv_error1}, we obtain
\begin{align*}
    \|(I_k^T)^{-1} - I(\theta^*(v_k))^{-1}\| \leq \frac{\|I_k^T - I(\theta^*(v_k))\|}{(\mu-\|I_k^T - I(\theta^*(v_k))\|)\,\mu}.
\end{align*}

Thus, controlling the Hessian inverse approximation error reduces to bounding the error $\| I_k^T-I(\theta^*(v_k))\|$.

\textbf{Step 2: Decomposition of EFIM error.} 
The main challenge of bounding $\| I_k^T-I(\theta^*(v_k))\|$ lies in the fact that
\[
I_k^{T}=\frac{1}{T}\sum_{t=0}^{T-1}
\nabla_\theta l(\theta_k^{t},\xi_k^t)\nabla_\theta l(\theta_k^{t},\xi_k^t)^\top
\]
is computed along the stochastic inner-level SGD trajectory 
$\theta_k^0,\ldots,\theta_k^{T-1}$, whereas
\[
I(\theta^*(v_k))
= \mathbb{E}_{\xi\sim q(\cdot)}\!\left[ 
\nabla_\theta l(\theta^*(v_k),\xi)
\nabla_\theta l(\theta^*(v_k),\xi)^\top 
\right]
\]
is defined at the inner optimum. To address this mismatch, we decompose the error into a stochastic error and an optimization error as follows:
\begin{align*}
\|I_k^{T} - I(\theta^*(v_k))\|
&\le 
\underbrace{
\frac{1}{T}\Bigl\|
\sum_{t=0}^{T-1}
\bigl(
\nabla_\theta l(\theta_k^{t},\xi_k^t)\nabla_\theta l(\theta_k^{t},\xi_k^t)^\top
-
\mathbb{E}_t[
\nabla_\theta l(\theta_k^{t},\xi_k^t)\nabla_\theta l(\theta_k^{t},\xi_k^t)^\top]
\bigr)
\Bigr\|
}_{\text{stochastic error term $E_1$}}
\\
&\quad+
\underbrace{
\frac{1}{T}\Bigl\|
\sum_{t=0}^{T-1}
\bigl(
\mathbb{E}_t[
\nabla_\theta l(\theta_k^{t},\xi_k^t)\nabla_\theta l(\theta_k^{t},\xi_k^t)^\top]
-
I(\theta^*(v_k))
\bigr)
\Bigr\|
}_{\text{optimization error term $E_2$}},
\end{align*} 
where the inequality follows from triangular inequality.

 \textbf{Step 3: Bounding stochastic and optimization error.}
The stochastic error term $E_1$ is controlled using the matrix Azuma inequality. A shown in Lemma~\ref{lemma:thm_Hinv_stochastic_error},  with probability at least $1-\delta$, we have
\begin{align*}
E_1 \leq 4\sqrt{2}\, (C_1 + LR)^2\, \sqrt{\frac{1}{T}\log\left(2d_2/\delta\right)}.
\end{align*} 

The optimization error term $E_2$, however, depends on how fast the inner-level SGD trajectory approaches 
$\theta^*(v_k)$. Under the smoothness and bounded-gradient assumptions on $l$, the bias reduces to bounding $\sum_{t=0}^{T-1}\|\theta_k^{t} - \theta^*(v_k)\|$. This requires a high-probability control of the SGD iterates, by Lemma~\ref{lemma:thm_Hinv_optimization_error}, we have with probability at least $1-T\delta$,
\begin{align*}
    E_2 \leq \frac{4\sqrt2L^2(C_1+LR)}{\mu}\frac{1}{\sqrt{T}}\left(\frac{8L^2}{\mu^2}R+2\sqrt{c'(1+\log(1/\delta))}\right).
\end{align*}
The detailed proof of Lemmas~\ref{lemma:thm_Hinv_stochastic_error} and~\ref{lemma:thm_Hinv_optimization_error} are provided in Appendix~\ref{app:proof_thm_Hinv_stochastic_error} and~\ref{app:proof_thm_Hinv_optimization_error}.

Therefore, by applying the union bound, we obtain that with probability at least $1-(1+T)\delta$,
\begin{align*}
    \|I_k^T - I(\theta^*(v_k))\|     \leq \frac{1}{\sqrt{T}}\,\Bigg(4\sqrt{2}(C_1 + LR)^2\sqrt{\log\left(2d_2/\delta\right)}  + \frac{4\sqrt2L^2(C_1+LR)}{\mu}\left(\frac{8L^2}{\mu^2}R+2\sqrt{c'(1+\log(1/\delta))}\right)\Bigg).
\end{align*}

Equivalently, for any $\delta \in (0,1)$, with probability at least $1-\delta$, we have
\begin{align}
        \|I_k^T - I(\theta^*(v_k))\| \leq \frac{4\sqrt{2}(C_1 + LR)}{\sqrt{T}}\,\Bigg((C_1 + LR)\sqrt{\log\left(2d_2\frac{1+T}{\delta}\right)}  + \frac{2L^2}{\mu}\sqrt{c'(1+\log\left(\frac{1+T}{\delta}\right))}+\frac{8L^4R}{\mu^3}\Bigg)\nonumber.
\end{align}

\textbf{Step 4: Error aggregation and sample complexity condition.}
Define
\begin{align}\label{eq:def_gamma}
    \gamma(t,\delta)&:=\frac{4\sqrt{2}(C_1 + LR)}{\sqrt{t}}\,\Bigg((C_1 + LR)\sqrt{\log\left(2d_2\frac{1+t}{\delta}\right)} \Bigg)\nonumber\\
    &\qquad+ \frac{4\sqrt{2}(C_1 + LR)}{\sqrt{t}}\left(\frac{2L^2}{\mu}\sqrt{c'(1+\log\left(\frac{1+t}{\delta}\right))}+\frac{8L^4R}{\mu^3}\right),
\end{align}
and 
\begin{align}\label{eq:condition_T0}
T_0(\delta) =\min \left\{t \in \mathbb{N}\mid \gamma(t,\delta)\leq \frac{\mu}{2} \right\}.
\end{align}

Then, for any $\delta \in (0,1)$, when $T\geq T_0(\delta)$,  
with probability at least $1-\delta$, we have $\left\| I_k^T-I(\theta^*(v_k)) \right\| \leq \gamma(T,\delta) \leq \frac{\mu}{2}$, and consequently,
\[
\|(I_k^T)^{-1} - I(\theta^*(v_k))^{-1}\|\leq\frac{\|I_k^T - I(\theta^*(v_k))\|}{(\mu-\|I_k^T - I(\theta^*(v_k))\|)\,\mu} 
\leq \frac{2}{\mu^2}\gamma(T,\delta).
\]

Finally, with probability at least $1-\delta$, we have
\begin{align*}
    &\|A_k^T-H(\theta^*(v_k))^{-1}\|=\| (I_k^T)^{-1}-I(\theta^*(v_k))^{-1}\|\\
    &\leq \frac{\|I_k^T - I(\theta^*(v_k))\|}{(\mu-\|I_k^T - I(\theta^*(v_k))\|)\,\mu} = \mathcal{O}\left( \frac{1}{\sqrt{T}}\sqrt{1+\log\left(\frac{1+T}{\delta}\right)} \right).
\end{align*}
This completes the proof.
\end{proof}

\subsection{Statements and Proofs of Proposition~\ref{prop:l_21_approx} and Lemma~\ref{lemma:l_21_approx_practical}}
\begin{proof}\label{app:proof_l_21_approx}
\begin{align*}
    \left\|L_T - \nabla_{\theta,v}^2\ell(\theta^*) \right\|
    = &\left\| \frac1T\sum_{t=0}^{T-1} \nabla_{\theta,v}^2l(\theta_t,\xi_t)-\frac1T\sum_{t=0}^{T-1}\nabla_{\theta,v}^2\ell(\theta_t)+\frac1T\sum_{t=0}^{T-1}\nabla_{\theta,v}^2\ell(\theta_t)-\nabla_{\theta,v}^2\ell(\theta^*)\right\|\\
    \leq & \frac1T\Bigg\| \sum_{t=0}^{T-1}\left(\nabla_{\theta,v}^2l(\theta_t,\xi_t)-\nabla_{\theta,v}^2\ell(\theta_t)\right)\Bigg\|+\frac1T\left\|\sum_{t=0}^{T-1}\left(\nabla_{\theta,v}^2\ell(\theta_t)-\nabla_{\theta,v}^2\ell(\theta^*)\right) \right\|\\
    \overset{(a)}{\leq} & \underbrace{\frac1T\Bigg\| \sum_{t=0}^{T-1}\left(\nabla_{\theta,v}^2l(\theta_t,\xi_t)-\mathbb{E}_t[\nabla_{\theta,v}^2l(\theta_{t}, \xi_t)]\right)\Bigg\|}_{E_1}+\underbrace{\frac1T\left\|\sum_{t=0}^{T-1}\left(\nabla_{\theta,v}^2\ell(\theta_t)-\nabla_{\theta,v}^2\ell(\theta^*)\right) \right\|}_{E_2},
\end{align*}
where Inequality (a) follows from unbiaseness Assumption~\ref{as:unbias_grad}.

To bound the term $E_1$, we follow a similar argument to the proof of Proposition~\ref{theorem:hessian_inv_approx}. We first define
\begin{align*}
\bar{\Delta}_t:=\nabla_{\theta,v}^2l(\theta_{t}, \xi_t)-\mathbb{E}_t[\nabla_{\theta,v}^2l(\theta_{t}, \xi_t)],
\end{align*}
so that by (2) of Lemma~\ref{lemma:assumption}, we have $\|\bar{\Delta}_t\| \leq 2L.$ 

Consider the self-adjoint dilation of $\bar{\Delta}_t$: 
\begin{align*}
\mathscr{S}(\bar{\Delta}_t) := 
\begin{bmatrix}
0 & \bar{\Delta}_t \\
\bar{\Delta}_t^\top & 0
\end{bmatrix}\in \mathbb{R}^{(d_2+d_1)\times (d_2+d_1)}.
\end{align*}
By $\lambda_{\text{max}}(\mathscr{S}(\bar{\Delta}_t))=\|\mathscr{S}(\bar{\Delta}_t)\|=\|\bar{\Delta}_t\|$  \cite{tropp2012user}[(2.12)], we have $\lambda_{\text{max}}(\mathscr{S}(\bar{\Delta}_t))\leq 2L$ and hence
\begin{align*}
    \mathscr{S}(\bar{\Delta}_t)^2 \preceq 4L^2\mathbf{I}.
\end{align*}
Applying Matrix Azuma's inequality~\cite{tropp2012user}[Theorem 7.1], with variance parameter 
\[
\sigma^2:=\|\sum_{t=0}^{T-1} 4L^2\mathbf{I} \|=4L^2T,
\]
yields, for any $u\geq 0$,
\begin{align*}
    \mathbb{P}\left( \lambda_{\text{max}}\left(\sum_{t=0}^{T-1}\mathscr{S}(\bar{\Delta}_t)\right)\geq u\right)\leq (d_1+d_2) \exp\left(-\frac{u^2}{8\sigma^2}\right).
\end{align*}

Repeating the symmetrization argument used in the proof of Proposition~\ref{theorem:hessian_inv_approx}, we obtain
\begin{align*}
    \mathbb{P}\left( \left\| \sum_{t=0}^{T-1}\mathscr{S}(\bar{\Delta}_t) \right\|\geq u\right)\leq 2(d_1+d_2) \exp\left(-\frac{u^2}{8\sigma^2}\right).
\end{align*}
Since $\| \sum_{t=0}^{T-1}\mathscr{S}(\bar{\Delta}_t)\|=\|\mathscr{S}(\sum_{t=0}^{T-1}\bar{\Delta}_t)\|=\|\sum_{t=0}^{T-1} \bar{\Delta}_t\|$, it follows that
\begin{align*}
    \mathbb{P}\left( \left\| \sum_{t=0}^{T-1}\bar{\Delta}_t \right\|\geq u\right)\leq 2(d_1+d_2) \exp\left(-\frac{u^2}{8\sigma^2}\right).
\end{align*}

Therefore, for any $\delta\in(0,1)$, with probability at least $1-\delta$, 
\begin{align*}
E_1=\frac1T\left\|\sum_{t=0}^{T-1}\bar{\Delta}_t\right\| \leq \frac{4\sqrt{2}L}{\sqrt{T}} \sqrt{\log\left(2(d_1+d_2)/\delta\right)}.
\end{align*}

We now bound the term $E_2$. By Assumption~\ref{as:l_inner} , ${\nabla}^2_{2,1}\ell(v,\theta)$ is Lipschitz continuous in $\theta$, therefore 
\begin{align*}
E_2\leq\frac{1}{T}\sum_{t=0}^{T-1}\left\| \nabla_{\theta,v}^2\ell(\theta_t)-\nabla_{\theta,v}^2\ell(\theta^*) \right\|
\leq \frac{L_{l_{\theta,v}}}{T}\sum_{t=0}^{T-1}\|\theta_{t}-\theta^*\|.
\end{align*}
Applying Lemma~\ref{lemma:SGD_general}, with stepsize $\eta_t=4/{\mu(t+\frac{8L^2}{\mu^2})}$, we obtain that, for any $\delta\in(0,1)$, with probability at least $1-T\delta$,
\begin{align*}
    E_2\leq \frac{2\sqrt2L_{l_{\theta,v}}}{\sqrt{T}}\frac{L}{\mu}\left(\frac{8L^2R}{\mu^2}+2\sqrt{c'(1+\log(1/\delta))}\right), 
\end{align*}
where $c'$ is an absolute constant and defined in~\eqref{eq:def_c'}.

Finally, combining the bounds for $E_1$ and $E_2$,  applying the union bound, we have that with probability at least $1-(1+T)\delta$, 
\begin{align*}
        \left\|L_T - \nabla_{\theta,v}^2\ell( \theta^*) \right\|
        & \leq \frac{4\sqrt{2}L}{\sqrt{T}}\sqrt{\log(2(d_1+d_2)/\delta)} \;+\; \frac{2\sqrt2L_{l_{\theta,v}}}{\sqrt{T}}\frac{L}{\mu}\left(\frac{8L^2R}{\mu^2}+2\sqrt{c'(1+\log(1/\delta))}\right) \\
        &= \mathcal{O}(\frac{1}{\sqrt{T}}\sqrt{1+\log(1/\delta)}).
\end{align*} 
Equivalently, for any $\delta \in (0,1)$, with probability at least $1-\delta$, we have
\begin{align*}
    \left\|L_T - \nabla_{\theta,v}^2\ell( \theta^*) \right\| = \mathcal{O}\left(\frac{1}{\sqrt{T}}\sqrt{1+\log\left(\frac{1+T}{\delta}\right)}\right).
\end{align*}

This completes the proof.
\end{proof}

\begin{lemma}\label{lemma:l_21_approx_practical}
Under Assumptions~\ref{as:l_inner} and~\ref{as:unbias_grad}, for any 
$\delta\in(0,1)$, with probability at least $1-\delta$,
\[
\|\hat{L}_k^M - \nabla_{\theta,v}^2 \ell(v_k,\theta^*(v_k))\|
=
\mathcal{O}\!\left(\frac{1}{\sqrt{M}}+
\sqrt{\frac{1+\log(1/\delta)}{T}}
\
\right).
\]
\end{lemma}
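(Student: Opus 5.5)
The plan is to split the error at the final inner iterate $\theta_k^T$:
\begin{align*}
\|\hat{L}_k^M - \nabla_{\theta,v}^2 \ell(v_k,\theta^*(v_k))\|
\le \underbrace{\Bigl\|\frac1M\sum_{i=0}^{M-1}\bigl(\nabla_{\theta,v}^2 l(v_k,\theta_k^T,\xi_k^i)-\nabla_{\theta,v}^2\ell(v_k,\theta_k^T)\bigr)\Bigr\|}_{E_1}
+\underbrace{\|\nabla_{\theta,v}^2\ell(v_k,\theta_k^T)-\nabla_{\theta,v}^2\ell(v_k,\theta^*(v_k))\|}_{E_2}.
\end{align*}
The point of this split is that, conditionally on $\mathcal{F}_k^{T-1}$ (which fixes $\theta_k^T$), the fresh samples $\xi_k^0,\dots,\xi_k^{M-1}$ used in $\hat{L}_k^M$ are i.i.d. and independent of $\theta_k^T$. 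So $E_1$ is a sum of conditionally independent, mean-zero random matrices, by Assumption~\ref{as:unbias_grad}. Here I take these samples to be drawn after the inner loop, as the definition of $\hat{L}_k^m$ suggests.

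For $E_1$ I would reuse the argument from the proof of Proposition~\ref{prop:l_21_approx}. By part (2) of Lemma~\ref{lemma:assumption}, each centered summand has operator norm at most $2L$. I would pass to the self-adjoint dilation $\mathscr{S}(\cdot)$, apply matrix Azuma (or matrix Hoeffding, since the summands are conditionally independent) with variance parameter $4L^2M$, and symmetrize. This gives, with probability at least $1-\delta/2$,
\[
E_1\le \frac{4\sqrt2 L}{\sqrt M}\sqrt{\log\bigl(4(d_1+d_2)/\delta\bigr)}.
\]
Because the bound holds conditionally on $\mathcal{F}_k^{T-1}$ for every realization, it also holds unconditionally.

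For $E_2$ I would use the $L_{\ell_{\theta,v}}$-Lipschitz continuity of $\nabla_{\theta,v}^2\ell(v_k,\cdot)$ from Assumption~\ref{as:l_inner}, so that $E_2\le L_{\ell_{\theta,v}}\|\theta_k^T-\theta^*(v_k)\|$. Lemma~\ref{lemma:SGD_general} with $t=T$, $q=8L^2/\mu^2$ and $\|\theta_k^0-\theta^*(v_k)\|\le 2R$ then gives, with probability at least $1-\delta/2$,
\[
\|\theta_k^T-\theta^*(v_k)\|\le \frac{2qR}{T+q}+\sqrt{\frac{c'(1+\log(2e/\delta))}{T+q}}.
\]
This is $\mathcal{O}\bigl(\sqrt{(1+\log(1/\delta))/T}\bigr)$, since $1/T\le 1/\sqrt T$. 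Unlike Proposition~\ref{prop:l_21_approx}, only a single SGD iterate is involved here, so no union bound over $t$ is needed and no $\log(1+T)$ factor appears.

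A union bound over the two events then gives the claimed rate, with the dimension-dependent logarithm absorbed into the constant of the $\mathcal{O}(1/\sqrt M)$ term. The only delicate step is justifying the conditioning in $E_1$: the samples must be independent of $\theta_k^T$ so that the matrix concentration applies with $\theta_k^T$ treated as fixed. If the samples were instead reused from the trajectory, this would fail and I would fall back on the martingale argument of Proposition~\ref{prop:l_21_approx}. The remaining steps are routine.
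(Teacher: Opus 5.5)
Your proposal is correct and follows essentially the same route as the paper. Both split the error at $\theta_k^T$, control the sampling term by matrix Hoeffding on the self-adjoint dilation with variance parameter $4L^2M$, and control the bias term by $L_{\ell_{\theta,v}}\|\theta_k^T-\theta^*(v_k)\|$ via Lemma~\ref{lemma:SGD_general} and a union bound. Your explicit conditioning on $\mathcal{F}_k^{T-1}$ and your use of $2R$ for the initial distance are, if anything, slightly more careful than the paper, which uses $R$.

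One small caveat, shared with the paper: the factor $\sqrt{\log(4(d_1+d_2)/\delta)}$ in the $M$-term depends on $\delta$ as well as on dimension. It is therefore suppressed rather than genuinely absorbed into a constant, and strictly that term is $\mathcal{O}\bigl(\sqrt{(1+\log(1/\delta))/M}\bigr)$.
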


\begin{proof}\label{app:proof_l_21_approx_practical}
\begin{align*}
    & \left\|\hat{L}_M - \nabla_{\theta,v}^2\ell(\theta^*) \right\|\\
    = &\left\| \frac1M\sum_{i=1}^{M} \nabla_{\theta,v}^2l(\theta_T,\xi_i)-\nabla_{\theta,v}^2\ell(\theta_T)+\nabla_{\theta,v}^2\ell(\theta_T)-\nabla_{\theta,v}^2\ell(\theta^*)\right\|\\
    \leq & \frac1M\Bigg\| \sum_{i=1}^{M}\left(\nabla_{\theta,v}^2l(\theta_T,\xi_i)-\nabla_{\theta,v}^2\ell(\theta_T)\right)\Bigg\|+\left\|\nabla_{\theta,v}^2\ell(\theta_T)-\nabla_{\theta,v}^2\ell(\theta^*) \right\|\\
    \overset{(a)}{\leq} & \underbrace{\frac1M\Bigg\| \sum_{i=1}^{M}\left(\nabla_{\theta,v}^2l(\theta_T,\xi_i)-\mathbb{E}_t[\nabla_{\theta,v}^2l(\theta_T, \xi_i)]\right)\Bigg\|}_{E_1}+L_{\ell_{\theta,v}}\left\|\theta_T-\theta^*\right\|,
\end{align*}
where Inequality (a) follows from the unbiaseness Assumption~\ref{as:unbias_grad} and the Lipschitz continuity of $\nabla_{\theta,v}^2\ell$ from Assumption~\ref{as:l_inner}.

We now bound the term $E_1$. We first define
\begin{align*}
\bar{\Delta}_i:= \nabla_{\theta,v}^2l(\theta_T, \xi_i)-\mathbb{E}_t[\nabla_{\theta,v}^2l(\theta_T, \xi_i)],
\end{align*}
so that by (2) of Lemma~\ref{lemma:assumption}, we have $\|\tilde{\Delta}_i\| \leq 2L.$

Consider the self-adjoint dilation of $\tilde{\Delta}_i$: 
\begin{align*}
\mathscr{S}(\tilde{\Delta}_i) := 
\begin{bmatrix}
0 & \tilde{\Delta}_i \\
\tilde{\Delta}_i^\top & 0
\end{bmatrix}\in \mathbb{R}^{(d_2+d_1)\times (d_2+d_1)}.
\end{align*}
By $\lambda_{\text{max}}(\mathscr{S}(\tilde{\Delta}_i))=\|\mathscr{S}(\tilde{\Delta}_i)\|=\|\tilde{\Delta}_i\|$  \cite{tropp2012user}[(2.12)], we have $\lambda_{\text{max}}(\mathscr{S}(\bar{\Delta}_i))\leq 2L$, and hence
\begin{align*}
    \mathscr{S}(\bar{\Delta}_i)^2 \preceq 4L^2\mathbf{I}.
\end{align*}
Applying Matrix Hoeffding inequality~\cite{tropp2012user}[Theorem 1.3], with variance parameter 
\[
\sigma^2:=\|\sum_{i=1}^M 4L^2\mathbf{I} \|=4L^2M,
\]
yields, for any $u\geq 0$,
\begin{align*}
    \mathbb{P}\left( \lambda_{\text{max}}\left(\sum_{i=1}^{M}\mathscr{S}(\tilde{\Delta}_i)\right)\geq u\right)\leq (d_1+d_2) \exp\left(-\frac{u^2}{8\sigma^2}\right).
\end{align*}

Repeating the symmetrization argument used in the proof of Proposition~\ref{theorem:hessian_inv_approx}, we obtain
\begin{align*}
    \mathbb{P}\left( \left\| \sum_{i=1}^{M}\mathscr{S}(\tilde{\Delta}_i) \right\|\geq u\right)\leq 2(d_1+d_2) \exp\left(-\frac{u^2}{8\sigma^2}\right).
\end{align*}
Since $\| \sum_{i=1}^{M}\mathscr{S}(\tilde{\Delta}_i)\|=\|\mathscr{S}(\sum_{i=1}^{M}\tilde{\Delta}_i)\|=\|\sum_{i=1}^{M} \tilde{\Delta}_i\|$, it follows that
\begin{align*}
    \mathbb{P}\left( \left\| \sum_{i=1}^{M}\tilde{\Delta}_i \right\|\geq u\right)\leq 2(d_1+d_2) \exp\left(-\frac{u^2}{8\sigma^2}\right).
\end{align*}

Therefore, for any $\delta\in(0,1)$, with probability at least $1-\delta$, 
\begin{align*}
E_1=\frac1M\left\|\sum_{i=1}^{M}\tilde{\Delta}_i\right\| \leq \frac{4\sqrt{2}L}{\sqrt{M}} \sqrt{\log\left(2(d_1+d_2)/\delta\right)}.
\end{align*}

We now bound the term $L_{\ell_{\theta,v}}\left\|\theta_T-\theta^*\right\|$. Applying Lemma~\ref{lemma:SGD_general}, with diminishing stepsize $\displaystyle \eta_t=\frac{4}{\mu(t+\frac{8L^2}{\mu^2})}$, we obtain that, for any $\delta\in(0,1)$, with probability at least $1-\delta$,
\begin{align*}
    \|\theta_T-\theta^*\|\leq \frac{\frac{8L^2}{\mu^2}}{T+\frac{8L^2}{\mu^2}}\|\theta_0-\theta^*\|+\sqrt{\frac{c'(1+\log(1/\delta))}{T+\frac{8L^2}{\mu^2}}},
\end{align*}
where $c'$ is defined in~\eqref{eq:def_c'}.

Finally, applying the union bound, we see that with probability at least $1-2\delta$, 
\begin{align*}
        &\left\|L_M - \nabla_{\theta,v}^2\ell( \theta^*) \right\|\\
        & \leq \frac{4\sqrt{2}L}{\sqrt{M}}\sqrt{\log(2(d_1+d_2)/\delta)} \;+\; \frac{\frac{8L^2R}{\mu^2}}{T+\frac{8L^2}{\mu^2}}+\sqrt{\frac{c'(1+\log(1/\delta))}{T+\frac{8L^2}{\mu^2}}} \\
        &= \mathcal{O}(\frac{1}{\sqrt{M}}+\frac{1}{\sqrt{T}}\sqrt{1+\log(1/\delta)}).
\end{align*} 
This completes the proof.
\end{proof}

\newpage
\section{Proofs for the NHGD Convergence Rate}\label{app:outer}

We begin by recalling two results from Lemma 2.2 of \cite{ghadimi2018approximation}, which will be used in our analysis. Under Assumptions~\ref{as:l_inner} and~\ref{as:f_smoothness}, the following properties hold:

\begin{enumerate}
    \item  The mapping $\theta^*(v)$ is $\frac{L}{\mu}$-Lipschitz continuous with respect to $v$.
    \item The function $\Phi(v):=f(v,\theta^*(v))$ is $L_{v}$-smooth with respect to $v$, where 
\end{enumerate}
\begin{align}\label{eq:L_v}
        L_v:=\bar{L}_{f_v} + \frac{L}{\mu}\left(L_{f_v}+\bar{L}_{f_\theta} \right)+\frac{L^2}{\mu^2}L_{f_\theta}+D_1\left( \frac{\bar{L}_{\ell_{\theta,v}}}{\mu}+ \frac{L}{\mu^2}(\bar{L}_{\ell_{\theta,\theta}}+L_{\ell_{\theta,v}}) +\frac{L^2L_{\ell_{\theta,\theta}}}{\mu^3} \right).      
    \end{align}

Recall that the approximate hypergradient is defined as
\begin{align}\label{eq:apprx_f_v}
    \widehat{\nabla}\Phi(v_k) :=   \nabla_v f(v_k,\theta_k^T) + L_k^T\cdot A_k^T \cdot \nabla_{\theta}f(v_k,\theta_k^T),
\end{align}
and the true hypergradient is given by
\begin{align}\label{eq:f_v}
    \nabla \Phi(v_k) = \nabla_v f(v_k,\theta^*(v_k)) + \nabla_{\theta,v}^2\ell(v_k,\theta^*(v_k))\cdot H(\theta^*(v_k))^{-1} \cdot \nabla_{\theta}f(v_k,\theta^*(v_k)).
\end{align}

Accordingly, the outer-level update can be written as 
\begin{align*}
v_{k+1} = v_k - \alpha \widehat{\nabla}\Phi(v_k).
\end{align*}

\subsection{Statements and Proofs of Lemma~\ref{lemma:outer1} and Lemma~\ref{lemma:grad_error}}
\begin{lemma} \label{lemma:outer1} Under Assumptions~\ref{as:l_inner} and~\ref{as:f_smoothness}, and with outer stepsize $\alpha = \tfrac{1}{4L_v}$, we have {\small \begin{align} \label{eq:outer1} \frac{1}{16L_v} \sum_{k=1}^{K} \|\nabla\Phi(v_k)\|^2 &\le\; \Phi(v_1) - \Phi^*  + \frac{3}{16L_v} \sum_{k=1}^{K} \|\nabla\Phi(v_k) - \widehat{\nabla}\Phi(v_k)\|^2, \end{align}} where $L_v$ is denoted in~\eqref{eq:L_v} and $\Phi^*$ denotes the optimal value of the outer objective. \end{lemma}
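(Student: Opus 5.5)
The plan is to use the standard descent lemma for the $L_v$-smooth function $\Phi$ (Lemma 2.2 of \citet{ghadimi2018approximation}, recalled above), applied to the inexact gradient step $v_{k+1}=v_k-\alpha\widehat{\nabla}\Phi(v_k)$. Smoothness gives
\begin{align*}
\Phi(v_{k+1}) \le \Phi(v_k) - \alpha\langle \nabla\Phi(v_k), \widehat{\nabla}\Phi(v_k)\rangle + \frac{L_v\alpha^2}{2}\|\widehat{\nabla}\Phi(v_k)\|^2 .
\end{align*}
I will write $\widehat{\nabla}\Phi(v_k)=\nabla\Phi(v_k)+e_k$ with $e_k:=\widehat{\nabla}\Phi(v_k)-\nabla\Phi(v_k)$, and expand both terms in $\nabla\Phi(v_k)$ and $e_k$.

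For the inner product I will use Young's inequality in the form $-\langle a, a+e\rangle \le -\|a\|^2 + \tfrac12\|a\|^2+\tfrac12\|e\|^2 = -\tfrac12\|a\|^2+\tfrac12\|e\|^2$. For the quadratic term I will use $\|a+e\|^2\le 2\|a\|^2+2\|e\|^2$. With $\alpha=1/(4L_v)$ this yields
\begin{align*}
\Phi(v_{k+1}) \le \Phi(v_k) - \Big(\tfrac{\alpha}{2}-L_v\alpha^2\Big)\|\nabla\Phi(v_k)\|^2 + \Big(\tfrac{\alpha}{2}+L_v\alpha^2\Big)\|e_k\|^2 .
\end{align*}
Here $\tfrac{\alpha}{2}-L_v\alpha^2=\tfrac{1}{8L_v}-\tfrac{1}{16L_v}=\tfrac{1}{16L_v}$ and $\tfrac{\alpha}{2}+L_v\alpha^2=\tfrac{3}{16L_v}$, which are exactly the constants in \eqref{eq:outer1}.

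Finally, I sum over $k=1,\dots,K$ and telescope, bounding $\Phi(v_{K+1})\ge\Phi^*$ from below. This gives the claim. There is no real obstacle here. The only care needed is choosing the Young's-inequality weights so that the constants come out as $1/16$ and $3/16$, and noting that no randomness enters: the inequality holds deterministically, pathwise.
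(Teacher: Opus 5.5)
Your proposal is correct and matches the paper's proof essentially step for step. The paper likewise combines the $L_v$-smoothness descent inequality with $\|a+e\|^2\le 2\|a\|^2+2\|e\|^2$ on the quadratic term and Young's inequality on the cross term, obtaining the coefficients $\tfrac{\alpha}{2}-L_v\alpha^2=\tfrac{1}{16L_v}$ and $\tfrac{\alpha}{2}+L_v\alpha^2=\tfrac{3}{16L_v}$, and then telescopes using $\Phi(v_{K+1})\ge\Phi^*$.
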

\begin{proof}\label{app:proof_outer1}
    Under Assumptions~\ref{as:l_inner} and~\ref{as:f_smoothness}, $\Phi(v)$ is $L_v$-smooth with respect to $v$. Thus, we have
\begin{align}
     \Phi(v_{k+1}) &\leq \Phi(v_{k}) + \langle \nabla\Phi(v_k), v_{k+1} - v_k \rangle + \frac{L_v}{2} \|v_{k+1} - v_k\|^2 \nonumber\\
    &= \Phi(v_{k}) + \langle \nabla\Phi(v_k), -\alpha \widehat{\nabla}\Phi(v_k) \rangle + \frac{L_v}{2} \alpha^2 \|\widehat{\nabla}\Phi(v_k)\|^2 \nonumber\\
    &= \Phi(v_{k}) + \alpha \langle \nabla\Phi(v_k), \nabla\Phi(v_k) - \widehat{\nabla}\Phi(v_k) \rangle - \alpha \|\nabla\Phi(v_k)\|^2 + \frac{L_v \alpha^2}{2}  \| \widehat{\nabla}\Phi(v_k)-\nabla\Phi(v_k)+\nabla\Phi(v_k) \|^2 \nonumber\\
    & \overset{(a)}{\leq} \Phi(v_{k}) + \alpha \langle \nabla\Phi(v_k), \nabla\Phi(v_k) - \widehat{\nabla}\Phi(v_k) \rangle  - (\alpha - L_v \alpha^2) \| \nabla\Phi(v_k) \|^2 
    + L_v \alpha^2  \| \nabla\Phi(v_k) - \widehat{\nabla}\Phi(v_k) \|^2 \nonumber\\
    & \overset{(b)}{\leq} \Phi(v_{k})+ (\frac{\alpha}{2}+L_v \alpha^2 ) \|\nabla\Phi(v_k) - \widehat{\nabla}\Phi(v_k) \|^2 
  - (\frac{\alpha}{2} - L_v \alpha^2) \| \nabla\Phi(v_k) \|^2, \label{eq:outer_iter}
\end{align}
where Inequality (a) follows that $\| a+b\|^2 \leq  2\|a\|^2 + 2\|b\|^2,\,\,\, \forall a,b\in \mathbb{R}^{d_1}$. Inequality (b) follows that $\langle a,b \rangle \leq \frac12 \|a\|^2+\frac12 \|b\|^2,\,\,\, \forall a,b\in \mathbb{R}^{d_1}$.

Telescoping \eqref{eq:outer_iter} over $k$ from $1$ to $K$ and let $\alpha =\frac{1}{4L_v}$, we have
\begin{align}
\frac{1}{16L_v}\sum_{k=1}^{K} \| \nabla\Phi(v_k) \|^2 
 &\leq \Phi(v_1) - \Phi(v_k)  
 + \frac{3}{16L_v}\sum_{k=1}^{K}  \| \nabla\Phi(v_k) - \widehat{\nabla}\Phi(v_k) \|^2 \nonumber\\
 &\leq \Phi(v_1) - \Phi^*+ \frac{3}{16L_v}\sum_{k=1}^{K}  \| \nabla\Phi(v_k) - \widehat{\nabla}\Phi(v_k) \|^2, \label{eq:grad_sum}
\end{align}
where $\Phi^*$ denotes the finite optimal value of the bi-level problem. This completes the proof.
\end{proof}

\begin{lemma}\label{lemma:grad_error_decompose}
Under Assumptions~\ref{as:l_inner} and~\ref{as:f_smoothness}, we have that 
\begin{align*}
    \|\widehat{\nabla}\Phi(v_k)-\nabla\Phi(v_k)\|&\leq\left(L_{f_v}+\frac{LL_{f_\theta}}{\mu} \right)\left\| \theta_k^T-\theta^*(v_k) \right\| + (LR+C_1)\left \|\left(L_k^T - \nabla_{\theta,v}^2\ell(v_k,\theta^*(v_k))
    \right) A_k^T \right\| \\
    &\quad +  L(LR+C_1) \left\| A_k^T -  H(\theta^*(v_k))^{-1} \right\|.
\end{align*}

\end{lemma}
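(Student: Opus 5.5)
The plan is a direct add-and-subtract (telescoping) argument on the two expressions \eqref{eq:apprx_f_v} and \eqref{eq:f_v}, applying the triangle inequality and bounding each difference with the Lipschitz and boundedness assumptions. Write $\theta^*:=\theta^*(v_k)$, $H^*:=H(\theta^*)$, $J^*:=\nabla_{\theta,v}^2\ell(v_k,\theta^*)$, and $g_T:=\nabla_\theta f(v_k,\theta_k^T)$, $g^*:=\nabla_\theta f(v_k,\theta^*)$. The difference $\widehat{\nabla}\Phi(v_k)-\nabla\Phi(v_k)$ splits as
\begin{align*}
&\bigl[\nabla_v f(v_k,\theta_k^T)-\nabla_v f(v_k,\theta^*)\bigr]
+\bigl(L_k^T-J^*\bigr)A_k^T g_T
+J^*\bigl(A_k^T-(H^*)^{-1}\bigr)g_T
+J^*(H^*)^{-1}\bigl(g_T-g^*\bigr).
\end{align*}
Here the transposes and signs follow the convention in \eqref{eq:apprx_f_v}--\eqref{eq:f_v}. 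Sign conventions do not matter for norms.

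The four pieces are bounded as follows.

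\begin{itemize}
\item[(i)] The first piece is at most $L_{f_v}\|\theta_k^T-\theta^*\|$, by Assumption~\ref{as:f_smoothness}(i).
\item[(ii)] The fourth piece uses $\|J^*\|\le L$ from Lemma~\ref{lemma:assumption}(2), after passing the expectation through via Assumption~\ref{as:unbias_grad} and Jensen. It also uses $\|(H^*)^{-1}\|\le 1/\mu$ from the strong convexity in Lemma~\ref{lemma:assumption}(4), and Lipschitzness of $\nabla_\theta f$. Together these give $\frac{L L_{f_\theta}}{\mu}\|\theta_k^T-\theta^*\|$.
\item[(iii)] For the second piece we keep the product $(L_k^T-J^*)A_k^T$ intact, as in the statement, and bound only $\|g_T\|$.
\item[(iv)] For the third piece we use $\|J^*\|\le L$ and again need a bound on $\|g_T\|$.
\end{itemize}

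The main obstacle is the constant multiplying pieces (iii) and (iv). The statement uses $(LR+C_1)$ where one would naively expect the bound $D_1$ on $\|\nabla_\theta f\|$ from Assumption~\ref{as:f_smoothness}(ii). $LR+C_1$ is the bound on the inner stochastic gradients from Lemma~\ref{lemma:assumption}(3), not on $\nabla_\theta f$. I would first try to justify the stated constant via the KL structure, but I suspect this is a constant mismatch in the statement. Failing that, I would prove the lemma with $\max\{D_1, LR+C_1\}$ in place of $LR+C_1$, which is harmless for the downstream $\mathcal{O}(\cdot)$ rates. Collecting the four bounds then yields the stated inequality.
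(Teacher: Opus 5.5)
Your proposal is correct and matches the paper's proof essentially line for line: the same four-term add-and-subtract split, with $L_{f_v}$ and $L L_{f_\theta}/\mu$ coming from Lipschitzness of $\nabla_v f,\nabla_\theta f$, combined with $\|\nabla_{\theta,v}^2\ell\|\le L$ and $\|H(\theta^*(v_k))^{-1}\|\le 1/\mu$. Your suspicion about the constant is also right. The paper justifies the factor on the last two terms by the boundedness of $\nabla_\theta f$ in Assumption~\ref{as:f_smoothness}(ii), which gives $D_1$, so $(LR+C_1)$ there is a slip; nothing links $f$ to $l$, so no KL-structure argument can recover it, and simply using $D_1$ (no max needed) gives a rigorous version with unchanged downstream rates.
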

\begin{proof}\label{app:proof_grad_error_decompose}

We bound the difference of the approximate and true hypergradient as follows
\begin{align}
   & \|\widehat{\nabla}\Phi(v_k)-\nabla\Phi(v_k)\|\nonumber\\
   &=\|\nabla_v f(v_k,\theta_k^T) - L_k^T\, A_k^T \,\nabla_{\theta}f(v_k,\theta_k^T)- \nabla_v f (v_k,\theta^*(v_k)) +\nabla_{\theta,v}^2\ell(v_k,\theta^*(v_k))\, H(\theta^*(v_k))^{-1} \, \nabla_{\theta}f(v_k,\theta^*(v_k))\|\nonumber\\
   &\leq  \left\|\nabla_v f(v_k,\theta_k^T) -  \nabla_v f (v_k,\theta^*(v_k))\right\|  + \|L_k^T\, A_k^T \, \nabla_{\theta}f(v_k,\theta_k^T) - \nabla_{\theta,v}^2\ell(v_k,\theta^*(v_k))\, H(\theta^*(v_k))^{-1} \, \nabla_{\theta}f(v_k,\theta^*(v_k))\|\nonumber\\
   &\overset{(a)}{\leq} L_{f_v} \left\| \theta_k^T - \theta^*(v_k) \right\| + \left\|\left(L_k^T - \nabla_{\theta,v}^2\ell(v_k,\theta^*(v_k))
    \right) A_k^T \ \nabla_\theta f(v_k,\theta_k^T)\right\| \nonumber\\
    &\quad + \left\|\nabla_{\theta,v}^2\ell(v_k,\theta^*(v_k))  \left( A_k^T -  H(\theta^*(v_k))^{-1} \right)  \nabla_\theta f(v_k,\theta_k^T)\right\|\nonumber\\
    &\quad +\left\|\nabla_{\theta,v}^2\ell(v_k,\theta^*(v_k))   H(\theta^*(v_k))^{-1} \left(\nabla_\theta f(v_k,\theta_k^T) - \nabla_\theta f(v_k,\theta^*(v_k))
     \right)\right\|\nonumber\\
     &\leq L_{f_v} \left\| \theta_k^T - \theta^*(v_k) \right\| + \left\|\nabla_\theta f(v_k,\theta_k^T)\right\|\left\|\left(L_k^T - \nabla_{\theta,v}^2\ell(v_k,\theta^*(v_k))
    \right) A_k^T\right\| \nonumber\\
    &\quad + \left\|\nabla_{\theta,v}^2\ell(v_k,\theta^*(v_k))\right\| \left\| \nabla_\theta f(v_k,\theta_k^T)\right\|\left\|  A_k^T -  H(\theta^*(v_k))^{-1}\right\|\nonumber\\
    &\quad +\left\|\nabla_{\theta,v}^2\ell(v_k,\theta^*(v_k))\right\| \left\|H(\theta^*(v_k))^{-1} \right\| \left\|    \nabla_\theta f(v_k,\theta_k^T) - \nabla_\theta f(v_k,\theta^*(v_k))\right\|\nonumber\\
     &\overset{(b)}{\leq} L_{f_v} \left\| \theta_k^T - \theta^*(v_k) \right\| + (LR+C_1)\left \|\left(L_k^T - \nabla_{\theta,v}^2\ell(v_k,\theta^*(v_k))
    \right) A_k^T \right\|\nonumber\\
    &\quad +L(LR+C_1) \left\| A_k^T -  H(\theta^*(v_k))^{-1} \right\|  +  \frac{L}{\mu} \left\| \nabla_\theta f(v_k,\theta_k^T) - \nabla_\theta f(v_k,\theta^*(v_k))\right\|\nonumber\\
    & \overset{(c)}{\leq} \left(L_{f_v}+\frac{LL_{f_\theta}}{\mu} \right)\left\| \theta_k^T-\theta^*(v_k) \right\| + (LR+C_1)\left \|\left(L_k^T - \nabla_{\theta,v}^2\ell(v_k,\theta^*(v_k))
    \right) A_k^T \right\| \nonumber\\
    &\quad+  L(LR+C_1) \left\| A_k^T -  H(\theta^*(v_k))^{-1} \right\|\nonumber,
\end{align}
where Inequality (a) follows from the Lipschitz continuity of $\nabla_v f(v,\theta)$ (Assumption~\ref{as:f_smoothness}) and $\|A_1B_1C_1-A_2B_2C_1\|\leq \|(A_1-A_2)B_1C_1\|+\|A_2(B_1-B_2)C_1\|+\|A_2B_2(C_1-C_1)\|$. Inequality (b) follows from the boundedness of $\nabla_\theta f$ (Assumption~\ref{as:f_smoothness}), the boundedness of $\nabla_{\theta,v}^2\ell$ ((2) of Lemma~\ref{lemma:assumption}) and strongly convexity of $\ell$ ((4) of Lemma~\ref{lemma:assumption}). Inequality (c) follows from Lipschitz continuity of $\nabla_\theta f$ (Assumption~\ref{as:f_smoothness}).
\end{proof}

\begin{lemma}
\label{lemma:grad_error}
Under Assumptions~\ref{as:l_inner} and~\ref{as:f_smoothness}, for any $\delta\in(0,1)$, with probability at least $1-(k-1)\delta$, we have
\begin{align*}
\left\| \widehat{\nabla}\Phi(v_k)-\nabla\Phi(v_k)\right\|^2 &\leq \psi \Lambda_1(T)^{k-1} \left\| \theta_1^T - \theta^*(v_1) \right\|^2 + \; \psi\Lambda_2(T)\sum_{j=1}^{k-1} \Lambda_1(T)^{k-j-1} \ \left\|\nabla \Phi(v_j)\right\|^2 \nonumber \\
    &\quad+ \;\psi\Lambda_2(T)\sum_{j=1}^{k-1} \Lambda_1(T)^{k-j-1} \left( D_j^T + F_j^T \right)+\psi b(T,\delta)\sum_{j=0}^{k-2}\Lambda_1(T)^j+3D_k^T+3F_k^T
\end{align*}
where $D_j^{T}$, $F_j^{T}$, $\Lambda_1(T))$, $\Lambda_2(T)$ and   $b(T,\delta)$ are defined in~\eqref{eq:def_DkT},~\eqref{eq:def_FkT},~\eqref{eq:def_Lambda1},~\eqref{eq:def_Lambda2} and~\eqref{eq:def_bT}. $\psi$ is a constant defined in~\eqref{eq:def_psi}.
\end{lemma}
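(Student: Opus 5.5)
We start from Lemma~\ref{lemma:grad_error_decompose}. Square it and use $(a+b+c)^2\le 3(a^2+b^2+c^2)$ to get
\[
\|\widehat{\nabla}\Phi(v_k)-\nabla\Phi(v_k)\|^2\le 3\Bigl(L_{f_v}+\tfrac{LL_{f_\theta}}{\mu}\Bigr)^2\|\theta_k^T-\theta^*(v_k)\|^2+3D_k^T+3F_k^T .
\]
Here $D_k^T$ and $F_k^T$ are the squared cross-derivative term $(LR+C_1)^2\|(L_k^T-\nabla_{\theta,v}^2\ell)A_k^T\|^2$ and the squared Hessian-inverse term $L^2(LR+C_1)^2\|A_k^T-H^{-1}\|^2$. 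This accounts for the trailing $3D_k^T+3F_k^T$ in the statement. The constant $\psi$ will absorb $3(L_{f_v}+LL_{f_\theta}/\mu)^2$ times whatever multiplies the recursion below. The remaining task is to bound $e_k:=\|\theta_k^T-\theta^*(v_k)\|^2$ recursively in $k$.

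For the recursion we use the warm start $\theta_k^0=\theta_{k-1}^T$. By Lemma~\ref{lemma:SGD_general} applied at outer step $k$ with probability $1-\delta$,
\[
e_k\le \Bigl(\tfrac{q}{T+q}\Bigr)^2\|\theta_{k-1}^T-\theta^*(v_k)\|^2+\tfrac{c'(1+\log(1/\delta))}{T+q}.
\]
Next we apply the triangle inequality together with the $L/\mu$-Lipschitz continuity of $\theta^*(\cdot)$ and $v_k-v_{k-1}=-\alpha\widehat\nabla\Phi(v_{k-1})$:
\[
\|\theta_{k-1}^T-\theta^*(v_k)\|\le \sqrt{e_{k-1}}+\tfrac{\alpha L}{\mu}\|\widehat\nabla\Phi(v_{k-1})-\nabla\Phi(v_{k-1})\|+\tfrac{\alpha L}{\mu}\|\nabla\Phi(v_{k-1})\|.
\]
Squaring with factor $3$ and substituting the first display (at index $k-1$) for the hypergradient error gives
\[
e_k\le \Lambda_1(T)e_{k-1}+\Lambda_2(T)\bigl(\|\nabla\Phi(v_{k-1})\|^2+D_{k-1}^T+F_{k-1}^T\bigr)+b(T,\delta).
\]
This determines the definitions
\[
\Lambda_1(T)=3\Bigl(\tfrac{q}{T+q}\Bigr)^2\Bigl(1+\tfrac{3\alpha^2L^2}{\mu^2}(L_{f_v}+\tfrac{LL_{f_\theta}}{\mu})^2\Bigr),\quad
\Lambda_2(T)\asymp \Bigl(\tfrac{q}{T+q}\Bigr)^2\tfrac{\alpha^2L^2}{\mu^2},\quad
b(T,\delta)=\tfrac{c'(1+\log(1/\delta))}{T+q}.
\]

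We then unroll this linear recursion from $j=k-1$ down to $j=1$. The result is $e_k\le \Lambda_1^{k-1}e_1+\Lambda_2\sum_{j=1}^{k-1}\Lambda_1^{k-j-1}(\|\nabla\Phi(v_j)\|^2+D_j^T+F_j^T)+b\sum_{j=0}^{k-2}\Lambda_1^j$. Plugging this into the first display yields exactly the claimed form. A union bound over the $k-1$ invocations of Lemma~\ref{lemma:SGD_general} (outer steps $2,\dots,k$) gives probability at least $1-(k-1)\delta$. The bounds on $D_j^T,F_j^T$ are deliberately left unexpanded here; they will be controlled later through Theorem~\ref{theorem:hessian_inv_approx} and Proposition~\ref{prop:l_21_approx}.

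The main obstacle is bookkeeping of the constants so that the recursion closes with the stated coefficients, particularly the coupling. The hypergradient error at step $k-1$ feeds back into $e_k$, and it itself contains $e_{k-1}$; this is why $\Lambda_1$ picks up the extra factor. One also has to make sure conditioning is handled properly. Lemma~\ref{lemma:SGD_general} is stated for a fixed $v_k$, whereas $v_k$ is random, measurable with respect to the previous outer iterations. We handle this by applying the lemma conditionally on $\mathcal{F}$ up to the start of outer step $k$. Its bound holds uniformly for every value of $v_k$ and of the initialization, so the conditional probability bound transfers to an unconditional one before taking the union bound.
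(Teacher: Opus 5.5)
Your proposal is correct and follows essentially the paper's proof: the squared decomposition with $\psi=3(L_{f_v}+LL_{f_\theta}/\mu)^2$, the warm-started Lemma~\ref{lemma:SGD_general}, the $L/\mu$-Lipschitz triangle inequality, substitution of Lemma~\ref{lemma:grad_error_decompose} at index $k-1$, unrolling, and a union bound over $k-1$ invocations; your conditioning remark also makes explicit a point the paper leaves implicit. The only difference is bookkeeping: the paper substitutes everything in unsquared form and squares once over five terms, which gives its constants $\Lambda_1=5(1+\frac{\alpha L}{\mu}(L_{f_v}+\frac{LL_{f_\theta}}{\mu}))^2(\frac{q}{T+q})^2$, $\Lambda_2=\frac{5\alpha^2L^2}{\mu^2}(\frac{q}{T+q})^2$ and $b=\frac{5c'(1+\log(1/\delta))}{T+q}$, whereas your staged squaring yields different but equally valid constants (your $\Lambda_2$ needs the factor $9\alpha^2L^2/\mu^2$ to cover $D_{k-1}^T,F_{k-1}^T$).
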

\begin{proof}\label{app:proof_grad_error}
Recall the bound provides in Lemma~\ref{lemma:grad_error_decompose},
\begin{align}
    \|\widehat{\nabla}\Phi(v_k)-\nabla\Phi(v_k)\|&\leq\left(L_{f_v}+\frac{LL_{f_\theta}}{\mu} \right)\left\| \theta_k^T-\theta^*(v_k) \right\| + (LR+C_1)\left \|\left(L_k^T - \nabla_{\theta,v}^2\ell(v_k,\theta^*(v_k))
    \right) A_k^T \right\| \nonumber\\
    &\quad +  L(LR+C_1) \left\| A_k^T -  H(\theta^*(v_k))^{-1} \right\|\label{eq:grad_to_theta}.
\end{align}
It implies that controlling the hypergradient error reduces to bounding:  
(i) the inner-level SGD deviation $\|\theta_k^{T} - \theta^*(v_k)\|$,  
(ii) the Hessian inverse approximation error $\|A_k^{T} - H(\theta^*(v_k))^{-1}\|$, 
and (iii) the cross-partial derivative approximation error  
$\|L_k^{T} - \nabla_{\theta,v}^2\ell(v_k,\theta^*(v_k))\|$. We now bound the inner-loop SGD error $\|\theta_k^T-\theta^*(v_k)\|$ in this lemma. The Hessian inverse and cross-partial approximation errors, however, admit bounds independent of $v_k$, and
are therefore addressed separately. 

For diminishing stepsize $\eta_t = 4/{\mu(t+\frac{8L^2}{\mu^2})}$, Lemma~\ref{lemma:SGD_general} implies that, for any $\delta\in(0,1)$, with probability of at least $1-\delta$,
\begin{align}
    \|\theta_k^T - \theta^*(v_k)\| 
    &\leq \frac{\frac{8L^2}{\mu^2}}{T+\frac{8L^2}{\mu^2}}\left\| \theta_k^{0} - \theta^*(v_k)\right\| + \sqrt{\frac{c'(1+\log(1/\delta))}{T+\frac{8L^2}{\mu^2}}}\nonumber\\
    & \overset{(d)}{\leq} \frac{\frac{8L^2}{\mu^2}}{T+\frac{8L^2}{\mu^2}}\left\| \theta_{k-1}^{T} - \theta^*(v_k)\right\| + \sqrt{\frac{c'(1+\log(1/\delta))}{T+\frac{8L^2}{\mu^2}}}\label{eq:thm4_theta_connect},
\end{align}
where Inequality (d) is due to the warm-start strategy $\theta_k^0=\theta_{k-1}^T$ for the inner variable.

Furthermore, we have
\begin{align}
    \left\| \theta_{k-1}^{T} - \theta^*(v_k)\right\|&=\left\| \theta_{k-1}^{T} - \theta^*(v_{k-1})+\theta^*(v_{k-1})-\theta^*(v_k)\right\| \nonumber\\
    &\leq \left\|\theta_{k-1}^{T} - \theta^*(v_{k-1}) \right\|  +  \left\|\theta^*(v_{k-1}) - \theta^*(v_k) \right\| \nonumber\\
    &\leq \left\|\theta_{k-1}^{T} - \theta^*(v_{k-1}) \right\|  +  \frac{L}{\mu}\left\|v_{k-1}-v_k\right\| \tag{ $\theta^*(v)$ is $\frac{L}{\mu}$-Lipschitz continuous}\\
    &\leq \left\|\theta_{k-1}^{T} - \theta^*(v_{k-1}) \right\|  +  \frac{\alpha L}{\mu}\left\| \widehat{\nabla}\Phi(v_{k-1}) \right\| \nonumber\\
    & \leq  \left\|\theta_{k-1}^{T} - \theta^*(v_{k-1}) \right\|  +  \frac{\alpha L}{\mu}\left\| \widehat{\nabla}\Phi(v_{k-1})-\nabla\Phi(v_{k-1})\right\|  + \frac{\alpha L}{\mu} \left\| \nabla\Phi(v_{k-1}) \right\|. \label{eq:theta_kk-1}
\end{align}

Combining inequalities \eqref{eq:grad_to_theta}, \eqref{eq:thm4_theta_connect} and \eqref{eq:theta_kk-1}, we obtain that with probability at least $1-\delta$,
\begin{align}
    \left\| \theta_k^T-\theta^*(v_k)\right\|^2
    &\leq  \Lambda_1(T)\left\| \theta_{k-1}^{T}-\theta^*(v_{k-1}) \right\|^2  + \Lambda_2(T)\left(\left\| \nabla\Phi(v_{k-1})\right\|^2+D_{k-1}^T  +  F_{k-1}^T\right)  +  b(T,\delta), \label{eq:theta_recursion}
\end{align}
where
\begin{align}
&\Lambda_1(T):= 5\left( 1 + \frac{\alpha L}{\mu}   \left(L_{f_v}+\frac{LL_{f_\theta}}{\mu} \right)\right)^2\left(\frac{\frac{8L^2}{\mu^2}}{T+\frac{8L^2}{\mu^2}} \right)^2,\label{eq:def_Lambda1}\\
&\Lambda_2(T):=\frac{5\alpha^2 L^2}{\mu^2}\left(\frac{\frac{8L^2}{\mu^2}}{T+\frac{8L^2}{\mu^2}} \right)^2,\label{eq:def_Lambda2}\\
&b(T,\delta):=\frac{5c'(1+\log(1/\delta))}{T+\frac{8L^2}{\mu^2}},\label{eq:def_bT}\\
    &D_k^T:=(LR+C_1)^2\left \|\left(L_k^T - \nabla_{\theta,v}^2\ell(v_k,\theta^*(v_k))
    \right) A_k^T \right\|^2,\label{eq:def_DkT}\\
    &F_k^T:=L^2(LR+C_1)^2 \left\| A_k^T -  H(\theta^*(v_k))^{-1} \right\|^2.\label{eq:def_FkT}
\end{align}

Telescoping Inequality \eqref{eq:theta_recursion} over $k$, we have that with probability at least $1-(k-1)\delta$,
\begin{align}
    \left\| \theta_k^T-\theta^*(v_k) \right\|^2 &\leq \Lambda_1 (T)^{k-1} \left\| \theta_1^T - \theta^*(v_1) \right\|^2 +  \Lambda_2(T)\sum_{j=1}^{k-1} \Lambda_1(T)^{k-j-1} \left( \left\|\nabla\Phi(v_j)\right\|^2 + D_j^T + F_j^T \right) \nonumber\\
    &\qquad   +  b(T,\delta)\sum_{j=0}^{k-2}\Lambda_1(T)^{j}. \label{eq:theta_bound_to0}
\end{align}

Finally, substituting \eqref{eq:theta_bound_to0} into \eqref{eq:grad_to_theta}, we obtain the corresponding bound on the hypergradient approximation error. With probability at least $1-(k-1)\delta$, we have
\begin{align}
    \left\| \widehat{\nabla}\Phi(v_k)-\nabla\Phi(v_k)\right\|^2 
    &\leq \psi \Lambda_1(T)^{k-1} \left\| \theta_1^T - \theta^*(v_1) \right\|^2 + \; \psi\Lambda_2(T)\sum_{j=1}^{k-1} \Lambda_1(T)^{k-j-1} \ \left\|\nabla \Phi(v_j)\right\|^2 \nonumber \\
    &\quad+ \;\psi\Lambda_2(T)\sum_{j=1}^{k-1} \Lambda_1(T)^{k-j-1} \left( D_j^T + F_j^T \right)+\psi b(T,\delta)\sum_{j=0}^{k-2}\Lambda_1(T)^j+3D_k^T+3F_k^T, \label{eq:grad_bound_to0}
\end{align}
where
\begin{align}\label{eq:def_psi}
    \psi:=3\left(L_{f_v}+\frac{LL_{f_\theta}}{\mu}\right)^2.
\end{align}
This completes the proof.
\end{proof}

\subsection{Proof of Theorem~\ref{theorem:outer_convergence}}
\begin{proof}\label{app:outer_convergence}

\textbf{Step 1: Descent inequality for the outer level.}
We begin by establishing a descent inequality for the outer-level objective. Under Assumptions~\ref{as:l_inner} and~\ref{as:f_smoothness}, the over-all objective is $L_v$-smooth. With the outer stepsize
$\alpha = \tfrac{1}{4L_v}$, this can give us the following descent inequality
\begin{align}
\frac{1}{16L_v}\sum_{k=1}^{K} \|\nabla\Phi(v_k)\|^2 \le
\Phi(v_1) - \Phi^* +
\frac{3}{16L_v}
\sum_{k=1}^{K}
\|\nabla\Phi(v_k) - \widehat{\nabla}\Phi(v_k)\|^2,
\end{align}
where $L_v$ is defined in~\eqref{eq:L_v} and $\Phi^*$ denotes the optimal value of the overall problem. A formal statement of this result are provided in
Lemma~\ref{lemma:outer1}. This shows that establishing convergence guarantee reduces to bounding the hypergradient approximation error
$\|\nabla\Phi(v_k) - \widehat{\nabla}\Phi(v_k)\|$.

\textbf{Step 2: Decomposition of the hypergradient approximation error.}
Under the regularity conditions imposed by our assumptions, the hypergradient approximation error admits the following decomposition:
\begin{align*}
    \|\widehat{\nabla}\Phi(v_k)-\nabla\Phi(v_k)\|&\leq\left(L_{f_v}+\frac{LL_{f_\theta}}{\mu} \right)\left\| \theta_k^T-\theta^*(v_k) \right\| + (LR+C_1)\left \|\left(L_k^T - \nabla_{\theta,v}^2\ell(v_k,\theta^*(v_k))
    \right) A_k^T \right\|\\
    &\quad +  L(LR+C_1) \left\| A_k^T -  H(\theta^*(v_k))^{-1} \right\|,
\end{align*}
where the detailed statement and proof is provided in Lemma~\ref{lemma:grad_error_decompose}. Consequently, controlling the hypergradient error reduces to bounding:  
(i) the inner-level SGD deviation $\|\theta_k^{T} - \theta^*(v_k)\|$,  
(ii) the Hessian inverse approximation error $\|A_k^{T} - H(\theta^*(v_k))^{-1}\|$,  
and (iii) the cross-partial derivative approximation error  
$\|L_k^{T} - \nabla_{\theta,v}^2\ell(v_k,\theta^*(v_k))\|$. 

We next bound the inner-level SGD deviation, whose error remains coupled with
$\|\nabla\Phi(v_k)\|$, reflecting the intrinsic coupling between the inner SGD deviation and the outer objective. The Hessian inverse and cross-partial approximation errors, however, admit bounds independent of $v_k$, and are therefore addressed separately at the end.

\textbf{Step 3: Bounding the inner-level SGD deviation.}
Lemma~\ref{lemma:SGD_general} shows that inner-level SGD error depends on $\|\theta_k^0 - \theta^*(v_k)\|$. To further control this, we use the standard 
warm-start technique $\theta_k^0 = \theta_{k-1}^T$, and the hypergradient approximation error can be further bounded as:
\begin{align}
\left\| \widehat{\nabla}\Phi(v_k)-\nabla\Phi(v_k)\right\|^2 &\leq \psi \Lambda_1(T)^{k-1} \left\| \theta_1^T - \theta^*(v_1) \right\|^2 +  \psi\Lambda_2(T)\sum_{j=1}^{k-1} \Lambda_1(T)^{k-j-1} \ \left\|\nabla \Phi(v_j)\right\|^2 \nonumber \\
    &\quad+ \;\psi\Lambda_2(T)\sum_{j=1}^{k-1} \Lambda_1(T)^{k-j-1} \left( D_j^T + F_j^T \right)+\psi b(T,\delta)\sum_{j=0}^{k-2}\Lambda_1(T)^j+3D_k^T+3F_k^T,\label{eq:grad_error_bound2}
\end{align}
where $D_j^{T} = (LR+C_2)\|(L_j^{T} - \nabla_{\theta,v}^2\ell(v_j,\theta^*(v_j)))A_j^{T}\|$, $F_j^{T} = L(LR+C_2)\|A_j^{T} - (H_j^*)^{-1}\|$
 and $\Lambda_1(T)=\mathcal{O}(\frac{1}{T^2})$ is defined in~\eqref{eq:def_Lambda1},  $\Lambda_2(T)=\mathcal{O}(\frac{1}{T^2})$ is defined in~\eqref{eq:def_Lambda2}, $b(T,\delta)=\mathcal{O}(\frac{\log(1/\delta)}{\sqrt{T}})$ is defined in~\eqref{eq:def_bT} and $\psi$ is a constant defined in~\eqref{eq:def_psi}.
The detailed statement and proof is provided in Lemma~\ref{lemma:grad_error}.

Telescoping \eqref{eq:grad_error_bound2} over $k$ from $2$ to $K$ and \eqref{eq:grad_to_theta} (deterministic bound) for $k=1$, applying union bound, we obtain that, with probability at least $1-\frac{K(K-1)}{2}\delta$,
\begin{align}
    \sum_{k=1}^{K}\left\| \nabla\Phi(v_k)-\widehat{\nabla}\Phi(v_k) \right\|^2 &\leq \psi \left\| \theta_1^T - \theta^*(v_1) \right\|^2 \sum_{k=0}^{K-1} \Lambda_1(T)^k + \psi b(T,\delta)\sum_{k=1}^{K-1}\sum_{j=0}^{k-1} \Lambda_1(T)^j \nonumber\\
    &\qquad +\psi \Lambda_2(T) \sum_{k=2}^{K}\sum_{j=1}^{k-1}\Lambda_1(T)^{k-j-1}\left(  D_j^T + F_j^T \right)  + 3\sum_{k=1}^{K}\left( D_k^T+F_k^T \right) \nonumber \\
    &\qquad +\psi \Lambda_2(T)\sum_{k=2}^{K}\sum_{j=1}^{k-1}\Lambda_1(T)^{k-j-1} \left\|\nabla \Phi(v_j)\right\|^2.\label{eq:grad_sum_bound}
\end{align}

For the last term in~\eqref{eq:grad_sum_bound}, 
\begin{align*}
    \sum_{k=2}^{K}\sum_{j=1}^{k-1}\Lambda_1(T)^{k-j-1}\left\|\nabla \Phi(v_j)\right\|^2 \leq \sum_{k=0}^{K-2}\Lambda_1(T)^{k} \sum_{j=1}^{K-1}\left\|\nabla \Phi(v_j)\right\|^2 \leq \frac{1}{1-\Lambda_1(T)}\sum_{j=1}^{K}\left\|\nabla \Phi(v_j)\right\|^2.
\end{align*}
Combining the above bound with~\eqref{eq:grad_sum_bound} and descent inequality~\eqref{eq:grad_sum},
\begin{align}
    &\left(\frac{1}{16L_v}-\frac{3\psi}{16L_v} \frac{\Lambda_2(T)}{1-\Lambda_1(T)}  \right)\sum_{k=1}^{K}\|\nabla\Phi(v_k) \|^2 \nonumber\\
    &\leq \Phi(v_0)-\Phi^* + \frac{3\psi}{16L_v} \left\| \theta_1^T - \theta^*(v_1) \right\|^2 \sum_{k=0}^{K-1} \Lambda_1(T)^k + \frac{3\psi}{16L_v} b(T,\delta)\sum_{k=1}^{K-1}\sum_{j=0}^{k-1} \Lambda_1(T)^j \nonumber\\
    &\qquad +\frac{3\psi}{16L_v} \Lambda_2(T) \sum_{k=2}^{K}\sum_{j=1}^{k-1}\Lambda_1(T)^{k-j-1}\left( D_j^T + F_j^T \right) + \frac{9}{16L_v}\sum_{k=1}^{K}\left( D_k^T+F_k^T \right).\label{eq:grad_bound1}
\end{align}

If $\displaystyle \; T \geq \hat{T}_0:= \frac{8L^2}{\mu^2}\sqrt{\frac{45L^2}{8L_v^2\mu^2}\left(L_{f_v}+\frac{LL_{f_\theta}}{\mu} \right)^2+5\left( 1 + \frac{L}{4L_v\mu}   \left(L_{f_v}+\frac{LL_{f_\theta}}{\mu} \right)\right)^2}\, - \frac{8L^2}{\mu^2}$, we have the coefficient of left hand side satisfies
\[
\frac{1}{32L_v} \leq \frac{1}{16L_v}-\frac{3\psi}{16L_v} \frac{\Lambda_2(T)}{1-\Lambda_1(T)}.  
\]

\textbf{Step 4: Bounding the cross-partial derivative approximation error.}
Now, we derive the high-probability bound for term $D_k^T$ and $F_k^T$, which are related to the Hessian inverse approximation error and cross-partial derivative approximation error. 

First, recall Proposition~\ref{prop:l_21_approx} implies that, exist constant $M_1>0$, with probability at least $1-\delta$, 
\begin{align*}
    \left\| L_k^T - \nabla_{\theta,v}^2\ell(v_k,\theta^*(v_k)) \right\|^2\leq M_1\frac{1}{T}\left(1+\log\left(\frac{1+T}{\delta}\right)\right), \quad \forall\  k,T\geq 1.
\end{align*}

Theorem~\ref{theorem:hessian_inv_approx} implies that, when $T\geq T_0(\delta)$ (where $T_0(\delta)$ is define at~\eqref{eq:condition_T0}), exist constant $M_2>0$, with probability at least $1-\delta$,
\begin{align*}
    \left\| A_k^T-(H(\theta^*(v_k)))^{-1} \right\|^2 \leq M_2 \frac{1}{T}\left(1+\log\left(\frac{1+T}{\delta}\right)\right), \quad \forall\  k.
\end{align*}

Thus, applying union bound, for any $k\geq1$, with probability at least $1-2\delta$,
\begin{align}
    D_k^T&=(LR+C_1)^2\left \|\left(L_k^T - \nabla_{\theta,v}^2\ell(v_k,\theta^*(v_k))
    \right) A_k^T \right\|^2 \nonumber \\
    &\leq (LR+C_1)^2 \left\| L_k^T - \nabla_{\theta,v}^2\ell(v_k,\theta^*(v_k)) \right\|^2 \left\|A_k^T \right\|^2 \nonumber \\
    &\leq (LR+C_1)^2 \left\| L_k^T - \nabla_{\theta,v}^2\ell(v_k,\theta^*(v_k)) \right\|^2 \left( 2\left\| (H(\theta^*(v_k)))^{-1}\right\|^2 + 2\left\| A_k^T- (H(\theta^*(v_k)))^{-1}\right\|^2 \right) \nonumber \\
    & \leq (LR+C_1)^2M_1 \frac{1}{T}\left(1+\log(\frac{1+T}{\delta})\right)\left( \frac{2}{\mu^2} +2M_2 \frac{1}{T}\left(1+\log(\frac{1+T}{\delta})\right)\right).\nonumber
\end{align}

For any $k\geq1$, with probability at least $1-\delta$,
\begin{align}
    F_k^T &= L^2(LR+C_1)^2 \left\| A_k^T -  (H(\theta^*(v_k)))^{-1} \right\|^2\leq M_2L^2(LR+C_1)^2 \frac{1}{T}\left(1+\log(\frac{1+T}{\delta})\right).\nonumber
\end{align}
We apply the above high-probability bounds for $D_1^T,\cdots,D_{K}^T,F_1^T,\cdots,F_1^T$ for bound~\eqref{eq:grad_bound1}. By union bound, with probability at least $1-\frac{K(K+5)}{2}\delta$,
when $T\geq T_1(\delta)$ we have 
\begin{align}
    &\frac{1}{32L_v}\frac{1}{K}\sum_{k=1}^{K}\|\nabla\Phi(v_k) \|^2 \nonumber \\
    &\leq \frac{\Phi(v_0)-\Phi^*}{K} + \frac{3\psi R}{16L_v} \frac{1}{K} \underbrace{ \sum_{k=0}^{K-1} \Lambda_1(T)^k}_{=:E_2} + \frac{3\psi}{16L_v}\frac1K \underbrace{b(T,\delta)\sum_{k=1}^{K-1}\sum_{j=0}^{k-1} \Lambda_1(T)^j}_{=:E_3} \nonumber\\
    &\qquad +\frac{3\psi M_3}{16L_v} \frac1K \frac{1}{T}\left(1+\log(\frac{1+T}{\delta})\right)\left( \frac{2+\mu^2}{\mu^2} + \frac{2M_2}{T}\left(1+\log(\frac{1+T}{\delta})\right)\right)  \underbrace{\Lambda_2(T) \sum_{k=2}^{K}\sum_{j=1}^{k-1}\Lambda_1(T)^{k-j-1}}_{=:E_4}  \nonumber \\
    &\qquad + \frac{3M_3}{16L_v}\frac{1}{T}\left(1+\log(\frac{1+T}{\delta})\right)\left( \frac{2+\mu^2}{\mu^2} + \frac{2M_2}{T}\left(1+\log(\frac{1+T}{\delta})\right)\right)\nonumber,
\end{align}
where $M_3=\max\{M_1(LR+C_1)^2,M_2L^2(LR+C_1)^2\}$ and 
\begin{align}\label{eq:condition_T1}
    T_1(\delta)=\min\left\{t\in \mathbb{N} \mid  \gamma(t,\delta)\leq \frac{\mu}{2} \;\ \text{and} \;\ t\geq \hat{T}_0\right\}, \end{align}
and $\gamma(t,\delta)$ in defined in~\eqref{eq:def_gamma}.

By the definitions of $\Lambda_1(T),\Lambda_2(T)$ and $b(T,\delta)$ in~\eqref{eq:def_Lambda1}.~\eqref{eq:def_Lambda2} and~\eqref{eq:def_bT}, we have
\begin{align*}
    E_2 &= \frac{1-\Lambda_1(T)^K}{1-\Lambda_1(T)}\leq \frac{1}{1-\hat{\beta}\left(\frac{q}{T+q}\right)^2},\\
    E_3&= b(T,\delta)\sum_{k=1}^{K-1}\frac{1-\Lambda_1(T)^k}{1-\Lambda_1(T)}\leq b(T,\delta)K\frac{1}{1-\Lambda_1(T)}= \frac{5c'K\left( 1+\log(1/\delta)\right)}{T+q-\frac{\hat{\beta}q^2}{T+q}},\\
    E_4 & \leq \Lambda_2(T)K\frac{1}{1-\Lambda_1(T)}= \frac{\hat{\varphi}q^2K}{(T+q)^2-\hat{\beta}q^2},
\end{align*}
where 
\[
\hat{\beta}:=5\left( 1 + \frac{\alpha L}{\mu}   \left(L_{f_v}+\frac{LL_{f_\theta}}{\mu} \right)\right)^2\quad \text{and} \quad \hat{\varphi}:=\frac{5\alpha^2 L^2}{\mu^2}.
\]

Thus, with probability at least $1-\frac{K(K+5)}{2}\delta$,
when $T\geq T_1(\delta)$, we have
\begin{align*}
    &\frac{1}{32L_v}\frac{1}{K}\sum_{k=1}^{K}\|\nabla\Phi(v_k) \|^2 \nonumber \\
    &\leq \frac{\Phi(v_0)-\Phi^*}{K} + \frac{3\psi R}{16L_v} \frac{1}{K} \frac{1}{1-\hat{\beta}\left(\frac{q}{T+q}\right)^2} + \frac{15\psi c'}{16L_v} \frac{\left( 1+\log(1/\delta)\right)}{T+q-\frac{\hat{\beta}q^2}{T+q}} \nonumber\\
    &\quad +\frac{3\psi M_3}{16L_v} \frac{1}{T}\left(1+\log(\frac{1+T}{\delta})\right)\left( \frac{2+\mu^2}{\mu^2} + \frac{2M_2}{T}\left(1+\log(\frac{1+T}{\delta})\right)\right)  \frac{\hat{\varphi}q^2}{(T+q)^2-\hat{\beta}q^2}  \nonumber \\
    &\quad + \frac{3}{16L_v}\frac{1}{T}\left(1+\log(\frac{1+T}{\delta})\right)\left( \frac{2+\mu^2}{\mu^2} + \frac{2M_2}{T}\left(1+\log(\frac{1+T}{\delta})\right)\right).
\end{align*}

Equivalently, for any $\delta\in (0,1)$, with probability at least $1-\delta$, 
\begin{align*}
    \frac{1}{K}\sum_{k=1}^{K}\|\nabla\Phi(v_k) \|^2 
    &\leq \frac{32L_v}{K}\left( \Phi(v_0)-\Phi^*\right) +  \frac{6\psi R}{K} \frac{1}{1-\hat{\beta}\left(\frac{q}{T+q}\right)^2} +  \frac{30\psi c'}{T+q-\frac{\hat{\beta}q^2}{T+q}} \left( 1+\log\left(\frac{K(K+5)}{2\delta} \right)\right)\nonumber\\
    &\quad +\frac{6\psi M_3(2+\mu^2)}{\mu^2T}\left(1+\log\left(\frac{K(K+5)(T+1)}{\delta}\right)\right) \frac{\hat{\varphi}q^2}{(T+q)^2-\hat{\beta}q^2}  \nonumber \\
    &\quad +\frac{12\psi M_2M_3}{T^2}\left(1+\log\left(\frac{K(K+5)(T+1)}{\delta}\right)\right)^2  \frac{\hat{\varphi}q^2}{(T+q)^2-\hat{\beta}q^2}  \nonumber \\
    &\quad + \frac{6}{T}\left(1+\log\left(\frac{K(K+5)(T+1)}{\delta}\right)\right)\left( \frac{2+\mu^2}{\mu^2} + \frac{2M_2}{T}\left(1+\log\left(\frac{K(K+5)(T+1)}{\delta}\right)\right)\right) \nonumber \\
    &=\mathcal{O}\left({\frac{1}{K}+\frac{\log(KT/\delta)}{T}}\right).
\end{align*}
This completes the proof.
\end{proof}

\newpage
\section{Details on Experiments}\label{app:experiments}
\textbf{Baselines.}  First, we include two \emph{controlled} baselines that share the same overall algorithmic structure as NHGD (e.g., batch size and iteration budget), differing only in the Hessian-inverse approximation strategy:
\textbf{(1) CG}~\cite{grazzi2020iteration,yang2023accelerating} solves the linear system
$\nabla_\theta^2 \ell\, x = \nabla_\theta f$
using $K$ iterations of the conjugate gradient method.
\textbf{(2) Neumann}~\cite{ji2021bilevel,lorraine2020optimizing} estimates the inverse Hessian via a truncated Neumann series.
Following standard practice, we apply a scaling parameter $\varphi$: $ H^{-1} v
\approx
\varphi \sum_{t=0}^{T} (I - \varphi H)^t v$, $
0 < \varphi < 2/{\lambda_{\max}(H)}$.

We also compare the NHGD with several representative hypergradient-based methods. \textbf{(3) stocBiO}~\cite{ji2021bilevel} adopts a double-loop structure. 
It approximates the Hessian inverse using a truncated Neumann series with $K$ iterations and an exponentially decaying batch size 
$B_{\text{stocBiO}} K (1-\varphi \mu_{\text{stocBiO}})^k$ at the $k$-th Neumann step, where $\varphi$ also serves as the Neumann scaling parameter. \textbf{(4) AmIGO}~\cite{arbel2021amortized} also follows a double-loop structure.
It approximates the Hessian inverse by performing $K$ additional SGD steps to solve the quadratic subproblem
$\min_x \tfrac{1}{2} x^\top \nabla_\theta^2 \ell\, x - x^\top \nabla_\theta f$,
with stepsize $\beta_{\text{amigo}}$. \textbf{(5) TTSA}~\cite{hong2023two} is a single-loop, two-time-scale method. At each epoch, it performs $K\sqrt{1+\text{epoch}}$ Neumann-series iterations with scaling parameter $\varphi$ to approximate the Hessian inverse. \textbf{(6) SOBA}~\cite{dagreou2022framework} is another single-loop method that updates the inner and outer variables simultaneously.

Unless otherwise specified (e.g., the adaptive batch-size rule in stocBiO), all stochastic derivative estimates (including gradients, Hessian--vector products, and Jacobian--vector products) use a fixed batch size $D$.

\textbf{Implementation Details for NHGD.} In practice, we adopt a smoothed rank-one update for the Hessian inverse approximation, rather than the exact update in~\eqref{eq:inv_efim_update}. Specifically, we use
\begin{align*}
    A_k^{t+1}=\frac{1}{\beta}A_k^t-\frac{\frac{1-\beta}{\beta^2}A_k^t\nabla_\theta l (v_k,\theta_k^t,\xi_k^t)\left(A_k^t\nabla_\theta l (v_k,\theta_k^t,\xi_k^t)\right)^\top}{1+\frac{1-\beta}{\beta}\nabla_\theta l (v_k,\theta_k^t,\xi_k^t)^\top A_k^t\nabla_\theta l (v_k,\theta_k^t,\xi_k^t)}.
\end{align*}
We denote by $M$ the batch size used to estimate the cross-partial derivative $\hat{L}_k^{M}
    =
    \frac{1}{M}
    \sum_{i=0}^{M-1}
    \nabla_{\theta,v}^2 l(v_k,\theta_k^{T},\xi_k^{i})$ as discussed in Section~\ref{sec:algo_design}. 

\textbf{Parameters Setting.} For all methods, we perform grid search over the inner stepsize $\eta$ and outer stepsize $\alpha$  in $\{0.001,\; 0.005,\; 0.01,\; 0.05,\; 0.1,\; 0.5,\; 1.0\}$. Additional hyperparameters are selected as follows:
for NHGD, $\beta \in \{0.7, 0.8, 0.9, 0.95, 0.99\}$;
for Neumann-series approximations, the scaling factor
$\varphi \in \{0.001, 0.005, 0.01, 0.05, 0.1\}$;
for AmIGO, $\beta_{\text{amigo}} \in \{0.001, 0.01, 0.1, 1.0\}$;
and for stocBiO, $B_{\text{stocBiO}} \in \{5, 10, 50, 100, 200, 500\}$.

\textbf{Hardware Setup.}  
All experiments are conducted on NVIDIA A100-SXM4-80GB GPUs.  
The NHGD algorithm is implemented in a parallel two-GPU setting:  
the first GPU executes the inner-level SGD, while the second GPU 
simultaneously updates the Hessian inverse approximation.

The implementation adapts code from the open-source project
\url{https://github.com/Cranial-XIX/BOME}~\cite{liu2022bome}.
For the PIML experiments, the code is further based on
\url{https://github.com/HaoZhongkai/Bi-level-PINN}~\cite{hao2023bi}.

\subsection{Hyper-data Cleaning}\label{app:experiments_datacleaning}

\textbf{Experimental Setup.}  
We evaluate all methods on the MNIST dataset. The inner problem is solved using the full $50000$ training samples with $50\%$ label noise, a batch size $D=1024$, $10$ inner iterations and $\lambda=0.0001$ for the  regularization coefficient. Based on the grid search the parameters for different method are set as:  
\textbf{NHGD (M=5)}: $\eta = 0.01$, $\alpha = 0.5$, $\beta = 0.8$;  
\textbf{stocBiO}: $\eta = 0.01$, $\alpha = 1.0$, $B_\text{stocBiO}=100$, $\varphi=0.01$, $\mu_\text{stocBiO}=10^{-4}$; 
\textbf{AmIGO}: $\eta = 0.01$, $\alpha = 0.5$, $\beta_\text{amigo} = 0.001$; 
\textbf{TTSA}: $\eta = 0.01$, $\alpha = 0.5$, $K=10$;  
\textbf{SOBA}: $\eta = 0.1$, $\alpha = 0.5$; 
 \textbf{Neumann10 (M=5)}: $\eta = 0.005$, $\alpha = 1.0$, $\varphi = 0.1$;  
\textbf{Neumann40 (M=5)}: $\eta = 0.01$, $\alpha = 1.0$, $\varphi = 0.1$;  
\textbf{CG10 (M=5)}: $\eta = 0.01$, $\alpha = 0.5$;  
\textbf{CG40 (M=5)}: $\eta = 0.01$, $\alpha = 0.1$.

\textbf{Additional Results.} The outer objective (outer loss) is shown in Figure~\ref{fig:data_clean_outerloss_combined}, and the quantitative test loss is shown in Table~\ref{tb:data_clean_quant}. The proposed NHGD algorithm achieves the fastest convergence in terms of the outer objective and attains the highest test accuracy
\begin{figure}[h]
\centering
\begin{minipage}[h]{0.48\linewidth}
    \centering
    \begin{subfigure}[h]{0.49\linewidth}
        \centering
        \includegraphics[width=\linewidth]{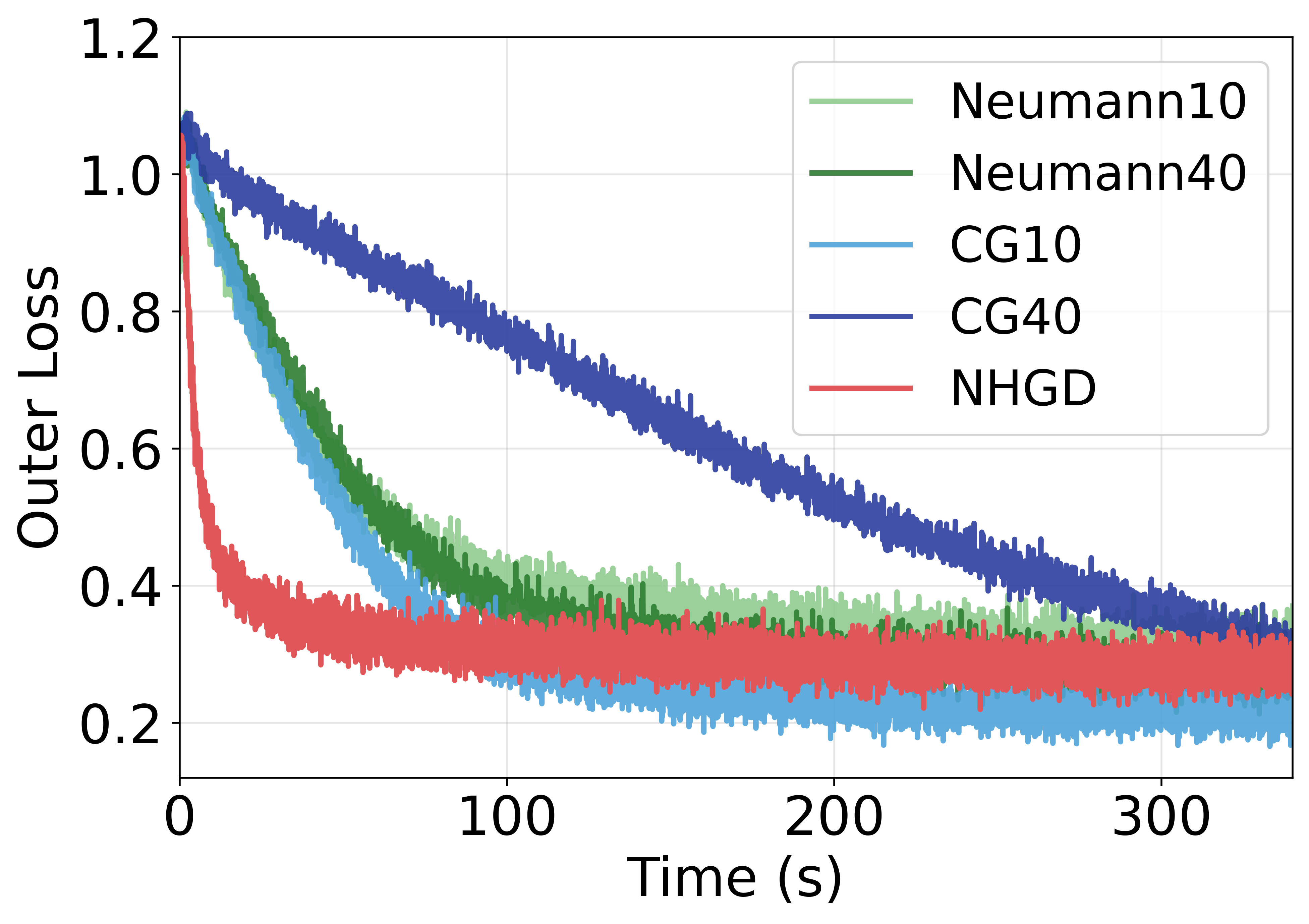}
    \end{subfigure}
\hfill
    \begin{subfigure}[h]{0.49\linewidth}
        \centering
        \includegraphics[width=\linewidth]{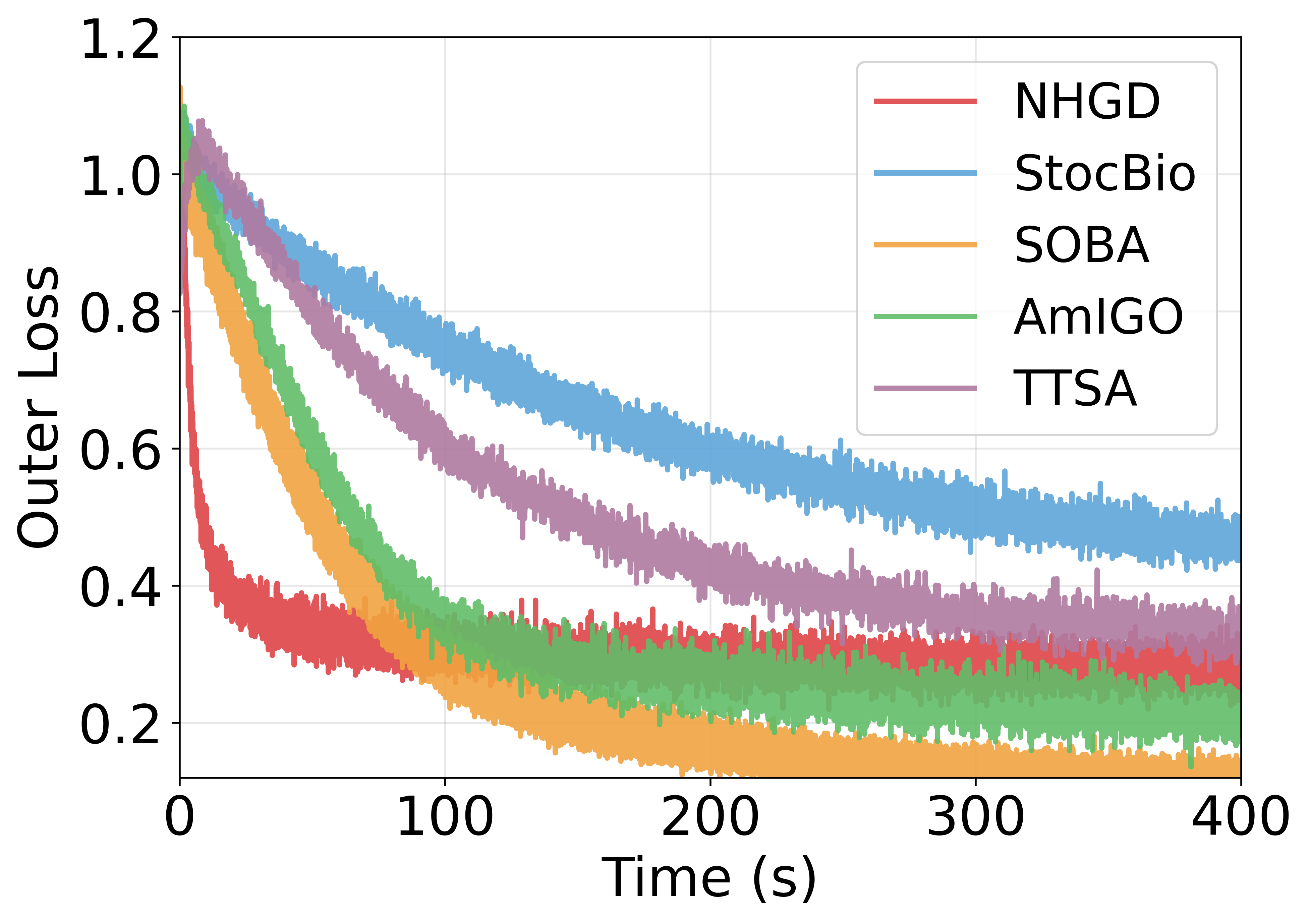}
    \end{subfigure}
    \caption{Outer loss for the Hyper-data Cleaning task.}
\label{fig:data_clean_outerloss_combined}
\end{minipage}
\hfill
\begin{minipage}[h]{0.48\linewidth}
    \centering
    \captionof{table}{Quantitative results for Hyper-data Cleaning.
    \textbf{Outer Loss} and \textbf{Test Accuracy} are averaged over the last 50 epochs.}
    \label{tb:data_clean_quant}
    \begin{tabular}{lcc}
    \toprule
    \textbf{Method} & \textbf{Outer Loss} & \textbf{Test Accuracy} \\
    \midrule
    stocBiO     
& $0.4253 \pm 0.0170$ 
& $0.9060 \pm 0.0004$  \\

AmIGO       
& $0.1925 \pm 0.0200$ 
& $0.9101 \pm 0.0002$ \\

TTSA        
& $0.2350 \pm 0.0202$ 
& $0.9172 \pm 0.0002$ \\

SOBA        
& $0.1055 \pm 0.0092$ 
& $0.8924 \pm 0.0005$ \\

CG10        
& $0.1880 \pm 0.0143$ 
& $0.9127 \pm 0.0003$ \\

CG40        
& $0.1488 \pm 0.0130$
& $0.9067 \pm 0.0003$ \\

Neumann10  
& $0.2627 \pm 0.0194$ 
& $0.9168 \pm 0.0002$ \\

Neumann40  
& $0.2244 \pm 0.0205$ 
& $0.9170 \pm 0.0003$ \\

\textbf{NHGD}        
& $0.2765 \pm 0.0197$ 
& $0.9180 \pm 0.0003$  \\
    \bottomrule
    \end{tabular}
\end{minipage}
\end{figure}
\subsection{Data Distillation}\label{app:experiments_datadis}

\textbf{Experimental Setup.}
We perform dataset distillation on the FashionMNIST dataset, which contains $10$ classes
and $60000$ training samples. 
For the experiments with $n=5$ distilled samples per class, the distilled dataset contains only $50$ samples in total, so we use full-batch updates without stochastic sampling. For the regularization coefficient, we choose $\lambda=1/15680$. Due to the small dataset size, stocBiO reduces to a constant–batch-size variant, making it equivalent to the Neumann baseline in this case. 
Based on the grid search the paramters
for different method are set as: \textbf{NHGD (M=1)}: $\eta = 1.0$, $\alpha = 0.01$, $\beta = 0.00$; 
\textbf{AmIGO}: $\eta = 0.001$, $\alpha = 0.005$, $\beta_\text{amigo} = 1.0$; 
\textbf{TTSA}: $\eta = 10$, $\alpha = 0.01$, $c_\text{in}=0$, $c_\text{out}=0$, $K=10$; 
\textbf{SOBA}: $\eta = 2.0$, $\alpha = 1.9$, $c_\text{in}=0.8$, $c_\text{out}=0.2$; 
 \textbf{Neumann10 (M=1)}: $\eta = 0.1$, $\alpha = 0.005$, $\varphi = 0.1$;  
\textbf{Neumann40 (M=1)}: $\eta = 1.0$, $\alpha = 0.001$, $\varphi = 0.1$;  
\textbf{CG10 (M=1)}: $\eta = 0.001$, $\alpha = 0.001$;  
\textbf{CG40 (M=1)}: $\eta = 0.001$, $\alpha = 0.001$.

We we also conduct the experiment with distilled $n=100$ samples for each class. For this setting, we use batch size $D=128$, and 30 inner iterations.
Based on the grid search the parameters
for different method are set as: \textbf{NHGD (M=5)}: $\eta = 1.0$, $\alpha = 0.5$, $\beta = 0.99$; 
\textbf{stocBiO}: $\eta = 1.0$, $\alpha = 1.0$, $B_\text{stocBiO}=5$, $\varphi=0.01$, $\mu_\text{stocBiO}=10^{-4}$; 
\textbf{AmIGO}: $\eta = 0.001$, $\alpha = 1.0$, $\beta_\text{amigo} = 1.0$; 
\textbf{TTSA}: $\eta = 1.0$, $\alpha = 0.1$, $K=10$; 
\textbf{SOBA}: $\eta = 0.5$, $\alpha = 1.0$; 
 \textbf{Neumann10 (M=5)}: $\eta = 1.0$, $\alpha = 0.5$, $\varphi = 0.05$;  
\textbf{Neumann40 (M=5)}: $\eta = 1.0$, $\alpha = 0.1$, $\varphi = 0.1$;  
\textbf{CG10 (M=5)}: $\eta = 0.005$, $\alpha = 0.5$;  
\textbf{CG40 (M=5)}: $\eta = 0.005$, $\alpha = 0.5$.

\textbf{Additional Results.} 
For the experiments with $n=5$ distilled samples per class, the distilled dataset contains only $50$ samples in total, so we use full-batch updates without stochastic sampling. Due to the small dataset size, stocBiO reduces to a constant–batch-size variant, making it equivalent to the Neumann baseline in this case.
Figure~\ref{fig:data_dis_outerloss_combined_5} compares the outer loss of NHGD with different baselines, while Table~\ref{tb:data_dis_quant_5} reports the corresponding quantitative results.

We also report results for the setting with $n=100$ distilled samples per class in Figure~\ref{fig:data_dis_combined}. 
Detailed quantitative results are summarized in Table~\ref{tb:data_dis_quant}. 
Compared with the baselines, \textsc{NHGD} maintains fast convergence speed (see Figure~\ref{fig:data_dis_combined}(a) and (c)) while achieving comparable final test accuracy (see Table~\ref{tb:data_dis_quant}).

\begin{figure}[h]
\begin{minipage}[h]{0.48\linewidth}
    \centering
    \begin{subfigure}[t]{0.49\linewidth}
        \centering
        \includegraphics[width=\linewidth]{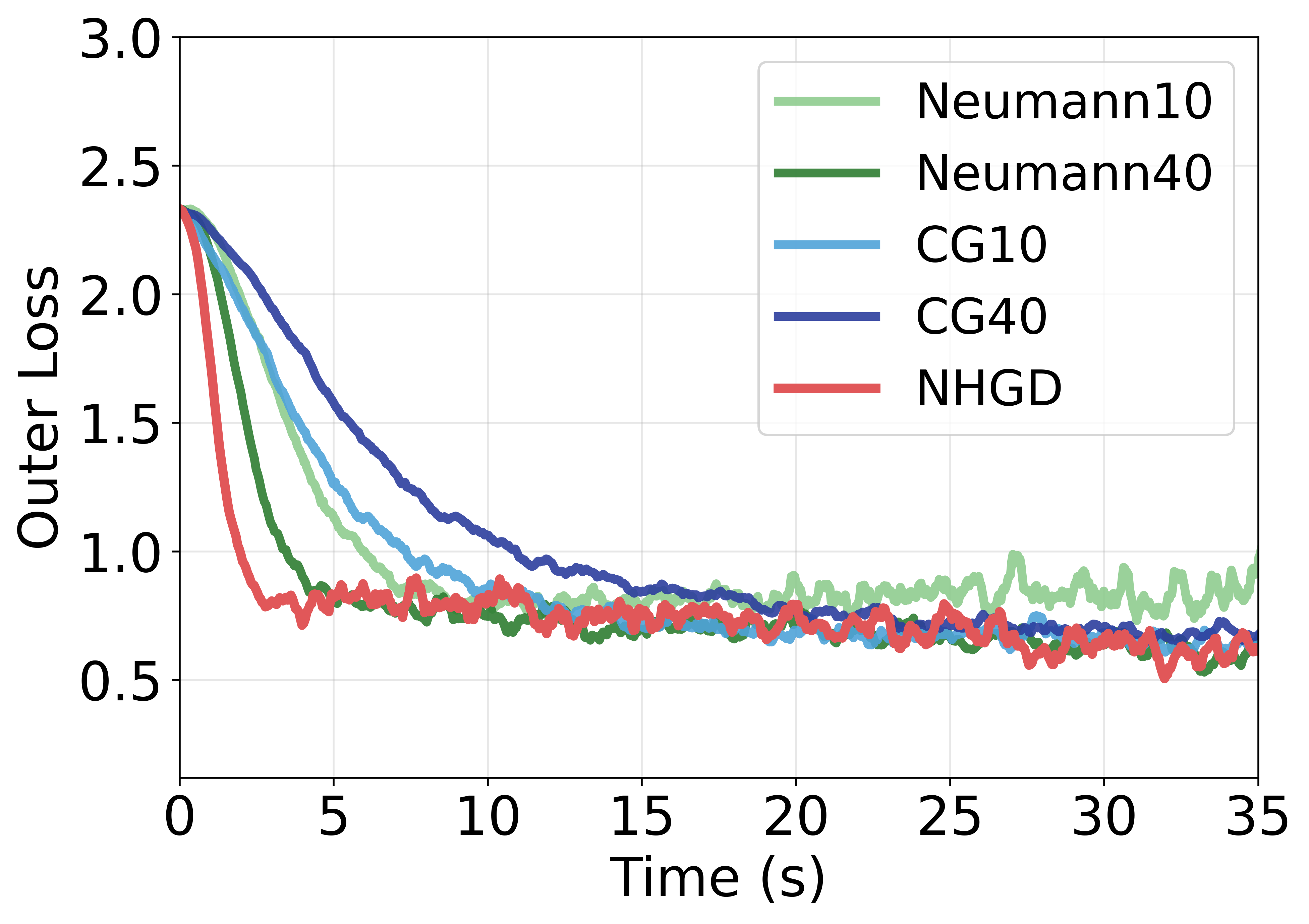}
    \end{subfigure}
    \hfill
    \begin{subfigure}[t]{0.49\linewidth}
        \centering
        \includegraphics[width=\linewidth]{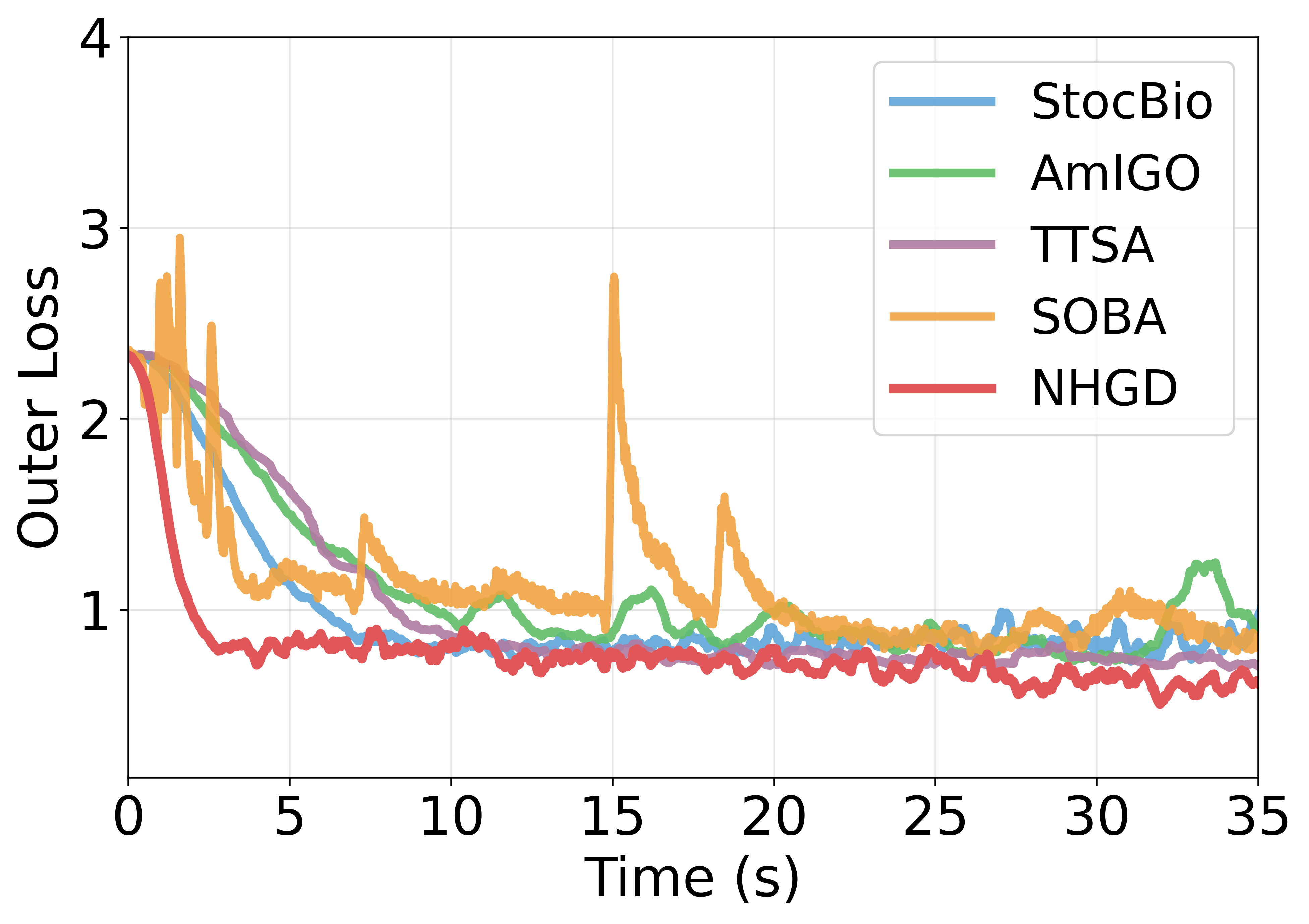}
    \end{subfigure}

    \caption{Outer Loss for Data Distillation task with $n=5$ distilled samples per class.}
    \label{fig:data_dis_outerloss_combined_5}
\end{minipage}
\hfill
\begin{minipage}[h]{0.48\linewidth}
\centering
\captionof{table}{Quantitative results for Data Distillation with $n=5$ distilled samples per class. \textbf{Outer Loss} and \textbf{Test Accuracy} are the averaged over  the last 50 epochs.}
\label{tb:data_dis_quant_5}
\begin{tabular}{lccc}
\toprule
\textbf{Method}  & \textbf{Outer Loss} & \textbf{Test Accuracy}\\
\midrule
AmIGO       
& $0.8072 \pm 0.0644$  
& $0.6934 \pm 0.0071$ \\ 

TTSA        
& $0.6569 \pm 0.1105$  
& $0.8015 \pm 0.0045$ \\ 

SOBA        
& $1.1215 \pm 0.0632$  
& $0.5847 \pm 0.0018$ \\ 

CG10        
& $0.6288 \pm 0.0679$  
& $0.7820 \pm 0.0010$ \\ 

CG40        
& $0.6227 \pm 0.0686$  
& $0.7745 \pm 0.0015$ \\ 

Neumann10  
& $0.8450 \pm 0.1526$  
& $0.7840 \pm 0.0015$ \\  

Neumann40  
& $0.4895 \pm 0.0885$  
& $0.8216 \pm 0.0017$ \\ 

\textbf{NHGD}        
& $0.4733 \pm 0.0941$  
& $0.8256 \pm 0.0047$ \\ 
\bottomrule
\end{tabular}
\end{minipage}
\end{figure}

\begin{figure}[h]
\begin{minipage}[h]{0.48\linewidth}
    \centering
    \begin{subfigure}[t]{0.49\linewidth}
        \centering
        \includegraphics[width=\linewidth]{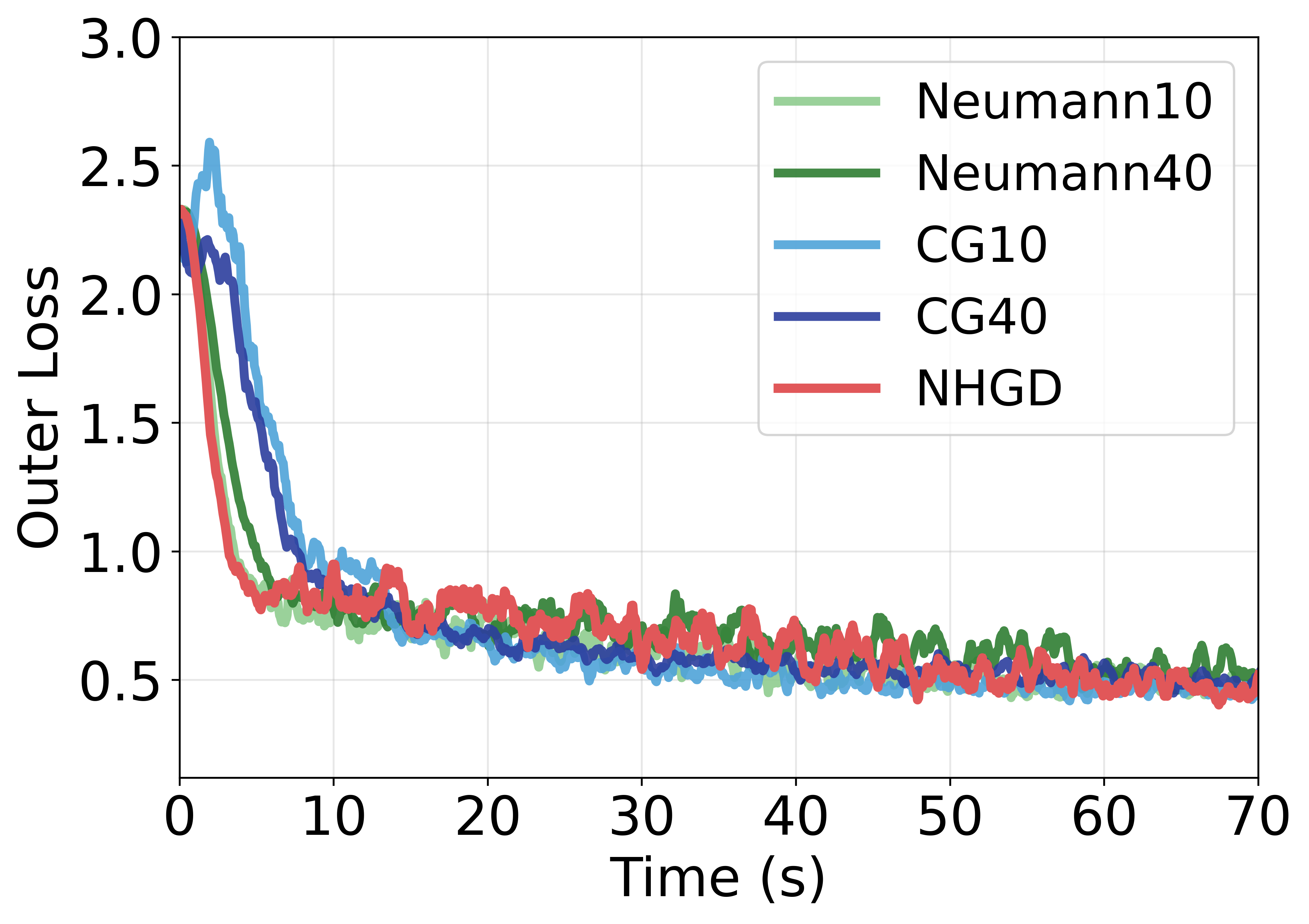}
        \caption{Outer Loss.}
        \label{fig:data_dis_outerloss_CN}
    \end{subfigure}
    \begin{subfigure}[t]{0.49\linewidth}
        \centering
        \includegraphics[width=\linewidth]{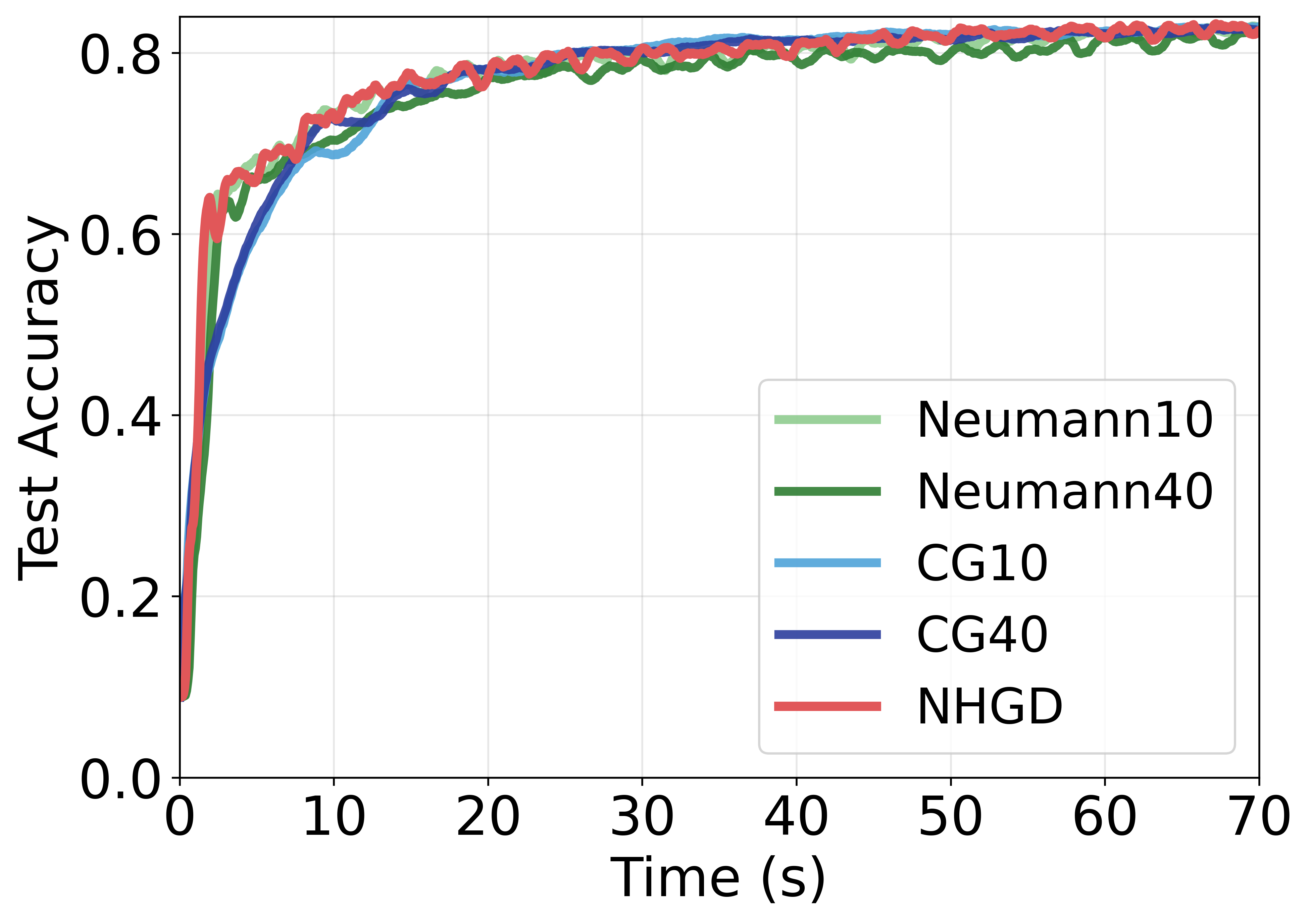}
        
        \caption{Test accuracy.}
        \label{fig:data_dis_testacc_CN}
    \end{subfigure}
    \hfill
    \begin{subfigure}[t]{0.49\linewidth}
        \centering
        \includegraphics[width=\linewidth]{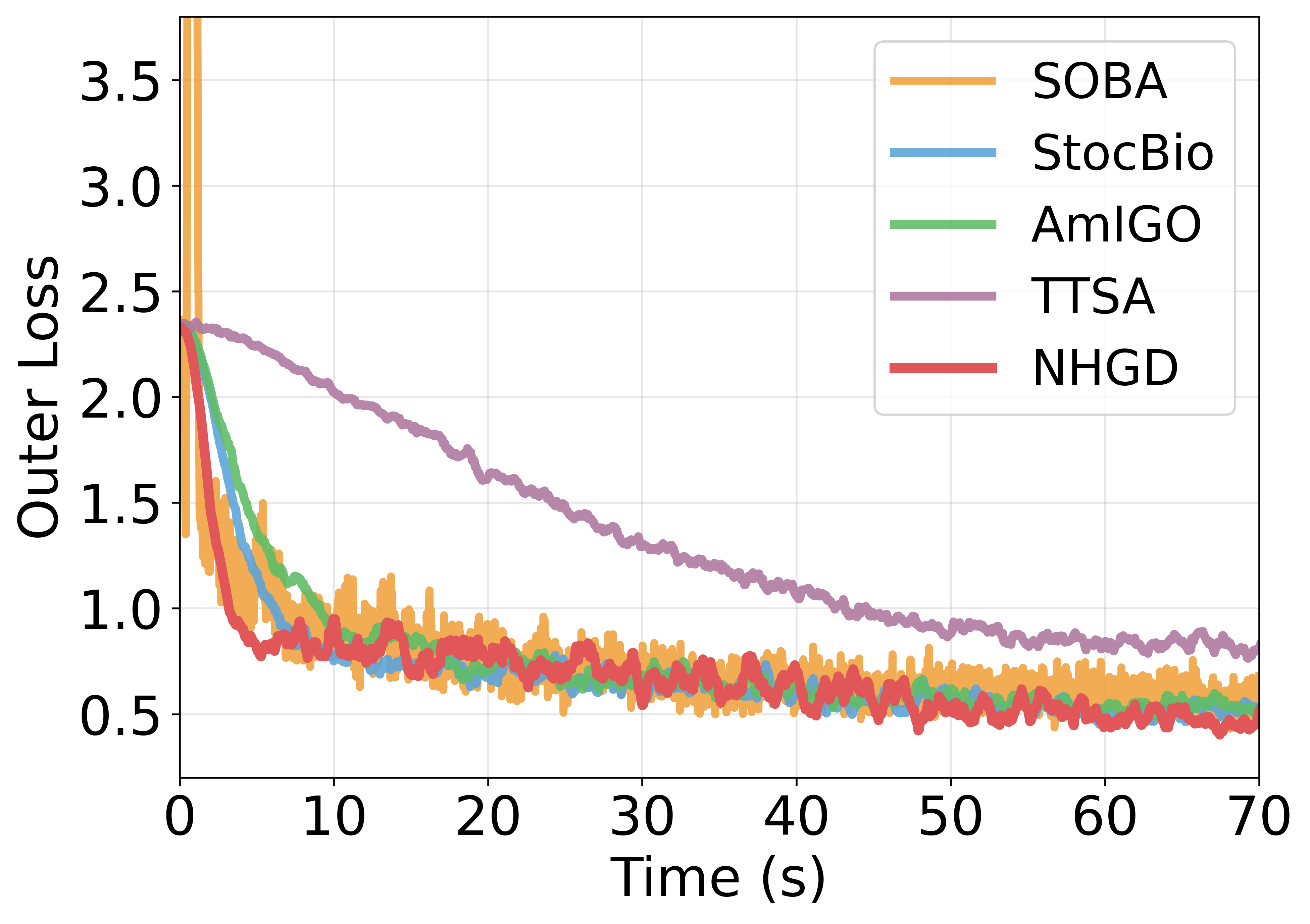}
        \caption{Outer Loss.}
    \label{fig:data_dis_outerloss_baselines}
    \end{subfigure}
    \begin{subfigure}[t]{0.49\linewidth}
        \centering
        \includegraphics[width=\linewidth]{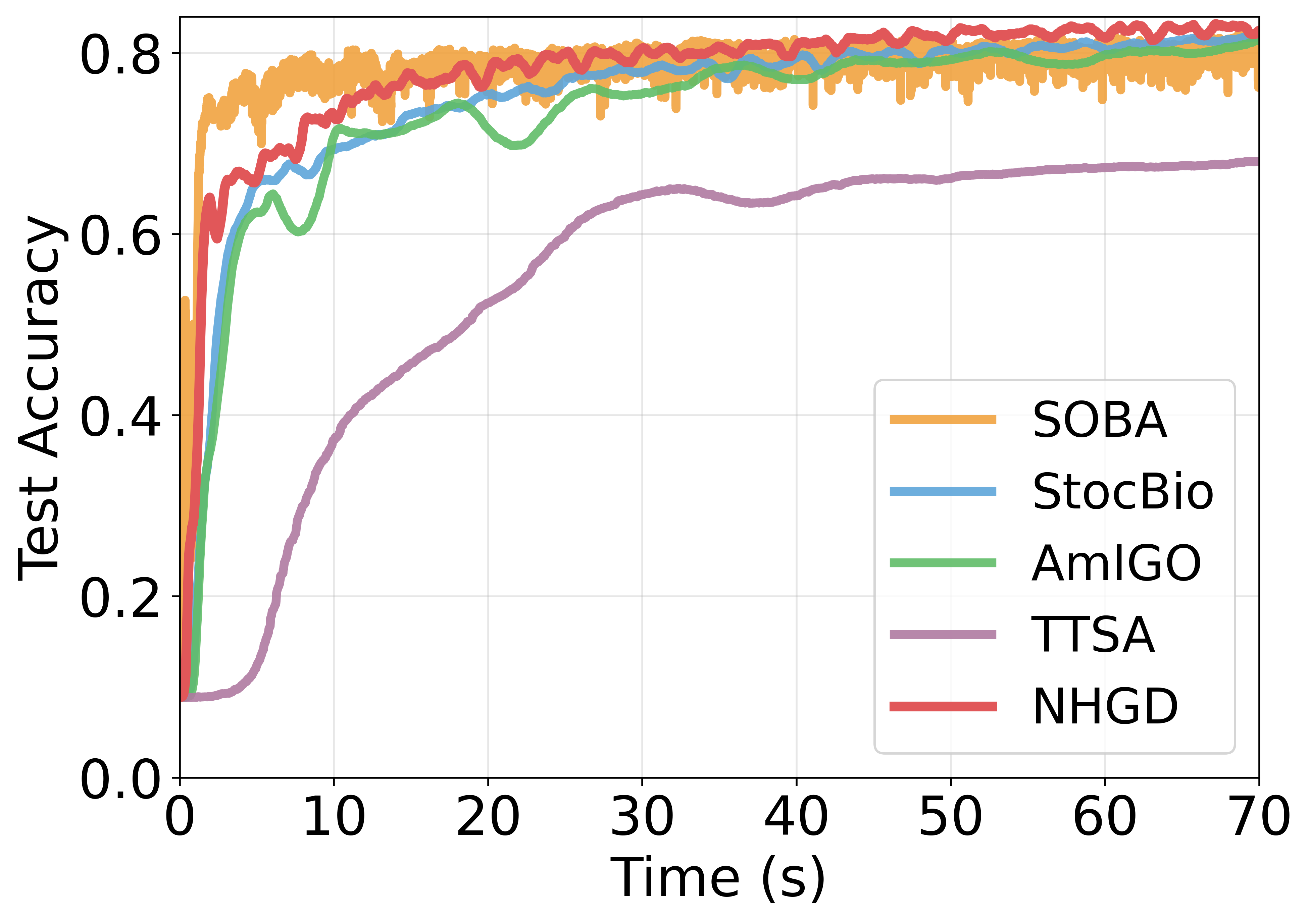}
        \caption{Test accuracy.}
        \label{fig:data_dis_testacc_baselines}
    \end{subfigure}

    \caption{Results for the Data Distillation task with $n=100$ distilled samples per class. The outer loss curves are smoothed using a moving average with window size 10 for visualization. }
    \label{fig:data_dis_combined}
    \vspace{-0.3cm}
\end{minipage}
\hfill
\begin{minipage}[h]{0.48\linewidth}
\centering
\captionof{table}{Quantitative results for Data Distillation with $n=100$ distilled samples per class. \textbf{Outer Loss} and \textbf{Test Accuracy} are averaged over last 50 epochs} 
\label{tb:data_dis_quant}
\begin{tabular}{lccc}
\toprule
\textbf{Method}  & \textbf{Outer Loss} & \textbf{Test Accuracy}\\
\midrule
stocBiO       
& $0.4778 \pm 0.0851$ 
& $0.8279 \pm 0.0015$ \\

AmIGO       
& $0.4899 \pm 0.0786$ 
& $0.8312 \pm 0.0004$ \\

TTSA        
& $0.7495 \pm 0.1357$ 
& $0.7710 \pm 0.0006$ \\

SOBA        
& $0.5591 \pm 0.1311$ 
& $0.8010 \pm 0.0007$ \\

CG10        
& $0.4601 \pm 0.0792$ 
& $0.8332 \pm 0.0012$ \\

CG40        
& $0.5092 \pm 0.0789$ 
& $0.8206 \pm 0.0022$ \\

Neumann10  
& $0.4529 \pm 0.0858$ 
& $0.8359 \pm 0.0006$ \\

Neumann40  
& $0.4538 \pm 0.0891$ 
& $0.8337 \pm 0.0020$ \\

\textbf{NHGD}        
& $0.4701 \pm 0.0941$ 
& $0.8281 \pm 0.0022$ \\
\bottomrule
\end{tabular}
\end{minipage}
\end{figure}

\subsection{PDE Constrained Optimization}\label{app:experiments_piml}
\textbf{Experimental Setup.}   
The spatio-temporal domain is $\Omega = [0,1]\times[0,1]$ over the time interval $[0,2]$.
We sample 1024 interior collocation points and 256 boundary points on each spatial boundary
$(x=0, x=1, y=0, y=1)$ across the full time interval using Sobol quasi-random sequences.
An additional 256 points are sampled at $t=0$ to enforce the initial condition $u(x,y,0)=0$.
For evaluating the outer-level objective, we use 512 uniformly spaced test points.

The state network $u_\theta$ is a fully connected neural network with architecture
$[3,\,64,\,64,\,64,\,64,\,1]$ (approximately 12.8k parameters), which takes $(x,y,t)$ as input and outputs $u(x,y,t)$.
The state network is pretrained for 2000 epochs before bilevel optimization.
The control network $f_v$ has a smaller architecture $[1,\,32,\,32,\,1]$ and maps time $t$ to the source term $f(t)$.

\textbf{Additional Results.} We compare \textsc{NHGD} with the Broyden-based Hessian-inverse approximation method proposed in~\cite{hao2023bi} for this challenging task. 
Table~\ref{tb:piml_quant} reports the outer objective (PDE-constrained optimization loss) averaged over the last 50 epochs. 
\textsc{NHGD} achieves a comparable final outer loss while requiring substantially less computation time.

\begin{table}[h]
\centering
\caption{Quantitative Performance Results for PIML.  \textbf{Outer Loss} is the averaged over the last 50 epochs.}
\label{tb:piml_quant}
\begin{tabular}{lccc}
\toprule
\textbf{Method}  & \textbf{Outer Loss} \\
\midrule
Broyden       
& $0.0237 \pm 0.0003$  \\ 

\textbf{NHGD}        
& $0.0266 \pm 0.0005$ \\
\bottomrule
\end{tabular}
\end{table}


\end{document}